%% file: main.tex
\documentclass{article}
\usepackage{pgfplots}
\usepackage{tikz}
\usepackage{multirow}
\usepackage{xr}
\usepackage{tcolorbox}

\usepackage[main,preprint]{neurips_2026}
\makeatletter\renewcommand{\@noticestring}{\@trackname}\makeatother

\usepackage[utf8]{inputenc} %
\usepackage[T1]{fontenc}    %
\usepackage{hyperref}       %
\usepackage{url}            %
\usepackage{booktabs}       %
\usepackage{amsfonts}       %
\usepackage{nicefrac}       %
\usepackage{microtype}      %
\usepackage{xcolor}         %
\usepackage{amsmath,amssymb,amsthm,mathtools,bm}
\usepackage{algorithm}
\usepackage{algpseudocode}

\usepackage{subcaption}

\usepackage{environ}
\usetikzlibrary{fit,arrows.meta,calc}
\newcommand{\dBern}{d_{\mathrm{Bern}}}   %

\newtheorem{theorem}{Theorem}
\newtheorem{lemma}{Lemma}

\newtheorem{assumption}{Assumption}
\theoremstyle{definition}
\newtheorem{definition}{Definition}
\theoremstyle{remark}
\newtheorem{remark}{Remark}

\DeclareMathOperator*{\argmax}{arg\,max}
\DeclareMathOperator*{\argmin}{arg\,min}

\DeclareMathOperator{\logit}{logit}
\DeclareMathOperator{\Gap}{Gap}
\DeclareMathOperator{\clip}{clip}
\DeclareMathOperator{\proj}{Proj}

\newcommand{\KL}{D_{\mathrm{KL}}}
\newcommand{\BKL}{D_{\mathrm{KL}}^{\mathrm{Bern}}}
\newcommand{\Nbeta}{\mathcal N_\beta}

\title{Robust Nash Alignment under Preference Uncertainty}

\author{%
  Shihab Ahmed \\
  University of Central Florida \\
  \texttt{shihab.ahmed@ucf.edu} \\
  \And
  Debamita Ghosh \\
  University of Central Florida \\
  \texttt{debamita.ghosh@ucf.edu} \\
  \And
  David Tang \\
  University of Central Florida \\
  \texttt{david.tang@ucf.edu} \\
  \And
  Yudan Wang \\
   University of Central Florida \\
  \texttt{yudan.wang@ucf.edu} \\
  \And
  Alvaro Velasquez \\
  University of Colorado Boulder \\
  \texttt{alvaro.velasquez@colorado.edu} \\
  \And
  Yue Wang \\
  University of Central Florida \\
  \texttt{yue.wang@ucf.edu} \\
}

\pgfplotsset{compat=1.18}
\begin{document}

\maketitle

\begin{abstract}
Preference-based alignment methods typically optimize against a single preference model, and can therefore be brittle when pairwise preferences are uncertain: noisy, heterogeneous, or shift after deployment. To address these issues,
we propose Robust Nash Alignment, a game-theoretic framework for alignment to uncertain pairwise preferences. Our formulation has a major learner seeking  a policy with a large worst-case win rate against both an adversarial competitor and any preference kernel lying in an ambiguity set around a nominal preference. When the ambiguity set captures the uncertainty in preferences, the resulting robust objective of the game directly yields a certified lower bound on worst-case performance. However, we note this problem is computationally challenging to optimize, and to address this, we introduce a four-player primal-dual proxy game involving the leader policy, follower policy, adversarial kernel, and dual variable, and develop a single-loop optimistic mirror descent-ascent algorithm for it. We show that the proxy always lower-bounds the truncated hard-constrained objective, quantify the proxy-to-hard gap, and characterize an exactness condition under which the proxy recovers the robust objective. We then prove an \(\mathcal{O}(1/\sqrt{T})\) average-iteration convergence for the proxy-game duality gap, which implies a near-optimal robust policy for the original robust objective. Experiments on controlled tabular games and LLM alignment with uncertain preference illustrate the method and show improved robustness to preference perturbations over nominal baselines.
\end{abstract}

\section{Introduction}
Aligning large language models (LLMs) with preferences is a central challenge in modern post-training.
The dominant paradigm, reinforcement learning from human feedback (RLHF)~\citep{christiano2017deep,stiennon2020learning,ouyang2022training}, collects pairwise comparisons, fits a scalar reward model, and optimizes policy against that reward with a proximity constraint to a reference model.
More recent direct alignment methods, including Direct Preference Optimization (DPO), SLiC-HF, Contrastive Preference Learning (CPL), and Soft Preference Optimization (SPO)~\cite{rafailov2023direct,zhao2023slic,hejna2023contrastive,sharifnassab2024soft}, bypass the explicit reward model by optimizing preference-derived surrogate losses directly.
Despite their practical success, all of these methods share a common structural assumption: they model preferences through a latent scalar reward function, typically via the Bradley-Terry (BT)~\citep{thurstone1927psychophysical,bradley1952rank} or Thurstone framework.
This assumption is, however, restrictive.
Preferences may be context-dependent, cyclic, or heterogeneous across annotators, and cannot always be represented by a single scalar reward~\citep{azar2024general,munos2023nash,Zhang2025INPO,ye2024online,chidambaram2025direct,furuta2024geometric}.

A growing line of work addresses this limitation by modeling preferences directly through pairwise comparison probabilities and framing alignment as a two-player zero-sum game.
In particular, Nash Learning from Human Feedback (NLHF) and its extensions~\citep{munos2023nash,Zhang2025INPO,ye2024online,calandriello2024human} seek a policy that is hard to beat in the induced preference game, rather than one that maximizes a learned scalar reward.
This general-preference perspective resolves the modeling mismatch: it can represent non-transitive, cyclic, and population-heterogeneous preferences that no scalar reward can capture.
However, it introduces a different and equally important challenge that has been largely overlooked:
\emph{the preference kernel itself is uncertain.}

The true pairwise preference probability $P(y_i \succ y_j \mid x)$ is never observed or fixed.
In practice, it must be estimated from finite, noisy, and potentially biased comparisons, often aggregated across heterogeneous annotator populations and changing deployment environments~\citep{christiano2017deep,stiennon2020learning,ouyang2022training,tang2024generalized}.
Existing general-preference methods, including NLHF, treat the learned preference kernel as a fixed ground truth and optimize against this single point estimate.
This can be brittle:
small errors in preference estimation, misspecification of the comparison model, or deployment-time shifts in who provides feedback may materially change which responses the policy favors~\citep{rahimian2019distributionally,sagawa2020distributionally,chidambaram2025direct,furuta2024geometric}.
We refer to this vulnerability as the \emph{fragile consensus} problem---a policy that appears optimal under the nominal preference data may concentrate on responses whose advantages rest on slim margins that are statistically indistinguishable from noise, and a small perturbation to the preference kernel can reverse the dominance ordering entirely.

To address this challenge, we propose to treat the preference kernel as fundamentally uncertain and to optimize for the worst case over a set of plausible kernels.
Inspired by the literature on distributionally robust optimization~\citep{rahimian2019distributionally,iyengar2005robust,nilim2005robust}, where ambiguity sets, divergence balls, and worst-case guarantees are standard tools for hedging estimation error and distribution shift, we formulate \emph{alignment under uncertain preferences} as a robust two-player game over a pairwise preference kernel.
At each prompt, the main player seeks to maximize a Kullback-Leibler (KL)-regularized win rate, while an adversarial player simultaneously chooses the hardest competitor \emph{and} the worst-case preference kernel from an ambiguity set centered at the nominal estimate.
The resulting objective directly provides a lower bound on the policy's alignment performance under all plausible preference kernels in the ambiguity set, effectively addressing the fragile consensus problem.

Our contributions are summarized as follows.

\textbf{Robust preference-game formulation.}
We formulate alignment under preference uncertainty as a robust game over a pairwise preference kernel with a Bernoulli-KL ambiguity set.
Unlike standard NLHF, which optimizes against a single nominal kernel, our objective maximizes the worst-case game value over all kernels within a prescribed divergence ball. We show that this formulation yields a principled worst-case performance guarantee: the learned policy's win rate is lower-bounded against every plausible preference kernel in the ambiguity set, improving the robustness and reliability of preference-based alignment.

 \textbf{Four-player algorithm with convergence guarantees.}
To avoid the computational burden of solving the hard-constrained distributionally robust problem directly, we pass to its Lagrangian as a proxy and derive a four-player primal-dual formulation involving the leader policy, follower policy, adversarial preference kernel, and dual variable.
We further provide a detailed duality and convergence analysis: we characterize when the four-player game exactly or approximately recovers the original hard-constrained robust objective, and we prove that the proposed algorithm converges to a saddle point of the proxy robust game at a rate of $\mathcal{O}(1/\sqrt{T})$.
Under the stated exactness conditions, this yields a certified worst-case game-value guarantee.

  \textbf{Experimental evidence.}
We evaluate the method on controlled tabular games, a best-of-$m$ reranking task on IMDb continuations with heterogeneous annotator groups, and a summarization task on TL;DR .
At the matched radius the robust policy improves the worst-case value over the non-robust NashMD baseline on every tabular prompt, it is preferred to NashMD under most perturbed kernels in the IMDb task and under external LLM judges in five of six summarization configurations, while maintaining comparable nominal performance. These experiments illustrate the method and verify no guarantee.

\textbf{Related work.}
Distributionally robust optimization hedges estimation error and shift with ambiguity sets \citep{sagawa2020distributionally,iyengar2005robust,nilim2005robust}, robust alignment methods robustify reward-based or DPO-style objectives \citep{chakraborty2024maxmin,mandal2025distributionally}, and robust (Markov) game theory treats payoff or transition uncertainty inside equilibrium concepts \citep{aghassi2006robust,liu2018distributionally,zhang2020robust}.
We instead robustify the pairwise kernel itself inside a general-preference game, with a coupled, non-rectangular budget and an asymmetric leader--follower objective (Appendix~\ref{app:related_work}).

\section{Preliminaries}
\label{sec:preliminaries}
\textbf{Notation.}
Let \(\mathcal X\) denote the prompt space, and let \(D_0\) be the deployment-time prompt
distribution. For each prompt \(x\in\mathcal X\), let \(Y(x)=\{y_1,\dots,y_m\}
\)
be the finite response set.
A policy is a mapping \(\pi:\mathcal X\to \Delta(Y)\), where \(\Delta(Y(x))\) denotes the simplex
over responses for prompt \(x\). Given \(x\), a response \(y\) is sampled from \(\pi(\cdot\mid x)\).
We also fix a reference policy \(\pi_{\rm ref}(\cdot\mid x)\) with full support, which can be derived from Supervised Fine-Tuning (SFT), and a KL
regularization parameter \(\tau>0\).

\textbf{Reward-based alignment.}
A standard approach to preference-based alignment assumes that pairwise human preferences are
generated from an underlying scalar reward function \(R^\star(x,y)\). Under the classical
Bradley-Terry / logistic preference model
\citep{bradley1952rank,christiano2017deep,ouyang2022training},
the probability that response \(y\) is preferred to \(y'\) under prompt \(x\) is
\begin{equation}
\mathbb P(y \succ y' \mid x)
=
\frac{\exp(R^\star(x,y))}
{\exp(R^\star(x,y))+\exp(R^\star(x,y'))}.
\label{eq:bt-model}
\end{equation}

Given a dataset with pairwise responses and preferences, one can first learn a reward model $R$ through regression, and maximize the classical KL-regularized RLHF objective
\begin{equation}
J_{\rm RB}(\pi;R)
:=
\mathbb E_{x\sim D_0}
\left[
\mathbb E_{y\sim \pi(\cdot\mid x)}[R(x,y)]
-
\tau \KL\bigl(\pi(\cdot\mid x)\,\|\,\pi_{\rm ref}(\cdot\mid x)\bigr)
\right].
\label{eq:rb-objective}
\end{equation}
The reward-based alignment problem is then $\pi_R^\star
\in
\arg\max_{\pi\in\Pi}
J_{\rm RB}(\pi;R),$
where \(\Pi\) is the policy class of interest. The KL term discourages the learned policy from drifting arbitrarily far from the reference policy
\citep{stiennon2020learning,ouyang2022training,rafailov2023direct}, and it results in a closed-form solution to  \eqref{eq:rb-objective}  as
\begin{equation}
\pi_R^\star(y\mid x)
\propto
\pi_{\rm ref}(y\mid x)\exp\left(\frac{1}{\tau}R(x,y)\right),
\label{eq:rb-closed-form}
\end{equation}
which underlies the derivation of the direct preference optimization method
\citep{rafailov2023direct}.

\paragraph{General preference alignment.}
A limitation of \eqref{eq:bt-model} is that it assumes all pairwise preferences are induced by
a single scalar reward function. More generally, one can directly treat pairwise preference
probabilities as primitive objects, without assuming the latent
reward \citep{munos2023nash,ye2024online,Zhang2025INPO}.

\begin{definition}
\label{def:general-preference-oracle}
A general preference oracle is a mapping
\(
P:\mathcal X \times Y \times Y \to [0,1]
\)
such that, for every prompt \(x\in\mathcal X\) and responses \(y,y'\in Y(x)\), there exists a preference $z \sim \mathrm{Ber}\bigl(P(y\succ y' \mid x)\bigr),$ where \(z=1\) indicates that \(y\) is preferred to \(y'\), and \(z=0\) indicates the opposite.
\end{definition}
\begin{assumption}[Antisymmetric preference model]
\label{ass:antisymmetry}
For any fixed prompt \(x\), define \(P_x(y \succ y') := P(y \succ y' \mid x)\).
We assume that the preference probabilities are antisymmetric, i.e.,
\begin{equation}
P_x(y_i \succ y_j) + P_x(y_j \succ y_i) = 1,
\qquad \forall\, y_i \neq y_j.
\end{equation}
\end{assumption}

Given two policies \(\pi,\nu\in\Pi\), Nash learning from human feedback (NLHF) considers the KL-regularized game payoff under the general
preference kernel \(P\):
\begin{align}
J(\pi,\nu;P)
&:=
\mathbb E_{x\sim D_0}
\Big[
\mathbb E_{y\sim\pi(\cdot\mid x),\; y'\sim \nu(\cdot\mid x)}
\big[P_x(y \succ y')\big]
-
\tau \KL\bigl(\pi(\cdot\mid x)\,\|\,\pi_{\rm ref}(\cdot\mid x)\bigr)\nonumber\\
&\quad+
\tau \KL\bigl(\nu(\cdot\mid x)\,\|\,\pi_{\rm ref}(\cdot\mid x)\bigr)
\Big].
\label{eq:general-pref-game}
\end{align}
Here, \(\pi\) is the max-player and \(\nu\) is the min-player. The interpretation is that
both players seek a policy that is consistently preferred to any competitor, while both
players are regularized toward the same reference policy. The nominal general-preference alignment problem is therefore formulated as a regularized minimax game
\begin{equation}
(\pi^\star,\nu^\star)
\in
\arg\max_{\pi\in\Pi}\arg\min_{\nu\in\Pi} J(\pi,\nu;P).
\label{eq:general-pref-minimax}
\end{equation}
Due to the symmetric and convex-concave game structure, \eqref{eq:general-pref-minimax} has a unique symmetric Nash equilibrium  \((\pi^\star,\pi^\star)\) \citep{munos2023nash}. NLHF then aims to find this policy $\pi^\star$ as the alignment policy.

\section{Robust Game Formulation for Uncertain Preference Alignment}\label{sec:robust-game}
In this section, we introduce our formulation of a robust game to address uncertainty in preferences. For notational simplicity, we fix a prompt and suppress the prompt index. The global prompt-averaged objective is obtained by taking expectations under a prompt-rectangular ambiguity model.

Let \(Y=\{1,\dots,m\}\) be a finite response set, and \(d=\binom{m}{2}\), and set $E:=\{(i,j):1\le i<j\le m\}.$ For \(p=(p_{ij})_{(i,j)\in E}\in[0,1]^d\), we define the preference kernel that satisfies Assumption \ref{ass:antisymmetry} as
\[
P_p(i\succ j)
=
\begin{cases}
p_{ij}, & i<j,\\
1-p_{ji}, & i>j,\\
0.5, & i=j.
\end{cases}
\]
Here $i<j$ is the ordinary index order: each unordered pair is listed once, and the reverse comparison follows by antisymmetry.

We model the potential uncertainty in the preference kernel through the Bernoulli-KL divergence. Namely, centered at a nominal kernel \(p^\star\in(0,1)^d\), the Bernoulli-KL divergence is defined as
\begin{equation}
\BKL(p\|p^\star)
:=
\sum_{(i,j)\in E}
\left[
p_{ij}\log\frac{p_{ij}}{p^\star_{ij}}
+
(1-p_{ij})\log\frac{1-p_{ij}}{1-p^\star_{ij}}
\right],
\label{eq:bern-kl}
\end{equation}
measuring the difference between the two kernels. We then set the ambiguity set as
\begin{align}\label{eq:amb set}
    \mathcal{P}:=\{ p: \BKL(p\|p^\star)\leq \rho \},
\end{align}
where $\rho$ is some uncertainty radius, quantifying the uncertainty level. Namely, the ambiguity set contains the kernels whose differences from the nominal one are bounded by $\rho$.

\begin{remark}
The ambiguity set $\mathcal{P}$ captures preference kernels that are close to a nominal estimate $p^\star$, and thus reflects uncertainty arising from finite, noisy, or heterogeneous preference data. The radius $\rho$ is generally chosen by domain experts or users, reflecting the users' consideration level of uncertainty.
For instance, in offline preference-based learning, one may estimate $p^\star$ from a dataset of pairwise comparisons and choose the radius $\rho$ via concentration inequalities \citep{zhu2023principled} so that $\mathcal{P}$ contains all statistically plausible kernels; When preferences are aggregated across different users or annotator populations, $\mathcal{P}$ can model the resulting variability in pairwise comparisons.

Unlike standard robust optimization settings where uncertainty sets are often rectangular and decouple across components \citep{wiesemann2013robust,iyengar2005robust}, our set $\mathcal{P}$ induces coupling across all pairwise comparisons. As a result, the set is inherently non-rectangular, which significantly complicates the problem and is a key source of computational hardness in our formulation.
\end{remark}

\paragraph{Objective.} With the ambiguity set defined in \eqref{eq:amb set}, we propose our robust formulation as
\begin{align}
\label{eq:obj}
   \max_{\pi\in\Delta_m} \min_{\nu\in\Delta_m}
\min_{p\in\mathcal{P}}
J(\pi,\nu;p),
\end{align}
where $J(\pi,\nu;p)$ is the game value under the specific kernel $p$:
\begin{equation}
J(\pi,\nu;p)
:=
\sum_{i=1}^m\sum_{j=1}^m \pi_i \nu_j\, P_p(i\succ j)
-
\tau \KL(\pi\|\pi_{\mathrm{ref}})
+
\tau \KL(\nu\|\pi_{\mathrm{ref}}).
\label{eq:J-def}
\end{equation}
Namely, we enable the adversarial player to further choose a preference kernel as part of its strategy, and the main player needs to maximize its winning rate under the worst-case, i.e., the least favorable preference kernel. This consideration of the worst-case then provides a lower bound guarantee. Namely, if $\pi^\star$ solves \eqref{eq:obj}, then for any $p\in\mathcal{P}$ and any adversarial policy $\nu$, one has (ignoring the regularizer) an optimized lower bound on the winning rate under $p$ and against $\nu$:
\begin{align}
    P_p(\pi^\star \succ \nu)\geq \min_{p\in\mathcal{P},\nu}  P_p(\pi^\star \succ \nu) = \max_\pi \min_{p\in\mathcal{P},\nu} P_p(\pi \succ \nu)=\text{ optimal value of \eqref{eq:obj}}.\label{eq:certificate}
\end{align}
Therefore, the maximizer $\pi^\star$ of our robust value \eqref{eq:obj} provides a baseline performance of uncertain alignment, thus improving the robustness and reliability against preference uncertainty.

\section{Four-Player Proxy Game}
In this section, we aim to design a concrete algorithm to solve the robust game \eqref{eq:obj}.

For standard, non-robust NLHF, where the kernel is fixed and the only strategic interaction is between the two players, the optimization landscape is that of a regularized two-player game, which can be handled by no-regret and optimistic first-order methods. However, our robust formulation is qualitatively more difficult. The major challenge arises from the fact that, even for a fixed leader policy \(\pi\), the inner value $g_H(\pi;\rho)
:=
\min_{\nu,p\in\mathcal P} J(\pi,\nu;p)$ ($\rho$ enters only through the feasible set $\mathcal P$) is difficult to compute exactly because \(\nu\) and \(p\) are coupled through the win-rate term.
If one first fixes $\pi, p$ and eliminates the adversarial policy $\nu^\star$ using the Gibbs response, the reduced objective becomes
\begin{equation}
\min_{p\in \mathcal{P}}
\Big\{
-\tau \KL(\pi\|\pi_{\rm ref})
-
\tau \log \sum_{j=1}^m
\pi_{\rm ref}(j)
\exp\big(
-\frac{b(j;\pi,p)}{\tau}
\big)
\Big\},
\label{eq:reduced-concave-min}
\end{equation}
where \(b(j;\pi,p)=\sum_{i=1}^m \pi(i\mid x)P_p(i\succ j)\) is affine in \(p\). Since the log-sum-exp is convex and appears with a
negative sign, the reduced objective is concave in \(p\). However, minimizing a concave function over a
convex uncertainty set is NP-hard in general
\citep{horst2013global}. On the other hand, if one fixes $\pi,\nu$ first and solves for the worst-case kernel $p^\star$, since the uncertainty set $\mathcal{P}$ is defined with correlation among response pairs, $\mathcal{P}$ is not rectangular over all $(i,j)$ pairs; thus, solving the distributionally robust optimization problem can still be inefficient or NP-hard  \citep{wiesemann2013robust}.
Moreover, standard NLHF involves only two strategic blocks \((\pi,\nu)\), yet our robust game has another player controlling $p$. Although one can combine the two minimizing players as a single adversary who takes the joint action $(\nu,p)$, it breaks the standard two-player saddle problem structure in NLHF, losing all of its properties like the minimax theorem or unique Nash equilibrium \citep{munos2023nash}. The resulting game need not be monotone in the Euclidean sense, which makes naive
simultaneous gradient methods unstable.

Due to these challenges, we develop a proxy game formulation that is easier to solve and develop detailed studies on the connection between the proxy and the original robust game.

For regularity, we first assume the main player policy is constrained to the floor-truncated simplex $\Pi_\epsilon
:=
\left\{
\pi\in\Delta_m:\ \pi_i\ge \epsilon\,\pi_{\mathrm{ref},i}\ \ \forall i
\right\}$ for some positive value $\epsilon$. Specifically, one can choose $\epsilon$ arbitrarily small to approximately recover the probability simplex $\Delta_m$. For each \(\pi\in\Pi_\epsilon\), denote the original inner problem value with the hard constraint $p\in\mathcal{P}$ as
\begin{equation}
g_H(\pi;\rho)
:=
\min_{\nu\in\Delta_m}
\min_{p\in\mathcal{P}}
J(\pi,\nu;p),
\label{eq:hard-original}
\end{equation}
and the corresponding optimal robust value as
\begin{equation}
V(\rho):=\max_{\pi\in\Pi_\epsilon} g_H(\pi;\rho).
\label{eq:V-original}
\end{equation}

We further define an interior box
$P_\delta:=[\delta,1-\delta]^d$.
with some \(\delta\in(0,1/2)\) due to regularity, and  consider the truncated hard objective:
\begin{equation}
g_H^\delta(\pi;\rho)
:=
\min_{\nu\in\Delta_m}
\min_{\substack{p\in P_\delta \cap \mathcal{P}}}
J(\pi,\nu;p),
\label{eq:hard-truncated}
\end{equation}
and denote its optimal value as
\begin{equation}
V^\delta(\rho):=\max_{\pi\in\Pi_\epsilon} g_H^\delta(\pi;\rho).
\label{eq:V-truncated}
\end{equation}

We first show that the truncation error is explicit and uniformly controlled.
\begin{theorem}[Uniform approximation by the truncated hard objective]
\label{thm:truncate}
Assume \(p^\star\in P_\delta\). Then, for every \(\pi\in\Pi_\epsilon\), $0
\le
g_H^\delta(\pi;\rho)-g_H(\pi;\rho)
\le
\delta$ and  $0
\le
V^\delta(\rho)-V(\rho)
\le
\delta.$ Moreover, if \(\pi_\delta^\star\in \argmax_{\pi\in\Pi_\epsilon} g_H^\delta(\pi;\rho)\), then
\(
g_H(\pi_\delta^\star;\rho)\ge V(\rho)-\delta.
\)
\end{theorem}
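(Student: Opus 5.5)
The plan is to prove the pointwise bound first and then pass to the maxima. The lower bound \(g_H^\delta(\pi;\rho)\ge g_H(\pi;\rho)\) is immediate, because \(P_\delta\cap\mathcal P\subseteq\mathcal P\) and so the truncated problem minimizes over a smaller feasible set. The upper bound needs a projection argument. Fix \(\pi\in\Pi_\epsilon\), \(\nu\in\Delta_m\), and any \(p\in\mathcal P\). We construct \(\tilde p\in P_\delta\cap\mathcal P\) with
\[
J(\pi,\nu;\tilde p)\le J(\pi,\nu;p)+\delta .
\]
Taking the infimum over \(p\) and \(\nu\) then gives \(g_H^\delta\le g_H+\delta\). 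The KL terms in \(J\) do not depend on the kernel, so only the bilinear win-rate term matters.

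For \(\tilde p\) we use the coordinatewise clip \(\tilde p_{ij}=\clip(p_{ij},\delta,1-\delta)\). Two facts are needed.

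\emph{Feasibility.} Since \(p^\star_{ij}\in[\delta,1-\delta]\), clipping \(p_{ij}\) moves it toward \(p^\star_{ij}\), or leaves it unchanged. The one-dimensional Bernoulli KL \(q\mapsto q\log(q/p^\star_{ij})+(1-q)\log((1-q)/(1-p^\star_{ij}))\) is convex with minimum at \(p^\star_{ij}\), so it is monotone on each side of \(p^\star_{ij}\). Each summand therefore does not increase, and \(\BKL(\tilde p\|p^\star)\le\BKL(p\|p^\star)\le\rho\).

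\emph{Payoff perturbation.} For \(i<j\) we have \(|P_{\tilde p}(i\succ j)-P_p(i\succ j)|=|\tilde p_{ij}-p_{ij}|\le\delta\). By antisymmetry the same bound holds for \(i>j\), and the difference is \(0\) for \(i=j\). Hence
\[
\Bigl|\sum_{i,j}\pi_i\nu_j\bigl(P_{\tilde p}-P_p\bigr)(i\succ j)\Bigr|\le\delta\sum_{i,j}\pi_i\nu_j=\delta .
\]

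The statements about optimal values then follow by taking the maximum over \(\Pi_\epsilon\) on both sides of \(g_H\le g_H^\delta\le g_H+\delta\), which gives \(0\le V^\delta-V\le\delta\). For the last claim, let \(\pi_\delta^\star\) maximize \(g_H^\delta\). Then
\[
g_H(\pi_\delta^\star)\ge g_H^\delta(\pi_\delta^\star)-\delta=V^\delta-\delta\ge V-\delta .
\]

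One routine point is that the minima should exist, or else the argument should be phrased with infima. The inf-based argument above is already valid without existence. Moreover, \(\Delta_m\) is compact, \(J\) is continuous, and both \(\mathcal P\) and \(P_\delta\cap\mathcal P\) are compact and nonempty (they contain \(p^\star\)), so the minima are attained in any case.

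The only step needing care is the feasibility of the clipped kernel. It relies on \(p^\star\in P_\delta\) together with the unimodality of the Bernoulli KL in each coordinate, and the rest is bookkeeping.
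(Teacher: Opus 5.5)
Your proposal is correct and follows the paper's proof essentially step for step. Both arguments get the lower bound from inclusion of feasible sets, clip coordinatewise into $P_\delta$, justify feasibility by the convexity and unimodality of the Bernoulli KL about $p^\star_{ij}\in[\delta,1-\delta]$, bound the payoff change by $\delta$, and then pass to the maxima. The only difference is bookkeeping: you bound the kernel change entrywise over ordered pairs against the weights $\pi_i\nu_j$, whereas the paper writes the $p$-dependence of $J$ through the antisymmetric weights $w_{ij}=\pi_i\nu_j-\pi_j\nu_i$ and uses $\sum_{(i,j)\in E}|w_{ij}|\le 1-\langle\pi,\nu\rangle\le 1$, which yields the same $\delta$ bound.
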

This theorem shows that the error introduced by the regularity relaxation is bounded by $\delta$. Thus, one can choose $\delta$ small enough to ensure regularity and approximate the original solution arbitrarily. Therefore, we will focus on solving this problem, $\arg\max_{\pi\in\Pi_\epsilon} g_H^\delta(\pi;\rho)$. Since $g^\delta_H$ is a hard-constrained problem, which can be inefficient to solve directly, we introduce a Lagrangian dual variable and consider the Lagrangian function. Specifically, with an additional dual variable $\gamma$, we define and consider the four-player payoff as
\begin{equation}
F(\pi,\gamma,\nu,p)
:=
J(\pi,\nu;p)
+
\gamma\bigl(\BKL(p\|p^\star)-\rho\bigr).
\label{eq:F-def}
\end{equation}
Moreover, fix a compact positive interval $\Gamma:=[\gamma_{\min},\gamma_{\max}]
\subset (0,\infty)$ and $\Nbeta
:=
\left\{
\nu\in\Delta_m:\ \nu_j\ge \beta\,\pi_{\mathrm{ref},j}\ \ \forall j
\right\}$ for $\beta \in (0,1)$, we define the proxy objective as
\begin{equation}
g_{\mathrm{proxy}}^\delta(\pi;\rho)
:=
\max_{\gamma\in\Gamma}
\min_{\nu\in\Nbeta,\;p\in P_\delta}
F(\pi,\gamma,\nu,p),\footnote{We proved that restricting the opponent policy from $\Delta_m$ to $\mathcal{N}_\beta$ does not affect the optimal solution.}
\label{eq:proxy-def}
\end{equation}
and its optimal value as
\begin{equation}
V_{\mathrm{proxy}}^\delta(\rho)
:=
\max_{\pi\in\Pi_\epsilon} g_{\mathrm{proxy}}^\delta(\pi;\rho).
\label{eq:proxy-value}
\end{equation}
We then study the proxy's connection to the original problem. Firstly, we introduce a regularity assumption.
\begin{assumption}
\label{ass:geometry}
We assume the following regularization conditions hold: (1)  \(p^\star\in(\delta,1-\delta)^d\);
    (2) \(0<\beta\le e^{-1/\tau}\);
    (3) \(\Gamma=[\gamma_{\min},\gamma_{\max}]\subset(0,\infty)\) with
    \(\gamma_{\max}\ge 1/\rho\);
    (4).  \(\tau\gamma_{\min}\ge \frac14\).
\end{assumption}
Assumption \ref{ass:geometry}-(1)–(3) are mild regularity conditions that can be enforced by suitable algorithmic
choices. Assumption \ref{ass:geometry}-(4) is a curvature condition needed for the proxy-game convergence analysis.

We then derive the following results for the connections.
\begin{theorem}
\label{thm:duality}
  Under (1)-(3) of Assumption \ref{ass:geometry}, for every \(\pi\in\Pi_\epsilon\), the proxy value always lower bounds the hard-constrained value:
\begin{equation}\label{eq:lower bound 1}
g_H^\delta(\pi;\rho)
=
\min_{\nu\in\Nbeta}
\max_{\gamma\in[0,\gamma_{\max}]}
\min_{p\in P_\delta}
F(\pi,\gamma,\nu,p) \geq g_{\mathrm{proxy}}^\delta(\pi;\rho).
\end{equation}
If we additionally assume (4) of Assumption \ref{ass:geometry}, then for every
\(\pi\in\Pi_\epsilon\),
\begin{equation}\label{eq:lower bound 2}
0\leq g_H^\delta(\pi;\rho)-g_{\mathrm{proxy}}^\delta(\pi;\rho)\leq \rho \gamma_{\min},  \text{ and } 0\leq V^\delta(\rho)-V_{\mathrm{proxy}}^\delta(\rho) \leq \rho \gamma_{\min}.
\end{equation}
Moreover, if Assumption \ref{ass:exactness} (in Appendix) additionally holds, then the two objectives are equivalent:
\begin{equation}\label{eq:equ}
g_H^\delta(\pi;\rho)=g_{\mathrm{proxy}}^\delta(\pi;\rho),  \text{ and }  V^\delta(\rho)=V_{\mathrm{proxy}}^\delta(\rho).
\end{equation}
\end{theorem}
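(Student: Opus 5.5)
The plan has three parts: the identity, the lower bound, and the gap bound.

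\textbf{The identity in \eqref{eq:lower bound 1}.} I would first show that $\nu\in\Delta_m$ can be replaced by $\nu\in\Nbeta$ in $g_H^\delta$. For fixed $(\pi,p)$, the minimizing $\nu$ is the Gibbs response
\[
\nu^\star_j\propto\pi_{\mathrm{ref},j}\exp\bigl(-b(j;\pi,p)/\tau\bigr).
\]
Since $b\in[0,1]$, the normalizing constant $\sum_k\pi_{\mathrm{ref},k}e^{-b(k)/\tau}$ is at most $1$. Hence $\nu^\star_j\ge\pi_{\mathrm{ref},j}e^{-1/\tau}\ge\beta\pi_{\mathrm{ref},j}$ by (2), so the restriction is harmless for every $p$.

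Next, for fixed $(\pi,\nu)$, the inner problem $\min_{p\in P_\delta\cap\mathcal P}J$ is convex. Its objective is affine in $p$ and its constraint $\BKL(p\|p^\star)\le\rho$ is convex. Slater's condition holds at $p=p^\star$, which lies in the interior of $P_\delta$ by (1) and has $\BKL(p^\star\|p^\star)=0<\rho$. Lagrangian duality over the compact box then gives
\[
\min_{p\in P_\delta\cap\mathcal P}J=\max_{\gamma\ge0}\min_{p\in P_\delta}F.
\]
To see that the optimal multiplier satisfies $\gamma^\star\le\gamma_{\max}$, I would use the standard Slater bound:
\[
\gamma^\star\le\frac{J(\pi,\nu;p^\star)-\min_{p\in P_\delta\cap\mathcal P} J}{\rho-0}.
\]
The win-rate term varies by at most $1$ as $p$ changes, because it is a bilinear form in probability vectors with entries in $[0,1]$. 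So $\gamma^\star\le1/\rho\le\gamma_{\max}$ by (3), which yields the identity.

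\textbf{The inequality.} I would shrink $[0,\gamma_{\max}]$ to $\Gamma$, which can only decrease the max. I would then apply the weak minimax inequality
\[
\min_\nu\max_\gamma\ \ge\ \max_\gamma\min_\nu,
\]
where the inner $\min_p$ is merged with $\min_\nu$ since the $p$-minimization is separable. This gives $g_H^\delta\ge g^\delta_{\mathrm{proxy}}$, and taking $\max_\pi$ transfers the bound to $V^\delta$ versus $V^\delta_{\mathrm{proxy}}$.

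\textbf{The gap $\rho\gamma_{\min}$.} This is the main obstacle. There are two gaps to control. The first is the minimax-swap gap, which I would close using (4). Let $\phi(\gamma,\nu)=\min_{p\in P_\delta}F$. It is concave in $\gamma$, as a minimum of affine functions. In $\nu$, $J$ is linear plus $\tau\KL(\nu\|\pi_{\mathrm{ref}})$, which is $\tau$-strongly convex, and $\min_p$ of a bilinear-in-$(\nu,p)$ term plus $\gamma\BKL$ is not obviously convex. I would show that $\tau\gamma\ge1/4$ makes the joint Hessian in $(\nu,p)$ positive semidefinite. Locally, the cross term has unit-scale coefficients $\pi_i$; $\BKL$ has curvature at least $4\gamma$ per coordinate, since $1/(p(1-p))\ge4$; and $\KL$ has curvature at least $\tau/\nu_j$. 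A Schur-complement or Cauchy–Schwarz argument should then give joint convexity. Once $F$ is jointly convex in $(\nu,p)$, $\phi$ is convex in $\nu$. Sion's theorem on the compact convex sets $\Nbeta\times[0,\gamma_{\max}]$ then makes the swap exact.

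The second gap is the truncation $[0,\gamma_{\min})$. Restricting to $\Gamma$ costs at most what the maximizer loses by moving from $\gamma^\star<\gamma_{\min}$ to $\gamma_{\min}$. Since $\phi(\cdot,\nu)$ is concave with slope bounded below by $-\rho$ (the constraint value at the minimizer is at least $-\rho$), this loss is at most $\rho\gamma_{\min}$. The bound on $V$ follows by maximizing over $\pi$.

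\textbf{Exactness.} Under Assumption~\ref{ass:exactness}, which presumably guarantees that the optimal multiplier lies in $\Gamma$, the truncation loss vanishes and equality \eqref{eq:equ} follows.
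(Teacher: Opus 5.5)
Your identity and lower-bound steps match the paper's. Your two ingredients for the gap bound are also the right ones: the one-sided slope bound $q(a)\le q(b)+\rho(b-a)$ for $a<b$, and joint convexity of $(\nu,p)\mapsto F$ when $\tau\gamma\ge 1/4$. Your Hessian route, via $\sum_j u_j^2/\nu_j\ge\|u\|_1^2$ and $\nabla_p^2\BKL(p\|p^\star)\succeq 4I$, gives the same threshold as the paper's Pinsker-based first-order argument.

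The problem is the order in which you combine them. You apply Sion's theorem on $\mathcal N_\beta\times[0,\gamma_{\max}]$, which needs $\phi(\gamma,\cdot)$ to be (quasi-)convex in $\nu$ for \emph{every} $\gamma$ in that interval. Your curvature argument only covers $\gamma\ge 1/(4\tau)$, and for small $\gamma$ convexity fails in general. At $\gamma=0$, $\phi(0,\nu)$ contains $\sum_{(i,j)\in E}\min\{\delta\, w_{ij}(\pi,\nu),(1-\delta)\,w_{ij}(\pi,\nu)\}$. This is a concave piecewise-linear function of $\nu$, and the smooth term $\tau\KL(\nu\|\pi_{\mathrm{ref}})$ cannot repair its kinks at $w_{ij}=0$. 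An exact swap on $[0,\gamma_{\max}]$ would mean zero Lagrangian duality gap for the jointly nonconvex problem $\min_{\nu\in\mathcal N_\beta,\,p\in P_\delta\cap\mathcal P}J(\pi,\nu;p)$, which is bilinear in $(\nu,p)$ with no curvature in $p$. Nothing in your argument supports that, and avoiding it is exactly what the slack $\rho\gamma_{\min}$ pays for.

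The repair is to reverse the two steps, as the paper does. First, for each fixed $\nu\in\mathcal N_\beta$, your slope bound gives $\max_{\gamma\in[0,\gamma_{\max}]}\phi(\gamma,\nu)\le\max_{\gamma\in\Gamma}\phi(\gamma,\nu)+\rho\gamma_{\min}$. Hence $g_H^\delta(\pi;\rho)\le\min_{\nu\in\mathcal N_\beta}\max_{\gamma\in\Gamma}\phi(\gamma,\nu)+\rho\gamma_{\min}$. Only then apply Sion on $\mathcal N_\beta\times\Gamma$. There every $\gamma\ge\gamma_{\min}$ satisfies $\tau\gamma\ge 1/4$ by (4), so joint convexity holds and the right-hand side becomes $g^\delta_{\mathrm{proxy}}(\pi;\rho)+\rho\gamma_{\min}$.

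The same ordering is what makes your exactness paragraph work. Assumption~\ref{ass:exactness} is a statement for each fixed $(\pi,\nu)$, so it removes the truncation loss $\nu$ by $\nu$. You still need Sion on $\Gamma$ (hence (4)) afterwards to reach $\max_{\gamma\in\Gamma}\min_{\nu}$. If you apply it after a swap, to $\gamma\mapsto\min_\nu\phi(\gamma,\nu)$, it does not directly control that function's maximizer.
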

This theorem characterizes the comprehensive connections between our proxy value $g^\delta_H$ and the hard-constrained ones. Firstly, we showed that under the mildest conditions (1)-(3) of Assumption \ref{ass:geometry}, the proxy value is always a lower bound of $g^\delta_H$. Thus, any solution to the proxy also provides an optimized lower bound to the hard-constrained problem. Moreover, if Assumption \ref{ass:geometry}-(4) holds, the error between $g^\delta_H$ and $g_{\mathrm{proxy}}^\delta$ is then bounded by $\rho\gamma_{\min}$. Thus, one can set the values of $\gamma_{\min}$ and the temperature $\tau$ to satisfy Assumption \ref{ass:geometry}-(4) and to guarantee that the proxy is an accurate approximation of the original problem: recall the error between $g_{\mathrm{proxy}}^\delta$ and the original problem $g_H$ is up to $\delta+\rho\gamma_{\min}$, which can be properly small. Moreover, Assumption~\ref{ass:exactness} is a stronger exactness condition used only to upgrade the proxy guarantee to
the truncated hard-constrained objective. If it holds, which assumes a uniformly lower bound of the maximizer $\gamma^\star$ under any $(\pi,\nu)$, the error $\rho\gamma_{\min}$ diminishes and the proxy is exactly equivalent to the hard-constrained $g^\delta_H$.

The advantage of studying the proxy is that the proxy has a much better structure and is easier to design convergent algorithms. Specifically, the hard-constrained problem $g^\delta_H$ is a $\min-\max-\min$ problem per \eqref{eq:lower bound 1}, and maximizing over $\pi$ further makes it a four-layer $\max-\min-\max-\min$ problem, which is significantly challenging and unstable to solve. The proxy exchanges the order of $\max_\gamma$ and $\min_\nu$, making $\max_\pi g_{\mathrm{proxy}}^\delta$ a bi-level $\max_{\pi,\gamma}-\min_{p,\nu}$ problem. Such a $\max-\min$ is much easier to solve, and we can view it as a two-player zero-sum game and seek for its saddle point. In the next section, we design a concrete algorithm based on this understanding, and develop its convergence analysis.

\section{Algorithm Design and Convergence Analysis}
\label{sec:algorithm}
In this section, we develop our concrete algorithm for solving the proxy game.  Specifically, we treat each parameter as a player, and develop a four-player gradient-based algorithm. As we discussed previously, the  structure of the proxy enables us to view it as a $\max-\min$ problem. Also, since the robust games are non-monotone and naive simultaneous gradient approaches can be unstable, we thus propose to adapt optimistic or extra-gradient approaches \citep{chiang2012online,
rakhlin2013online,
syrgkanis2015fast,
mertikopoulos2018optimistic,
mokhtari2020unified,
wei2020linear}. Specifically, we aim to solve the proxy $\max-\min$ problem
\[
\max_{x\in X}\min_{y\in Y}F(x,y),
\quad x=(\pi,\gamma)\in X:=\Pi_\epsilon\times\Gamma,\ y=(\nu,p)\in Y:=\mathcal{N}_\beta\times P_\delta,
\]
by a single-loop optimistic mirror descent/ascent method.

We use entropy geometry for the simplex blocks, Euclidean geometry for \(\gamma\), and
binary entropy geometry for \(p\). The associated Bregman divergences are: $D_\pi(\pi,\pi')=\KL(\pi\|\pi'),$ $D_\gamma(\gamma,\gamma')=\frac12(\gamma-\gamma')^2$, $D_\nu(\nu,\nu')=\KL(\nu\|\nu')$, and $D_p(p,p')=\BKL(p\|p').$
At iteration \(t\), let
\[
z_t=(\pi_t,\gamma_t,\nu_t,p_t)\in X\times Y,
\qquad
g_t^\bullet:=\nabla_\bullet F(z_t),
\]
and define $\widehat g_t^\bullet := 2g_t^\bullet-g_{t-1}^\bullet,$ with $\bullet\in\{\pi,\gamma,\nu,p\}.$
Then, we update each player through mirror descent (exact update rules are deferred to Appendix \eqref{eq:update-pi}-\eqref{eq:update-p}) and present our algorithm in Algorithm \ref{alg:four-player-ogda}.
Since the mirror maps are separable across blocks, these coordinate-wise updates are exactly
optimistic mirror descent/ascent on the grouped saddle problem
\(\max_{x\in X}\min_{y\in Y}F(x,y)\).

\begin{algorithm}[!htb]
\caption{Four-player optimistic gradient descent/ascent (OGDA)}
\label{alg:four-player-ogda}
\begin{algorithmic}[1]
\Require step size \(\eta>0\), iteration budget \(T\), initial point
\(z_1=(\pi_1,\gamma_1,\nu_1,p_1)\in X\times Y\)
\State Compute \(g_1^\bullet=\nabla_\bullet F(z_1)\) for all blocks, and set \(g_0^\bullet\leftarrow g_1^\bullet\)
\For{\(t=1,\dots,T\)}
    \State Form optimistic gradients \(\widehat g_t^\bullet=2g_t^\bullet-g_{t-1}^\bullet\)
    \State Update \((\pi_{t+1},\gamma_{t+1},\nu_{t+1},p_{t+1})\) by
    \eqref{eq:update-pi}-\eqref{eq:update-p}
    \If{\(t<T\)}
        \State Compute \(g_{t+1}^\bullet=\nabla_\bullet F(z_{t+1})\) for all blocks
    \EndIf
\EndFor
\State Return the averaged iterate $\bar{z}_T = (\bar{\pi}_T,\bar{\gamma}_T,\bar{\nu}_T,\bar{p}_T) := \frac{1}{T} \sum_{t=1}^T (\pi_t, \gamma_t, \nu_t, p_t).$
\end{algorithmic}
\end{algorithm}

We then develop the convergence analysis of our algorithm in the following theorem.
\begin{theorem}[Convergence and robust guarantee of four-player OGDA]
\label{thm:main-gap}
Assume Assumption~\ref{ass:geometry}, and run Algorithm~\ref{alg:four-player-ogda} with the constant step size $\eta=\frac{1}{8L_\delta},$  where
 \(L_\delta=\mathcal{O}(1/\delta)\) is the constant from Lemma \ref{lem:grad-var}. Define $D_X^{\mathrm{diam}}\triangleq
\sup_{(\pi,\gamma)\in X}
\left\{
D_{\mathrm{KL}}(\pi\|\pi_1)+\frac12(\gamma-\gamma_1)^2
\right\},$ and $D_Y^{\mathrm{diam}}
\triangleq
\sup_{(\nu,p)\in Y}
\left\{
D_{\mathrm{KL}}(\nu\|\nu_1)+D_{\mathrm{KL}}^{\mathrm{Bern}}(p\|p_1)
\right\}.$
Then the duality gap of the averaged iterate
$\bar z_T=(\bar x_T,\bar y_T)=\bigl((\bar\pi_T,\bar\gamma_T),(\bar\nu_T,\bar p_T)\bigr)$
satisfies
\begin{equation*}
\Gap(\bar z_T)
:=
\max_{x'\in X}F(x',\bar y_T)-\min_{y'\in Y}F(\bar x_T,y')
\le
\frac{8L_\delta\bigl(D_X^{\mathrm{diam}}+D_Y^{\mathrm{diam}}\bigr)}{T},
\end{equation*}
and the averaged main policy $\Bar{\pi}_T$ satisfies $V(\rho)-g_H(\bar\pi_T;\rho)
\le
\frac{8L_\delta\bigl(D_X^{\mathrm{diam}}+D_Y^{\mathrm{diam}}\bigr)}{T}+\rho\,\gamma_{\min}+\delta.$
Furthermore, setting $\delta=\mathcal{O}(1/\sqrt{T})$, yields an $\mathcal{O}\left(\frac{1}{\sqrt{T}}+\rho\,\gamma_{\min}\right)$-optimal policy for the original robust game. Additionally, under Assumption~\ref{ass:exactness}, \(\bar\pi_T\) improves to $\mathcal{O}\big(1/\sqrt{T}\big)$-optimality.
\end{theorem}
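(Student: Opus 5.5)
The proof has two parts. First we show that the averaged iterate of Algorithm~\ref{alg:four-player-ogda} has an $\mathcal O(1/T)$ duality gap for the grouped saddle problem $\max_{x\in X}\min_{y\in Y}F(x,y)$. Then we turn that gap into a suboptimality bound for $g_H$ by chaining Theorem~\ref{thm:duality} and Theorem~\ref{thm:truncate}.

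\textbf{Step 1: the saddle structure.} We check that $F(x,y)$ is concave in $x=(\pi,\gamma)$ and convex in $y=(\nu,p)$.
\begin{itemize}
\item In $\pi$, the bilinear win-rate term is linear and $-\tau\KL(\pi\|\pi_{\rm ref})$ is concave.
\item $F$ is linear in $\gamma$.
\item In $\nu$, the bilinear term is linear and $\tau\KL(\nu\|\pi_{\rm ref})$ is convex.
\item In $p$, $P_p$ is affine and $\gamma\BKL(p\|p^\star)$ is convex because $\gamma\ge\gamma_{\min}>0$.
\end{itemize}
The cross terms $\pi$--$p$, $\nu$--$p$ and $\gamma$--$p$ are not jointly monotone in a naive sense, but they do not need to be. For fixed $y$ the map $x\mapsto F(x,y)$ is concave, and for fixed $x$ the map $y\mapsto F(x,y)$ is convex. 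Hence the operator $(-\nabla_xF,\nabla_yF)$ is monotone. Assumption~\ref{ass:geometry}-(4), $\tau\gamma_{\min}\ge 1/4$, is kept available in case curvature is needed; for the plain gap bound, convexity–concavity suffices.

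\textbf{Step 2: smoothness.} We invoke Lemma~\ref{lem:grad-var}. Its role is to give, with respect to the summed block norms ($\ell_1$ on the simplices, Euclidean on $\gamma$, the local norm induced by binary entropy on $p$), the bound
\[
\|\nabla F(z)-\nabla F(z')\|_*^2\le L_\delta^2\bigl(D(z,z')+D(z',z)\bigr)
\]
for the sum of Bregman divergences. The floors $\Pi_\epsilon$ and $\Nbeta$ keep $\log\pi$ and $\log\nu$ bounded. The box $P_\delta$ keeps $\log\frac{p}{1-p}$ bounded, at cost $\mathcal O(1/\delta)$. This is why $L_\delta=\mathcal O(1/\delta)$.

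\textbf{Step 3: the optimistic mirror descent analysis.} We follow Rakhlin–Sridharan / Mokhtari et al. For each block, the one-step three-point inequality for the update with $\widehat g_t=2g_t-g_{t-1}$ gives
\[
\eta\langle g_t, z_{t+1}-u\rangle \le D(u,z_t)-D(u,z_{t+1}) + \eta\langle g_t-g_{t-1}, z_{t+1}-u\rangle - D(z_{t+1},z_t)+\cdots.
\]
We telescope the $\langle g_t-g_{t-1},\cdot\rangle$ terms. The leftover variation is absorbed using Step 2 and $\eta=1/(8L_\delta)$, and the negative Bregman terms dominate it. Summing over $t$ and using linearity of the gradient pairing, convexity–concavity turns the regret into the gap via Jensen:
\[
F(x',\bar y_T)-F(\bar x_T,y')\le\frac1T\sum_t\bigl[\langle \nabla_xF(z_t),x'-x_t\rangle-\langle\nabla_yF(z_t),y'-y_t\rangle\bigr].
\]
Taking suprema over $x',y'$ gives
\[
\Gap(\bar z_T)\le\frac{D_X^{\rm diam}+D_Y^{\rm diam}}{\eta T}=\frac{8L_\delta(D_X^{\rm diam}+D_Y^{\rm diam})}{T}.
\]
\textbf{Main obstacle.} The hard step is handling the $p$-block: binary-entropy geometry is not strongly convex with a uniform modulus in a fixed norm. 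We would work on $P_\delta$, where the Hessian of $\BKL$ lies between $4$ and $1/\delta$, and use a Pinsker-type bound $\BKL(p\|p')\ge 2\|p-p'\|_2^2$. This should match the Lipschitz constant of Lemma~\ref{lem:grad-var}.

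\textbf{Step 4: transfer to $g_H$.} Write $\varepsilon_T$ for the gap bound. Since $g_{\rm proxy}^\delta(\pi)=\max_\gamma\min_yF$, we have
\[
g^\delta_{\rm proxy}(\bar\pi_T)\ge\min_{y'}F(\bar x_T,y')\ge\max_{x'}F(x',\bar y_T)-\varepsilon_T\ge V^\delta_{\rm proxy}-\varepsilon_T.
\]
The last inequality uses $\max_{x'}F(x',\bar y_T)\ge\max_\pi\max_\gamma\min_yF=V^\delta_{\rm proxy}$, via weak duality in $(\pi,\gamma)$ against $y$. Theorem~\ref{thm:duality} then gives $g_H^\delta(\bar\pi_T)\ge g^\delta_{\rm proxy}(\bar\pi_T)$ and $V^\delta_{\rm proxy}\ge V^\delta-\rho\gamma_{\min}$. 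Theorem~\ref{thm:truncate} gives $g_H(\bar\pi_T)\ge g_H^\delta(\bar\pi_T)-\delta$ and $V^\delta\ge V$. Chaining these,
\[
V-g_H(\bar\pi_T)\le\varepsilon_T+\rho\gamma_{\min}+\delta.
\]
Setting $\delta\asymp T^{-1/2}$ makes $L_\delta/T=\mathcal O(T^{-1/2})$, so the total is $\mathcal O(T^{-1/2}+\rho\gamma_{\min})$. Under Assumption~\ref{ass:exactness}, the equality \eqref{eq:equ} replaces the $\rho\gamma_{\min}$ term by zero, giving $\mathcal O(T^{-1/2})$-optimality.
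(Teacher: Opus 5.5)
There is a genuine gap in Step 1, and it is exactly where you set Assumption~\ref{ass:geometry}-(4) aside. The minimizing player is $y=(\nu,p)$, so the $\nu$--$p$ interaction is not a cross term between the two players. It is a coupling inside one of them. For fixed $x$, the win-rate term $\sum_j\nu_j\,b_j(\pi,p)$, with $b_j(\pi,p)=\sum_i\pi_iP_p(i\succ j)$ affine in $p$, is bilinear in $(\nu,p)$. Your blockwise checks (convex in $\nu$ for fixed $p$, convex in $p$ for fixed $\nu$) therefore do not give convexity of $y\mapsto F(x,y)$, and in general that convexity is false. Already for $m=2$, writing $\nu=(v,1-v)$, the Hessian of $F$ in $(v,p)$ at $v=p=\tfrac12$ is
\[
\begin{pmatrix}4\tau & -1\\ -1 & 4\gamma\end{pmatrix},
\]
which is indefinite when $16\tau\gamma<1$. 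Without joint convexity in $y$, both places where you rely on it break. The first is the linearization $F(x_t,y_t)-F(x_t,y')\le\langle\nabla_yF(z_t),y_t-y'\rangle$ that converts the optimistic mirror-descent regret into function values. The second is the Jensen step $F(x',\bar y_T)\le\frac1T\sum_tF(x',y_t)$ over the jointly averaged pair $(\bar\nu_T,\bar p_T)$. So your claim that convexity--concavity holds, and suffices for the gap bound without (4), is wrong, and the asserted monotonicity of $(-\nabla_xF,\nabla_yF)$ is unsupported. The real obstacle is this intra-player coupling, not the binary-entropy geometry you flag; the Bernoulli Pinsker bound $\BKL(p\|p')\ge2\|p-p'\|_2^2$ handles the latter routinely.

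The missing idea is the curvature argument in Step~4 of the proof of Theorem~\ref{thm:duality}, which is where (4) is actually used. Expand $F(\pi,\gamma,\cdot,\cdot)$ around $(\nu,p)$. The remainder $F(\pi,\gamma,\nu',p')-F(\pi,\gamma,\nu,p)-\langle\nabla_{(\nu,p)}F(\pi,\gamma,\nu,p),(\nu'-\nu,p'-p)\rangle$ equals exactly
\[
\tau\KL(\nu'\|\nu)+\gamma\BKL(p'\|p)+\langle\nu'-\nu,\,b(\pi,p')-b(\pi,p)\rangle .
\]
The last term is at least $-\|\nu'-\nu\|_1\|p'-p\|_2$. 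Pinsker then bounds the whole remainder below by $\frac{\tau}{2}s^2+2\gamma t^2-st$, with $s=\|\nu'-\nu\|_1$ and $t=\|p'-p\|_2$. This is nonnegative for all $s,t\ge0$ once $\tau\gamma\ge\frac14$, which holds for every $\gamma\in\Gamma$ under (4). Once joint convexity in $(\nu,p)$ is established this way, concavity in $(\pi,\gamma)$ is immediate, as you say. The rest of your plan then matches the paper's proof:
\begin{itemize}
\item blockwise optimistic mirror descent, with the gradient variation of Lemma~\ref{lem:grad-var} absorbed by the negative Bregman terms at $\eta=1/(8L_\delta)$;
\item Jensen to obtain the gap bound;
\item the transfer $g_H(\bar\pi_T)\ge g^\delta_{\rm proxy}(\bar\pi_T)-\delta\ge V^\delta_{\rm proxy}-\varepsilon_T-\delta\ge V(\rho)-\rho\gamma_{\min}-\varepsilon_T-\delta$ via Theorems~\ref{thm:truncate} and~\ref{thm:duality}.
\end{itemize}
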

Our OGDA algorithm thus converges to a saddle point of the proxy game, which is also a near-optimal solution to our robust objective \eqref{eq:obj}. Therefore, our algorithm effectively solves our robust alignment problem, enhancing the robustness and reliability of Nash learning based alignment. A detailed comparison of our approach with standard NLHF is developed in Appendix \ref{sec:vsnlhf}.

\textbf{Cost.} One iteration costs $O(m^2)$ time and memory (two matrix--vector products with the $m\times m$ preference matrix and closed-form updates), and $T=O\big(L_\delta(D_X^{\rm diam}+D_Y^{\rm diam})/\varepsilon_{\rm acc}\big)$ iterations give an $\varepsilon_{\rm acc}$-accurate saddle point; the constants carry the trade-offs of the method (Appendix~\ref{app:cost}). A smaller truncation $\delta$ lowers the truncation error but raises $L_\delta$ like $1/\delta$. A smaller radius $\rho$ asks for a larger multiplier range $\gamma_{\max}\ge1/\rho$, so the bound on $T$ grows at least like $\rho^{-3}$, while one iteration costs the same. The curvature floor $\gamma_{\min}$ enters the certificate through $\rho\gamma_{\min}\ge\rho/(4\tau)$, which Assumption~\ref{ass:exactness} removes. A small temperature is the most expensive: $\beta\le e^{-1/\tau}$ gives $L_\delta$ a worst-case factor $\tau e^{1/\tau}/\pi_{\rm ref,min}$.

\textbf{From tabular NLHF to LLM fine-tuning.}
Theorems~1--3 concern a finite response set per prompt, so for free-form generation Section~\ref{sec:exp-tldr} runs a \emph{sampled analogue} of Algorithm~\ref{alg:four-player-ogda} (Appendix~\ref{app:llm_bridge}), built along the route by which Munos et al.~\cite{munos2023nash} turn tabular Nash-MD into Nash-MD-PG, where one sampled pair $(y,y')$ replaces the sum over responses, and one stochastic likelihood-ratio gradient step on the KL-regularized payoff replaces the proximal (mirror) step; their opponent $y'$ is drawn token by token, each token from the geometric mixture of $\pi_\theta$ and $\pi_{\rm ref}$ with weight $\beta_{\rm mix}$. The \emph{NashMD} baseline of Section~\ref{sec:exp-tldr} is this Nash-MD-PG; in Sections~\ref{sec:exp-tabular}--\ref{sec:exp-imdb} it denotes the nominal-kernel counterpart of Algorithm~\ref{alg:four-player-ogda} (Appendix~\ref{app:imdb_optim}).
The robust method keeps the sampled pair, the reference KL inside the reward, one stochastic gradient step per policy and a value-free baseline, but replaces the mixture by a trained follower $\nu_{\theta'}$, and adds an antisymmetric kernel head $\delta_\psi$, trained on the Lagrangian $\mathbb E[p_\psi]+\gamma(\mathbb E[\dBern(p_\psi\|P^\star)]-\rho)$, with $\gamma$ updated by projected ascent on the sampled slack.

\section{Experiment}
\input{exp_new}

\section{Conclusion}\label{sec:conclusion}
We addressed the sensitivity of preference-based alignment to uncertainty in the pairwise preference kernel by proposing Robust Nash Alignment, which optimizes the worst-case win rate over a divergence-based ambiguity set and provides a principled robustness guarantee. A leader and a follower play as in NLHF while an adversary bends the preference kernel within a budget $\rho$. To overcome the computational challenges of the resulting hard-constrained robust objective, we introduced a tractable four-player primal-dual proxy game and established, in the tabular case, its approximation properties (within $\rho\gamma_{\min}+\delta$ of the robust objective) and convergence via a single-loop optimistic mirror descent-ascent algorithm (an $O(T^{-1/2}+\rho\gamma_{\min})$-optimal averaged policy for $\delta\asymp T^{-1/2}$). Empirically, our method improves robustness to preference perturbations across tabular games, and Nash learning-based LLM fine-tuning while maintaining strong nominal performance. Overall, our results demonstrate that explicitly modeling preference uncertainty is a principled and practical approach to achieving reliable Nash-style alignment.

\begin{ack}
SA, DG and YW are partially supported by an Amazon Research Award, Fall 2025, by DARPA under Agreement No. HR00112490427, and NSF ECCS-2548745, ECCS-2621991. Alvaro Velasquez recognizes support from DARPA award number HR0011261E001.
\end{ack}

\bibliographystyle{abbrv}
\bibliography{ref}

\newpage
\appendix

{\LARGE\bfseries Appendix\par}
\vspace{0.8em}
{\Large\bfseries Table of Contents}\par
\vspace{0.3em}
\hrule height 0.8pt
\vspace{0.6em}
\makeatletter
\newcommand{\tocentry}[3]{%
  \@dottedtocline{#1}{\ifnum#1=1 0em\else 1.8em\fi}{\ifnum#1=1 1.8em\else 2.6em\fi}%
    {\ifnum#1=1 \textbf{\hyperref[#2]{#3}}\else \hyperref[#2]{#3}\fi}%
    {\hyperref[#2]{\pageref{#2}}}%
  \ifnum#1=1 \vspace{3pt}\fi
}
\tocentry{1}{app:related_work}{A\quad Related Work}
\tocentry{1}{app:main_results}{B\quad Main Results}
\tocentry{2}{app:algo_update}{B.1\quad Algorithm Update Rules}
\tocentry{2}{app:thm_truncate}{B.2\quad Proof of Theorem 1}
\tocentry{2}{app:thm_nested_strong_duality}{B.3\quad Proof of Theorem 2}
\tocentry{2}{app:main-thm}{B.4\quad Proof of Theorem 3}
\tocentry{2}{app:cost}{B.5\quad Computational Cost and Dependence on the Parameters}
\tocentry{1}{sec:vsnlhf}{C\quad Comparisons with Standard NLHF}
\tocentry{1}{app:aux_lemmas}{D\quad Auxiliary Lemmas}
\tocentry{1}{app:imdb_details}{E\quad IMDb Continuation Task: Experiment Details}
\tocentry{2}{app:imdb_kernel}{E.1\quad Nominal Preference and Annotator Groups}
\tocentry{2}{app:imdb_optim}{E.2\quad Optimization Algorithms}
\tocentry{2}{app:imdb_hparams}{E.3\quad Hyperparameters}
\tocentry{2}{app:imdb_eval}{E.4\quad Results}
\vspace{3pt}
\tocentry{1}{app:llm_details}{F\quad TL;DR Summarization Task: Experiment Details}
\tocentry{2}{app:llm_bridge}{F.1\quad From Tabular NLHF to LLM Fine-Tuning: the Sampled Algorithm}
\tocentry{2}{app:llm_finetune}{F.2\quad Fine-Tuning Models}
\tocentry{2}{app:llm_nominal}{F.3\quad Nominal Preference Proxy}
\tocentry{2}{app:llm_adversary}{F.4\quad Adversary Proxy}
\tocentry{2}{app:llm_losses}{F.5\quad Losses}
\tocentry{2}{app:llm_judges_eval}{F.6\quad Evaluation with LLM Judges}
\tocentry{2}{app:llm_nonbt}{F.7\quad Non-BT Nominal Kernel: Construction and Diagnostic}
\tocentry{2}{app:llm_bt_extra}{F.8\quad Additional Results with a Bradley-Terry Nominal Kernel}
\vspace{3pt}
\tocentry{1}{app:judge_examples}{G\quad LLM Judge Reasoning Examples}
\makeatother
\vspace{0.6em}
\hrule height 0.8pt

\newpage
\section{Related Work}
\label{app:related_work}
\textbf{Reward-based RLHF and direct alignment.}
A large body of work on alignment starts from the RLHF pipeline: collect pairwise
comparisons, fit a reward model, and optimize a policy against that reward subject to a
regularization term that keeps the policy close to a reference model
\citep{christiano2017deep,stiennon2020learning,ouyang2022training}. This paradigm has
been highly influential in large-scale language-model alignment, but it typically treats the
preference signal as a nominal target and often relies, either explicitly or implicitly, on a
latent-utility view of pairwise comparisons. Motivated by the computational and modeling
limitations of reward-model-based RLHF, a broad line of direct or reward-free preference
optimization methods has emerged, including DPO, SLiC-HF, CPL, KTO, ORPO, SimPO,
SPO, and adaptive preference scaling \citep{rafailov2023direct,zhao2023slic,hejna2023contrastive,ethayarajh2024kto,hong2024orpo,meng2024simpo,sharifnassab2024soft,hong2024adaptive}. These methods substantially
improve the practical pipeline of preference optimization, but they generally continue to
optimize against a fixed nominal preference estimate rather than an ambiguity set of
plausible preference kernels.

\textbf{General-preference and game-theoretic alignment.}
Recent work has argued that pairwise human preferences should be modeled directly rather than
compressed into a scalar reward function \citep{azar2024general}. This viewpoint
leads naturally to game-theoretic formulations of alignment. In particular, Nash Learning from
Human Feedback (NLHF) formulates preference alignment as a regularized two-player game and
introduces Nash-MD as a no-regret algorithm for the induced general-preference game
\citep{munos2023nash}. Subsequent works have further developed this perspective, including
online and offline algorithms under general preference models and policy-loss implementations
based on no-regret learning \citep{ye2024online,Zhang2025INPO,nayak2025achieving,zhou2025extragradient}. More recently, sequential
leader--follower formulations of preference optimization have also begun to appear, including
Stackelberg-style alignment methods and Stackelberg game variants for data-efficient alignment
\citep{pasztor2025stackelberg,
chu2025stackelberg}. Our work is closest in spirit to this line of research, but
differs in one central respect: we do not assume that the learned preference game is exact.
Instead, we explicitly model uncertainty in preference kernels and optimize against
the worst kernel in an ambiguity set. This brings significant challenges and hardness to our studies, and we discuss them in detail in Section \ref{sec:vsnlhf}.

\textbf{Soft labels, heterogeneous preferences, and population shift.}
Another line of work emphasizes that preference data are often soft, noisy, and heterogeneous
across annotators or user groups. Recent studies incorporate soft preference labels or pairwise
confidence information into direct alignment objectives
\citep{hong2024adaptive,sharifnassab2024soft,furuta2024geometric}, while other work explicitly
models hidden annotator types or heterogeneous preference populations
\citep{chidambaram2025direct}. These papers highlight an important limitation of
nominal preference optimization: even if one optimizes the nominal objective correctly, the
aggregated preference signal may still be unstable because it hides substantial variation across
annotators or deployment populations. Our formulation is complementary to this line of work.
Rather than only modifying the loss to account for label softness or latent subgroups, we
directly treat the learned pairwise preference kernel as uncertain and seek a policy with a
worst-case performance guarantee over that uncertainty set.

\textbf{Robust alignment and distributionally robust preference optimization.}
Our formulation is also related to the broader literature on distributionally robust optimization
and robust sequential decision making, where ambiguity sets are used to hedge estimation error
and distribution shift \citep{sagawa2020distributionally,
iyengar2005robust,nilim2005robust}. In the alignment literature, recent work has started to
robustify reward-based RLHF or DPO-style objectives against prompt-distribution shift, noisy
preference data, or uncertain preference strength
\citep{hong2024adaptive,chakraborty2024maxmin,wu2024towards,
xu2025robust,mandal2025distributionally}. These methods are
close in motivation to ours, but they robustify different objects: prompt distributions, training
examples, or nominal pairwise losses. More importantly, all these works are based on the reward and BT-model assumption. By contrast, our approach directly robustifies the \emph{pairwise
preference kernel itself} inside a general-preference game. This distinction is important because
kernel uncertainty directly captures ambiguity in preferences, which is the primitive object
in general-preference alignment.

\textbf{Robust game theory and robust Markov games.}
A separate line of work studies robustness to payoff or model uncertainty directly in
game-theoretic solution concepts, rather than in a single-agent decision problem. This
includes robust game theory under set-based payoff uncertainty~\citep{aghassi2006robust},
distributionally robust Nash and Stackelberg equilibrium models under exogenous
distributional uncertainty~\citep{liu2018distributionally}, and distributionally robust
commitment to a strategy when the follower's utility is known only up to a finite set of
candidate models~\citep{ananthanarayanan2022computing}. Robust Markov games extend this to
multi-stage settings with uncertain rewards or transitions~\citep{zhang2020robust,mcmahan2024roping}.
Our setting differs from this literature in two respects. First, robust Markov games
typically seek a robust Nash equilibrium between symmetric players, whereas our objective
is asymmetric, as we have a single learned policy against a strategic follower and a shared
adversarial kernel. Second, a single Bernoulli-KL budget couples every response pair
within a prompt, ruling out the per-action or per-player decomposition, and the
multi-stage Bellman recursion that robust Markov-game algorithms rely on. Existing
robust-game and robust-Markov-game results therefore do not directly provide our proxy
formulation or its convergence guarantee.

\section{Main results}
\label{app:main_results}

\subsection{Algorithm Update Rules}
\label{app:algo_update}
\begin{align}
\pi_{t+1}
&=
\argmax_{\pi\in\Pi_\epsilon}
\left\{
\langle \widehat g_t^\pi,\pi\rangle
-
\frac{1}{\eta}\KL(\pi\|\pi_t)
\right\},
\label{eq:update-pi}
\\
\gamma_{t+1}
&=
\proj_{\Gamma}\bigl(\gamma_t+\eta \widehat g_t^\gamma\bigr),
\label{eq:update-gamma}
\\
\nu_{t+1}
&=
\argmin_{\nu\in\Nbeta}
\left\{
\langle \widehat g_t^\nu,\nu\rangle
+
\frac{1}{\eta}\KL(\nu\|\nu_t)
\right\},
\label{eq:update-nu}
\\
p_{t+1}
&=
\argmin_{p\in P_\delta}
\left\{
\langle \widehat g_t^p,p\rangle
+
\frac{1}{\eta}\BKL(p\|p_t)
\right\}.
\label{eq:update-p}
\end{align}

\begin{assumption}[Dual-optimum inclusion]
\label{ass:exactness}
For every \((\pi,\nu)\in \Pi_\epsilon\times \Nbeta\), the scalar dual problem
\[
\max_{\gamma\ge 0}
\min_{p\in P_\delta}
\left\{
J(\pi,\nu;p)+\gamma\bigl(\BKL(p\|p^\star)-\rho\bigr)
\right\}
\]
admits a maximizer in the algorithmic interval \(\Gamma=[\gamma_{\min},\gamma_{\max}]\).
\end{assumption}

\subsection{Proof of Theorem \ref{thm:truncate}}
\label{app:thm_truncate}

\begin{proof}
Fix \(\pi\in\Pi_\epsilon\). Since the feasible set in \eqref{eq:hard-truncated} is a subset of the
feasible set in \eqref{eq:hard-original}, the lower bound
\[
g_H^\delta(\pi;\rho)\ge g_H(\pi;\rho)
\]
is immediate. It remains to show the upper bound. Fix \(\nu\in\Delta_m\), and let
\[
K(\rho):=\left\{p\in[0,1]^d:\ \BKL(p\|p^\star)\le \rho\right\}.
\]
Take any \(p\in K(\rho)\), and clip it coordinatewise into \(P_\delta\):
\(
\widetilde p := \proj_{P_\delta}(p).
\)
Because \(p^\star\in P_\delta\), each scalar \(\widetilde p_e\) lies on the line segment joining
\(p_e\) and \(p^\star_e\). For each coordinate \(e\), the scalar function
\[
u\mapsto
u\log\frac{u}{p_e^\star}+(1-u)\log\frac{1-u}{1-p_e^\star}
\]
is convex and has its unique minimum at \(u=p_e^\star\). Hence clipping toward \(p^\star_e\)
cannot increase the coordinatewise Bernoulli-KL divergence, so
\[
\BKL(\widetilde p\|p^\star)\le \BKL(p\|p^\star)\le \rho.
\]
Thus \(\widetilde p\) is feasible for the truncated problem.

Now \(J(\pi,\nu;p)\) depends on \(p\) only through the linear term
\(
\sum_{(i,j)\in E} w_{ij}(\pi,\nu)\,p_{ij},
\)
where $w_{ij}(\pi,\nu):=\pi_i\nu_j-\pi_j\nu_i$; indeed, by antisymmetry, $\sum_{i,j}\pi_i\nu_jP_p(i\succ j)=\frac12\langle\pi,\nu\rangle+\sum_{i<j}\pi_j\nu_i+\sum_{(i,j)\in E}w_{ij}(\pi,\nu)\,p_{ij}$.
Therefore
\begin{align*}
|J(\pi,\nu;\widetilde p)-J(\pi,\nu;p)|
&=
\left|
\sum_{(i,j)\in E} w_{ij}(\pi,\nu)\,(\widetilde p_{ij}-p_{ij})
\right| \le
\|\widetilde p-p\|_\infty
\sum_{(i,j)\in E} |w_{ij}(\pi,\nu)|.
\end{align*}
Since each coordinate is clipped by at most \(\delta\), we have
\[
\|\widetilde p-p\|_\infty\le \delta.
\]
Also,
\begin{align*}
\sum_{(i,j)\in E}|w_{ij}(\pi,\nu)|
&\le
\sum_{(i,j)\in E} (\pi_i\nu_j+\pi_j\nu_i)
=
\sum_{i\neq j}\pi_i\nu_j
=
1-\sum_{i=1}^m \pi_i\nu_i
\le 1.
\end{align*}
Hence
\[
J(\pi,\nu;\widetilde p)\le J(\pi,\nu;p)+\delta.
\]
Taking the minimum first over \(p\in K(\rho)\), then over \(\nu\in\Delta_m\), yields
\[
g_H^\delta(\pi;\rho)\le g_H(\pi;\rho)+\delta.
\]
This proves the claim $0 \le g_H^\delta(\pi;\rho)-g_H(\pi;\rho) \le \delta$. Maximizing over \(\pi\in\Pi_\epsilon\) gives $0  \le  V^\delta(\rho)-V(\rho)  \le \delta$.

Using the above two claims, we have
\[
V(\rho)\le V^\delta(\rho)=g_H^\delta(\pi_\delta^\star;\rho)\le g_H(\pi_\delta^\star;\rho)+\delta.
\]
Rearranging gives the claim.\qedhere
\end{proof}

\subsection{Proof of Theorem \ref{thm:duality}}
\label{app:thm_nested_strong_duality}

\begin{proof}
Fix \(\pi\in \Pi_\epsilon\), and define
\[
\phi_\pi(\nu,\gamma):=\min_{p\in P_\delta}F(\pi,\gamma,\nu,p).
\]

\medskip
\noindent
\textbf{Step 1: reduction to \(\mathcal N_\beta\) and nested strong duality.}
For fixed \((\pi,p)\), define
\[
b_j(\pi,p):=\sum_{i=1}^m \pi(i)\,P_p(i\succ j),\qquad j=1,\dots,m.
\]
Since \(P_p(i\succ j)\in[0,1]\) and \(\sum_i \pi(i)=1\), we have \(b_j(\pi,p)\in[0,1]\) for all \(j\).

For fixed \((\pi,p)\), the follower subproblem is
\[
\min_{\nu\in\Delta_m}
\left\{
\sum_{j=1}^m \nu(j)\,b_j(\pi,p)+\tau \KL(\nu\|\pi_{\rm ref})
\right\}.
\]
Its unique minimizer is the Gibbs distribution
\[
\nu_j^{\mathrm{br}}(\pi,p)
=
\frac{\pi_{\rm ref}(j)e^{-b_j(\pi,p)/\tau}}
{\sum_{\ell=1}^m \pi_{\rm ref}(\ell)e^{-b_\ell(\pi,p)/\tau}}.
\]
Because \(b_j(\pi,p)\in[0,1]\), we have \(e^{-1/\tau}\le e^{-b_j(\pi,p)/\tau}\le 1\). Let
\[
Z(\pi,p):=\sum_{\ell=1}^m \pi_{\rm ref}(\ell)e^{-b_\ell(\pi,p)/\tau}.
\]
Since \(\sum_\ell \pi_{\rm ref}(\ell)=1\) and each exponential factor is at most \(1\), we have
\(Z(\pi,p)\le 1\). Therefore
\[
\nu_j^{\mathrm{br}}(\pi,p)
=
\pi_{\rm ref}(j)\frac{e^{-b_j(\pi,p)/\tau}}{Z(\pi,p)}
\ge
\pi_{\rm ref}(j)e^{-1/\tau}.
\]
Under Assumption~\ref{ass:geometry}-(2), \(\beta\le e^{-1/\tau}\), this implies
\[
\nu^{\mathrm{br}}(\pi,p)\in \mathcal N_\beta.
\]
Hence, for every \((\pi,p)\),
\begin{equation}
\min_{\nu\in\Delta_m}J(\pi,\nu;p)
=
\min_{\nu\in\mathcal N_\beta}J(\pi,\nu;p).
\label{eq:follower-floor-thm2}
\end{equation}
The same identity holds with \(J(\pi,\nu;p)\) replaced by \(F(\pi,\gamma,\nu,p)\), since the dual
term \(\gamma(D_{\mathrm{KL}}^{\mathrm{Bern}}(p\|p^\star)-\rho)\) does not depend on \(\nu\).

Now fix \(\nu\in\mathcal N_\beta\). The \(p\)-subproblem
\[
\min_{\substack{p\in P_\delta\\ D_{\mathrm{KL}}^{\mathrm{Bern}}(p\|p^\star)\le \rho}}
J(\pi,\nu;p)
\]
is a convex program: the objective is affine in \(p\), the feasible set is convex, and Slater's
condition holds because Assumption~\ref{ass:geometry}-(1) gives \(p^\star\in(\delta,1-\delta)^d\) and
\(D_{\mathrm{KL}}^{\mathrm{Bern}}(p^\star\|p^\star)=0<\rho\). Therefore strong duality yields
\[
\min_{\substack{p\in P_\delta\\ D_{\mathrm{KL}}^{\mathrm{Bern}}(p\|p^\star)\le \rho}}
J(\pi,\nu;p)
=
\max_{\gamma\ge 0}
\min_{p\in P_\delta}
F(\pi,\gamma,\nu,p).
\]
Under Assumption~\ref{ass:geometry}-(3), \(\gamma_{\max}\ge 1/\rho\), and the standard dual upper-bound argument
implies that a maximizer can be chosen in \([0,\gamma_{\max}]\). Consequently,
\[
\min_{\substack{p\in P_\delta\\ D_{\mathrm{KL}}^{\mathrm{Bern}}(p\|p^\star)\le \rho}}
J(\pi,\nu;p)
=
\max_{\gamma\in[0,\gamma_{\max}]}
\min_{p\in P_\delta}
F(\pi,\gamma,\nu,p).
\]
Combining this with \eqref{eq:follower-floor-thm2}, we obtain
\begin{equation}
g_H^\delta(\pi;\rho)
=
\min_{\nu\in\mathcal N_\beta}
\max_{\gamma\in[0,\gamma_{\max}]}
\min_{p\in P_\delta}
F(\pi,\gamma,\nu,p).
\label{eq:nested-strong-duality-thm2}
\end{equation}

\medskip
\noindent
\textbf{Step 2: proof of the lower bound \eqref{eq:lower bound 1}.}
By definition,
\[
g_{\mathrm{proxy}}^\delta(\pi;\rho)
=
\max_{\gamma\in\Gamma}
\min_{\nu\in\mathcal N_\beta,\;p\in P_\delta}
F(\pi,\gamma,\nu,p)
=
\max_{\gamma\in\Gamma}
\min_{\nu\in\mathcal N_\beta}
\phi_\pi(\nu,\gamma).
\]
Since \(\Gamma=[\gamma_{\min},\gamma_{\max}] \subseteq [0,\gamma_{\max}]\),
\[
g_{\mathrm{proxy}}^\delta(\pi;\rho)
\le
\max_{\gamma\in[0,\gamma_{\max}]}
\min_{\nu\in\mathcal N_\beta}
\phi_\pi(\nu,\gamma).
\]
By weak minimax,
\[
\max_{\gamma\in[0,\gamma_{\max}]}
\min_{\nu\in\mathcal N_\beta}
\phi_\pi(\nu,\gamma)
\le
\min_{\nu\in\mathcal N_\beta}
\max_{\gamma\in[0,\gamma_{\max}]}
\phi_\pi(\nu,\gamma).
\]
Using \eqref{eq:nested-strong-duality-thm2}, the right-hand side equals \(g_H^\delta(\pi;\rho)\).
Therefore
\[
g_{\mathrm{proxy}}^\delta(\pi;\rho)\le g_H^\delta(\pi;\rho),
\]
which proves \eqref{eq:lower bound 1}.

\medskip
\noindent
\textbf{Step 3: excluding the interval \([0,\gamma_{\min})\) costs at most \(\rho\gamma_{\min}\).}
Fix \(\nu\in\mathcal N_\beta\), and define
\[
q_{\pi,\nu}(\gamma):=\phi_\pi(\nu,\gamma).
\]
We claim that for any \(0\le a<b\le \gamma_{\max}\),
\begin{equation}
q_{\pi,\nu}(a)\le q_{\pi,\nu}(b)+\rho(b-a).
\label{eq:q-lipschitz-one-sided}
\end{equation}
Indeed, let \(p_b\in\arg\min_{p\in P_\delta}F(\pi,b,\nu,p)\). Then
\begin{align*}
q_{\pi,\nu}(a)
&\le
F(\pi,a,\nu,p_b) \\
&=
F(\pi,b,\nu,p_b)
+
(a-b)\bigl(D_{\mathrm{KL}}^{\mathrm{Bern}}(p_b\|p^\star)-\rho\bigr) \\
&=
q_{\pi,\nu}(b)
+
(b-a)\bigl(\rho-D_{\mathrm{KL}}^{\mathrm{Bern}}(p_b\|p^\star)\bigr) \\
&\le
q_{\pi,\nu}(b)+\rho(b-a),
\end{align*}
since \(D_{\mathrm{KL}}^{\mathrm{Bern}}(p_b\|p^\star)\ge 0\). This proves
\eqref{eq:q-lipschitz-one-sided}.

Now let \(\gamma^\sharp\in[0,\gamma_{\max}]\) be a maximizer of \(q_{\pi,\nu}\) over
\([0,\gamma_{\max}]\). If \(\gamma^\sharp\in\Gamma\), then
\[
\max_{\gamma\in[0,\gamma_{\max}]} q_{\pi,\nu}(\gamma)
=
\max_{\gamma\in\Gamma} q_{\pi,\nu}(\gamma).
\]
If \(\gamma^\sharp<\gamma_{\min}\), then applying \eqref{eq:q-lipschitz-one-sided} with
\(a=\gamma^\sharp\) and \(b=\gamma_{\min}\) yields
\[
q_{\pi,\nu}(\gamma^\sharp)\le q_{\pi,\nu}(\gamma_{\min})+\rho(\gamma_{\min}-\gamma^\sharp)
\le \max_{\gamma\in\Gamma}q_{\pi,\nu}(\gamma)+\rho\gamma_{\min}.
\]
Hence, in all cases,
\[
\max_{\gamma\in[0,\gamma_{\max}]} q_{\pi,\nu}(\gamma)
\le
\max_{\gamma\in\Gamma} q_{\pi,\nu}(\gamma)+\rho\gamma_{\min}.
\]
Taking the minimum over \(\nu\in\mathcal N_\beta\), we obtain
\begin{equation}
g_H^\delta(\pi;\rho)
\le
\min_{\nu\in\mathcal N_\beta}\max_{\gamma\in\Gamma}\phi_\pi(\nu,\gamma)
+
\rho\gamma_{\min}.
\label{eq:pre-sion-bound}
\end{equation}

\medskip
\noindent
\textbf{Step 4: under Assumption~\ref{ass:geometry}-(4), the minimax defect on \(\Gamma\) vanishes.}
Assume now Assumption~\ref{ass:geometry}-(4), i.e.
\[
\tau\gamma_{\min}\ge \frac14.
\]
We show that \(\phi_\pi(\cdot,\cdot)\) is convex in \(\nu\) and concave in \(\gamma\) on
\(\mathcal N_\beta\times \Gamma\).

\emph{Concavity in \(\gamma\).}
For fixed \(\nu\), the function
\[
\gamma \mapsto F(\pi,\gamma,\nu,p)
\]
is affine for every fixed \(p\), so \(\phi_\pi(\nu,\cdot)\), being the pointwise infimum of affine
functions, is concave on \(\Gamma\).

\emph{Convexity in \(\nu\).}
Fix \(\gamma\in\Gamma\). We claim that \((\nu,p)\mapsto F(\pi,\gamma,\nu,p)\) is jointly convex on
\(\mathcal N_\beta\times P_\delta\). Write
\[
W(\pi,\nu,p)=\sum_{j=1}^m \nu(j)\,b_j(\pi,p),
\qquad
b_j(\pi,p)=\sum_{i=1}^m \pi(i)P_p(i\succ j).
\]
Let \((\nu,p)\) and \((\nu',p')\) be two points in \(\mathcal N_\beta\times P_\delta\). Since
\(W\) is bilinear in \((\nu,p)\),
\[
W(\pi,\nu',p')-W(\pi,\nu,p)
-\langle \nabla_\nu W(\pi,\nu,p),\nu'-\nu\rangle
-\langle \nabla_p W(\pi,\nu,p),p'-p\rangle
=
\langle \nu'-\nu,\; b(\pi,p')-b(\pi,p)\rangle.
\]
For each coordinate \(j\),
\[
|b_j(\pi,p')-b_j(\pi,p)|
\le
\sum_{i=1}^m \pi(i)\,|P_{p'}(i\succ j)-P_p(i\succ j)|
\le
\|p'-p\|_2,
\]
hence
\[
\bigl|\langle \nu'-\nu,\; b(\pi,p')-b(\pi,p)\rangle\bigr|
\le
\|\nu'-\nu\|_1\,\|p'-p\|_2.
\]
Therefore,
\begin{align*}
F(\pi,\gamma,\nu',p')
&\ge
F(\pi,\gamma,\nu,p)
+
\langle \nabla_\nu F(\pi,\gamma,\nu,p),\nu'-\nu\rangle
+
\langle \nabla_p F(\pi,\gamma,\nu,p),p'-p\rangle \\
&\quad
+
\tau \KL(\nu'\|\nu)
+
\gamma D_{\mathrm{KL}}^{\mathrm{Bern}}(p'\|p)
-
\|\nu'-\nu\|_1\,\|p'-p\|_2.
\end{align*}
Using Pinsker's inequalities
\[
\KL(\nu'\|\nu)\ge \frac12\|\nu'-\nu\|_1^2,
\qquad
D_{\mathrm{KL}}^{\mathrm{Bern}}(p'\|p)\ge 2\|p'-p\|_2^2,
\]
we obtain
\[
\tau \KL(\nu'\|\nu)+\gamma D_{\mathrm{KL}}^{\mathrm{Bern}}(p'\|p)-\|\nu'-\nu\|_1\,\|p'-p\|_2
\ge
\frac{\tau}{2}s^2+2\gamma t^2-st,
\]
where \(s=\|\nu'-\nu\|_1\) and \(t=\|p'-p\|_2\). Since \(\gamma\ge \gamma_{\min}\) and
\(\tau\gamma_{\min}\ge 1/4\),
\[
\frac{\tau}{2}s^2+2\gamma t^2-st
=
\left(\sqrt{\frac{\tau}{2}}\,s-\sqrt{2\gamma}\,t\right)^2
+
2\left(\sqrt{\tau\gamma}-\frac12\right)st
\ge 0.
\]
Thus \((\nu,p)\mapsto F(\pi,\gamma,\nu,p)\) is jointly convex. Taking the infimum over
\(p\in P_\delta\) shows that \(\phi_\pi(\cdot,\gamma)\) is convex on \(\mathcal N_\beta\).

Since \(\mathcal N_\beta\) and \(\Gamma\) are convex and compact, and \(\phi_\pi\) is continuous,
Sion's minimax theorem applies:
\[
\min_{\nu\in\mathcal N_\beta}\max_{\gamma\in\Gamma}\phi_\pi(\nu,\gamma)
=
\max_{\gamma\in\Gamma}\min_{\nu\in\mathcal N_\beta}\phi_\pi(\nu,\gamma)
=
g_{\mathrm{proxy}}^\delta(\pi;\rho).
\]
Substituting this into \eqref{eq:pre-sion-bound} yields
\[
g_H^\delta(\pi;\rho)\le g_{\mathrm{proxy}}^\delta(\pi;\rho)+\rho\gamma_{\min}.
\]
Combining with \eqref{eq:lower bound 1}, we conclude
\[
0\le g_H^\delta(\pi;\rho)-g_{\mathrm{proxy}}^\delta(\pi;\rho)\le \rho\gamma_{\min},
\]
which proves the first inequality in \eqref{eq:lower bound 2}.

\medskip
\noindent
\textbf{Step 5: pass from pointwise values to optimal values.}
By definition,
\[
V^\delta(\rho)=\max_{\pi\in\Pi_\epsilon} g_H^\delta(\pi;\rho),
\qquad
V_{\mathrm{proxy}}^\delta(\rho)=\max_{\pi\in\Pi_\epsilon} g_{\mathrm{proxy}}^\delta(\pi;\rho).
\]
For any two functions \(f,g\) on the same domain,
\[
\max_\pi f(\pi)-\max_\pi g(\pi)\le \max_\pi (f(\pi)-g(\pi)).
\]
Applying this with \(f(\pi)=g_H^\delta(\pi;\rho)\) and
\(g(\pi)=g_{\mathrm{proxy}}^\delta(\pi;\rho)\), and using the pointwise bound just proved,
we obtain
\[
V^\delta(\rho)-V_{\mathrm{proxy}}^\delta(\rho)
\le
\max_{\pi\in\Pi_\epsilon}
\bigl(g_H^\delta(\pi;\rho)-g_{\mathrm{proxy}}^\delta(\pi;\rho)\bigr)
\le
\rho\gamma_{\min}.
\]
This proves the second inequality in \eqref{eq:lower bound 2}.

\medskip
\noindent
\textbf{Step 6: exactness on \(\Gamma\).}
Assume now Assumption~\ref{ass:geometry} and Assumption~\ref{ass:exactness}. Since any maximizer $\gamma^\star$ lies in $\Gamma$, \eqref{eq:nested-strong-duality-thm2} then implies that
\[
g_H^\delta(\pi;\rho)
=
\max_{\gamma\in\Gamma}\min_{\nu\in\mathcal N_\beta}\min_{p\in P_\delta}
F(\pi,\gamma,\nu,p)
=
g_{\mathrm{proxy}}^\delta(\pi;\rho).
\]
Finally, maximizing over \(\pi\in\Pi_\epsilon\) gives
\[
V_{\mathrm{proxy}}^\delta(\rho)
=
\max_{\pi\in\Pi_\epsilon} g_{\mathrm{proxy}}^\delta(\pi;\rho)
=
\max_{\pi\in\Pi_\epsilon} g_H^\delta(\pi;\rho)
=
V_\delta(\rho),
\]
which proves \eqref{eq:equ}.

\end{proof}

\subsection{Proof of Theorem \ref{thm:main-gap}}
\label{app:main-thm}

\begin{theorem}[Convergence and robust guarantee of four-player OGDA with explicit \(\delta\)-dependence]
\label{thm:main-convergence-robust-delta}
Assume Assumption~\ref{ass:geometry}, and let
\[
L_\delta
:=
\max\left\{
1,\,
\frac{\tau}{\epsilon\,\pi_{\rm ref,min}},\,
\frac{\tau}{\beta\,\pi_{\rm ref,min}},\,
\frac{\gamma_{\max}}{\delta(1-\delta)},\,
\sqrt d,\,
2\sqrt d\,\log\frac{1-\delta}{\delta}
\right\},
\]
where
\[
\pi_{\rm ref,min}:=\min_{1\le i\le m}\pi_{{\rm ref},i}.
\]
In particular,
\[
L_\delta = O\!\left(\max\left\{\frac{1}{\delta(1-\delta)},\ \log\frac{1-\delta}{\delta}\right\}\right)
\qquad\text{as }\delta\downarrow 0.
\]

Run Algorithm~1 with the constant step size
\[
\eta=\frac{1}{8L_\delta}.
\]
Define
\[
D_X^{\mathrm{diam}}
:=
\sup_{(\pi,\gamma)\in X}
\left\{
D_{\mathrm{KL}}(\pi\|\pi_1)+\frac12(\gamma-\gamma_1)^2
\right\},
\qquad
D_Y^{\mathrm{diam}}
:=
\sup_{(\nu,p)\in Y}
\left\{
D_{\mathrm{KL}}(\nu\|\nu_1)+D_{\mathrm{KL}}^{\mathrm{Bern}}(p\|p_1)
\right\},
\]
and
\[
B_{T,\delta}
:=
\frac{8L_\delta\bigl(D_X^{\mathrm{diam}}+D_Y^{\mathrm{diam}}\bigr)}{T}.
\]

Then the duality gap of the averaged iterate
\[
\bar z_T=(\bar x_T,\bar y_T)=\bigl((\bar\pi_T,\bar\gamma_T),(\bar\nu_T,\bar p_T)\bigr)
\]
satisfies
\begin{equation}
\Gap(\bar z_T)
:=
\max_{x'\in X}F(x',\bar y_T)-\min_{y'\in Y}F(\bar x_T,y')
\le
B_{T,\delta}.
\tag{28}
\end{equation}
Moreover, the averaged main policy satisfies
\begin{equation}
V(\rho)-g_H(\bar\pi_T;\rho)\le B_{T,\delta}+\rho\gamma_{\min}+\delta.
\tag{29}
\end{equation}
Equivalently,
\[
V(\rho)-g_H(\bar\pi_T;\rho)
=
O\!\left(\frac{L_\delta}{T}+\rho\gamma_{\min}+\delta\right).
\]

Additionally, under Assumption~\ref{ass:exactness},
\begin{equation}
V(\rho)-g_H(\bar\pi_T;\rho)\le B_{T,\delta}+\delta,
\tag{30}
\end{equation}
so \(\bar\pi_T\) is an
\[
O\!\left(\frac{L_\delta}{T}+\delta\right)
\]
-optimal policy for the original robust game.
\end{theorem}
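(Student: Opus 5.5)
The proof has two parts: an optimistic mirror-descent bound for the proxy saddle problem, and a transfer of that gap to the original robust objective via Theorems~\ref{thm:duality} and \ref{thm:truncate}.

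\textbf{Part 1: the duality gap (28).} I treat $\max_{x\in X}\min_{y\in Y}F(x,y)$ as a variational inequality with operator $G(z)=(-\nabla_x F,\nabla_y F)$ and Bregman divergence $D(z,z')=\KL(\pi\|\pi')+\tfrac12(\gamma-\gamma')^2+\KL(\nu\|\nu')+\BKL(p\|p')$. This divergence is $1$-strongly convex with respect to a block norm: $\ell_1$ on the simplices (Pinsker), Euclidean on $\gamma$, and Euclidean on $p$ (Bernoulli Pinsker).

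First I establish convex--concave structure. $F$ is concave in $\pi$, since it is linear plus $-\tau\KL$, and affine in $\gamma$. Jointly convex in $(\nu,p)$ follows from the computation in Step~4 of Theorem~\ref{thm:duality}, using $\tau\gamma_{\min}\ge 1/4$. The remaining issue is that $F$ is not jointly concave in $(\pi,\gamma)$, only separately. For the linear gap argument I only need
\[
F(x',y_t)-F(x_t,y') \le \langle G(z_t),z_t-z'\rangle .
\]
This follows if $x\mapsto F(x,y)$ is concave and $y\mapsto F(x,y)$ is convex for each fixed other block. Concavity in $x=(\pi,\gamma)$ jointly holds because $\pi$ and $\gamma$ enter $F$ additively, so the Hessian is block diagonal with $\pi$-block $-\tau\nabla^2\KL$ and zero $\gamma$-block. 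Summing over $t$ and using Jensen then bounds $\Gap(\bar z_T)$ by $\frac1T\sum_t\langle G(z_t),z_t-z'\rangle$.

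Second, I need Lipschitz continuity of $G$ in the block norm, which I take from Lemma~\ref{lem:grad-var}. The sources of the constant are:
\begin{itemize}
\item $\nabla_\pi\KL=\log(\pi/\pi_{\rm ref})+1$ has Lipschitz constant $\lesssim\tau/(\epsilon\pi_{\rm ref,min})$ on $\Pi_\epsilon$, and likewise $\tau/(\beta\pi_{\rm ref,min})$ for $\nu$ on $\mathcal N_\beta$.
\item $\gamma\nabla_p\BKL=\gamma\log\frac{p(1-p^\star)}{(1-p)p^\star}$ has Lipschitz constant $\gamma_{\max}/(\delta(1-\delta))$ in $p$ on $P_\delta$.
\item The cross term $\partial_\gamma\nabla_p$ is bounded by $\sqrt d\log\frac{1-\delta}{\delta}$.
\item The bilinear win-rate couplings are bounded by $\sqrt d$.
\end{itemize}
Together these give $L_\delta$.

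With these two facts, the standard optimistic mirror descent analysis (Rakhlin--Sridharan, or Chiang et al.) with $\eta\le 1/(8L_\delta)$ gives
\[
\sum_{t=1}^T\langle G(z_t),z_t-z'\rangle\le \frac{D(z',z_1)}{\eta}.
\]
The negative terms $-\frac1{2\eta}\|z_{t+1}-\hat z_t\|^2$ absorb the gradient-variation terms $\eta\|G(z_t)-G(z_{t-1})\|_*^2$. The constrained updates on $\Pi_\epsilon$, $\mathcal N_\beta$ and $P_\delta$ are proximal steps on convex sets, so the three-point inequality still applies. Taking the supremum over $z'$ and dividing by $T$ yields $B_{T,\delta}$.

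\textbf{Part 2: from gap to robust suboptimality (29)--(30).} The gap controls suboptimality of $\bar\pi_T$ for the proxy. Choose $\gamma'$ optimal for $g^\delta_{\rm proxy}(\pi^\dagger)$ at a proxy maximizer $\pi^\dagger$. Then
\[
V^\delta_{\rm proxy}\le \min_{y'}F(\pi^\dagger,\gamma',y')\le \max_{x'}F(x',\bar y_T),
\]
and
\[
\min_{y'}F(\bar x_T,y')\le \max_\gamma\min_{y'}F(\bar\pi_T,\gamma,y')=g^\delta_{\rm proxy}(\bar\pi_T).
\]
Hence $V^\delta_{\rm proxy}-g^\delta_{\rm proxy}(\bar\pi_T)\le B_{T,\delta}$.

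Now I chain the earlier results:
\begin{itemize}
\item $g_H(\bar\pi_T)\ge g_H^\delta(\bar\pi_T)-\delta$ by Theorem~\ref{thm:truncate};
\item $g_H^\delta(\bar\pi_T)\ge g^\delta_{\rm proxy}(\bar\pi_T)$ by \eqref{eq:lower bound 1};
\item $V^\delta_{\rm proxy}\ge V^\delta-\rho\gamma_{\min}$ by \eqref{eq:lower bound 2};
\item $V^\delta\ge V$ by Theorem~\ref{thm:truncate}.
\end{itemize}
Combining gives $V-g_H(\bar\pi_T)\le B_{T,\delta}+\rho\gamma_{\min}+\delta$. Under Assumption~\ref{ass:exactness}, \eqref{eq:equ} removes the $\rho\gamma_{\min}$ term, which gives (30).

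The main obstacle is Part~1: verifying a single Lipschitz constant for $G$ across heterogeneous geometries, especially the $\gamma$--$p$ coupling term $\log\frac{1-\delta}{\delta}$. My first approach is to bound each block of the Jacobian in the appropriate operator norm and sum them.
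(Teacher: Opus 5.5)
Your proposal is correct and follows essentially the same route as the paper. The paper also uses an optimistic mirror-descent regret bound whose gradient-variation terms are absorbed by the Bregman stability terms at $\eta=1/(8L_\delta)$, done blockwise in Lemmas~\ref{lem:block-omd}, \ref{lem:grad-var} and \ref{lem:absorb}; it then bounds the gap by convex--concavity of $F$ plus Jensen, and obtains (29)--(30) through the identical chain of Theorems~\ref{thm:truncate} and \ref{thm:duality}. You even source the joint convexity in $(\nu,p)$ more precisely than the paper, from Step~4 of Theorem~\ref{thm:duality}.

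Two details need tightening. First, Algorithm~\ref{alg:four-player-ogda} is the single-sequence update from $z_t$ with $2g_t-g_{t-1}$, not the two-sequence Rakhlin--Sridharan scheme with an auxiliary $\hat z_t$, and the two differ under constraints, so the regret bound must be derived for the former. Second, the block bounds of Lemma~\ref{lem:grad-var} aggregate to a joint Lipschitz constant of $\sqrt{11}L_\delta$, not $L_\delta$; this is why the paper checks $11\eta L_\delta^2\le 1/(4\eta)$ directly on consecutive iterates, whereas summing block operator norms as you plan could give a constant too large for $\eta=1/(8L_\delta)$.
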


\begin{proof}
We divide the proof into four steps.

\medskip
\noindent
\textbf{Step 1: proxy-game convergence with explicit \(L_\delta\).}
Let
\[
x_t:=(\pi_t,\gamma_t)\in X,\qquad
y_t:=(\nu_t,p_t)\in Y,\qquad
z_t=(x_t,y_t),
\]
and define the averaged iterates
\[
\bar x_T:=\frac1T\sum_{t=1}^T x_t,
\qquad
\bar y_T:=\frac1T\sum_{t=1}^T y_t.
\]

For any \(x\in X\), define
\[
A_T(x):=\sum_{t=1}^T\bigl(F(x,y_t)-F(x_t,y_t)\bigr),
\]
and for any \(y\in Y\), define
\[
B_T(y):=\sum_{t=1}^T\bigl(F(x_t,y_t)-F(x_t,y)\bigr).
\]

By the blockwise optimistic mirror-descent inequalities and the convexity--concavity structure of
Lemma \ref{lem:grad-var}, for every \(x\in X\) and every \(y\in Y\),
\begin{equation}
A_T(x)+B_T(y)
\le
\frac{D_X(x,x_1)+D_Y(y,y_1)}{\eta}
+
V_T-\Xi_T,
\label{eq:AB-pre-absorb}
\end{equation}
where
\[
D_X(x,x_1)
:=
D_{\mathrm{KL}}(\pi\|\pi_1)+\frac12(\gamma-\gamma_1)^2,
\qquad
D_Y(y,y_1)
:=
D_{\mathrm{KL}}(\nu\|\nu_1)+D_{\mathrm{KL}}^{\mathrm{Bern}}(p\|p_1),
\]
and
\[
V_T
:=
\eta\sum_{t=1}^T
\Bigl(
\|g_t^\pi-g_{t-1}^\pi\|_\infty^2
+
|g_t^\gamma-g_{t-1}^\gamma|^2
+
\|g_t^\nu-g_{t-1}^\nu\|_\infty^2
+
\|g_t^p-g_{t-1}^p\|_2^2
\Bigr),
\]
\[
\Xi_T
:=
\frac{1}{2\eta}\sum_{t=1}^T
\Bigl(
D_{\mathrm{KL}}(\pi_t\|\pi_{t+1})
+
|\gamma_t-\gamma_{t+1}|^2
+
D_{\mathrm{KL}}(\nu_t\|\nu_{t+1})
+
D_{\mathrm{KL}}^{\mathrm{Bern}}(p_t\|p_{t+1})
\Bigr).
\]

Now, by Lemma \ref{lem:omd-standard}, all gradient-variation terms are bounded using the constant \(L_\delta\); hence
Lemma \ref{lem:gradients} applies with \(L=L_\delta\). Since \(\eta=1/(8L_\delta)\), Lemma \ref{lem:gradients} yields
\[
V_T\le \Xi_T.
\]
Therefore \eqref{eq:AB-pre-absorb} simplifies to
\begin{equation}
A_T(x)+B_T(y)
\le
\frac{D_X(x,x_1)+D_Y(y,y_1)}{\eta},
\qquad \forall x\in X,\ \forall y\in Y.
\label{eq:AB-master-delta}
\end{equation}

Now choose
\[
\hat x\in \arg\max_{x\in X}F(x,\bar y_T),
\qquad
\hat y\in \arg\min_{y\in Y}F(\bar x_T,y).
\]
Then
\[
\Gap(\bar z_T)=F(\hat x,\bar y_T)-F(\bar x_T,\hat y).
\]
Because \(F(\cdot,y)\) is concave and \(F(x,\cdot)\) is convex by Lemma \ref{lem:grad-var}, Jensen's inequality gives
\[
F(\hat x,\bar y_T)\le \frac1T\sum_{t=1}^T F(\hat x,y_t),
\qquad
F(\bar x_T,\hat y)\ge \frac1T\sum_{t=1}^T F(x_t,\hat y).
\]
Hence
\begin{align*}
T\,\Gap(\bar z_T)
&\le
\sum_{t=1}^T \bigl(F(\hat x,y_t)-F(x_t,\hat y)\bigr) \\
&=
\sum_{t=1}^T \bigl(F(\hat x,y_t)-F(x_t,y_t)\bigr)
+
\sum_{t=1}^T \bigl(F(x_t,y_t)-F(x_t,\hat y)\bigr) \\
&=
A_T(\hat x)+B_T(\hat y).
\end{align*}
Applying \eqref{eq:AB-master-delta} with \(x=\hat x\) and \(y=\hat y\), and then using the
definitions of \(D_X^{\mathrm{diam}}\) and \(D_Y^{\mathrm{diam}}\), gives
\[
T\,\Gap(\bar z_T)
\le
\frac{D_X(\hat x,x_1)+D_Y(\hat y,y_1)}{\eta}
\le
\frac{D_X^{\mathrm{diam}}+D_Y^{\mathrm{diam}}}{\eta}.
\]
Substituting \(\eta=1/(8L_\delta)\) yields
\[
\Gap(\bar z_T)
\le
\frac{8L_\delta\bigl(D_X^{\mathrm{diam}}+D_Y^{\mathrm{diam}}\bigr)}{T}
=
B_{T,\delta},
\]
which proves \((28)\).

\medskip
\noindent
\textbf{Step 2: proxy-policy guarantee.}
By definition of the duality gap,
\[
\Gap(\bar z_T)
=
\max_{x'\in X}F(x',\bar y_T)-\min_{y'\in Y}F(\bar x_T,y').
\]
Since \(\max_{x'}F(x',\bar y_T)\ge F(\bar x_T,\bar y_T)=F(\bar z_T)\), we obtain
\[
\Gap(\bar z_T)\ge F(\bar z_T)-\min_{y'\in Y}F(\bar x_T,y').
\]
Rearranging,
\begin{equation}
\min_{y'\in Y}F(\bar x_T,y')\ge F(\bar z_T)-\Gap(\bar z_T).
\label{eq:minY-lower-delta}
\end{equation}

Now
\[
g_{\mathrm{proxy}}^\delta(\bar\pi_T;\rho)
=
\max_{\gamma\in\Gamma}\min_{y\in Y}F((\bar\pi_T,\gamma),y).
\]
Since \(\bar x_T=(\bar\pi_T,\bar\gamma_T)\in X\),
\[
g_{\mathrm{proxy}}^\delta(\bar\pi_T;\rho)\ge \min_{y\in Y}F(\bar x_T,y).
\]
Combining with \eqref{eq:minY-lower-delta} and then using \((28)\),
\begin{equation}
g_{\mathrm{proxy}}^\delta(\bar\pi_T;\rho)
\ge
F(\bar z_T)-\Gap(\bar z_T)
\ge
F(\bar z_T)-B_{T,\delta}.
\label{eq:proxy-certificate-delta}
\end{equation}

\medskip
\noindent
\textbf{Step 3: transfer from the proxy game to the original hard objective.}
By Theorem~\ref{thm:duality}, under Assumption~\ref{ass:geometry} we have the pointwise bounds
\[
0\le g_H^\delta(\pi;\rho)-g_{\mathrm{proxy}}^\delta(\pi;\rho)\le \rho\gamma_{\min},
\qquad
0\le V^\delta(\rho)-V_{\mathrm{proxy}}^\delta(\rho)\le \rho\gamma_{\min},
\qquad
\forall \pi\in\Pi_\epsilon.
\]
By Theorem~1, the truncation error satisfies
\[
0\le g_H^\delta(\pi;\rho)-g_H(\pi;\rho)\le \delta,
\qquad
0\le V^\delta(\rho)-V(\rho)\le \delta,
\qquad
\forall \pi\in\Pi_\epsilon.
\]

From \eqref{eq:proxy-certificate-delta},
\[
g_{\mathrm{proxy}}^\delta(\bar\pi_T;\rho)\ge F(\bar z_T)-B_{T,\delta}.
\]
Hence
\[
g_H(\bar\pi_T;\rho)
\ge
g_{\mathrm{proxy}}^\delta(\bar\pi_T;\rho)-\delta
\ge
F(\bar z_T)-B_{T,\delta}-\delta.
\]
This gives the explicit robust-value certificate.

Next, using \(V(\rho)\le V^\delta(\rho)\) and
\(g_H(\bar\pi_T;\rho)\ge g_{\mathrm{proxy}}^\delta(\bar\pi_T;\rho)-\delta\), we get
\begin{align*}
V(\rho)-g_H(\bar\pi_T;\rho)
&\le
V^\delta(\rho)-g_{\mathrm{proxy}}^\delta(\bar\pi_T;\rho)+\delta \\
&=
\bigl(V^\delta(\rho)-V_{\mathrm{proxy}}^\delta(\rho)\bigr)
+
\bigl(V_{\mathrm{proxy}}^\delta(\rho)-g_{\mathrm{proxy}}^\delta(\bar\pi_T;\rho)\bigr)
+\delta.
\end{align*}
By Theorem~\ref{thm:duality},
\[
V^\delta(\rho)-V_{\mathrm{proxy}}^\delta(\rho)\le \rho\gamma_{\min}.
\]
Also,
\[
V_{\mathrm{proxy}}^\delta(\rho)
=
\max_{\pi\in\Pi_\epsilon}g_{\mathrm{proxy}}^\delta(\pi;\rho)
=
\max_{x\in X}\min_{y\in Y}F(x,y),
\]
so
\[
V_{\mathrm{proxy}}^\delta(\rho)\le \max_{x\in X}F(x,\bar y_T).
\]
And from the definition of \(g_{\mathrm{proxy}}^\delta\),
\[
g_{\mathrm{proxy}}^\delta(\bar\pi_T;\rho)\ge \min_{y\in Y}F(\bar x_T,y).
\]
Subtracting the last two displays,
\[
V_{\mathrm{proxy}}^\delta(\rho)-g_{\mathrm{proxy}}^\delta(\bar\pi_T;\rho)
\le
\max_{x\in X}F(x,\bar y_T)-\min_{y\in Y}F(\bar x_T,y)
=
\Gap(\bar z_T)
\le
B_{T,\delta}.
\]
Combining the last three inequalities proves
\[
V(\rho)-g_H(\bar\pi_T;\rho)\le B_{T,\delta}+\rho\gamma_{\min}+\delta.
\]
Since \(B_{T,\delta}=\mathcal{O}(L_\delta/T)\), this also yields
\[
V(\rho)-g_H(\bar\pi_T;\rho)
=
O\!\left(\frac{L_\delta}{T}+\rho\gamma_{\min}+\delta\right).
\]

\medskip
\noindent
\textbf{Step 4: sharper guarantee under Assumption~\ref{ass:geometry}.}
Assume now Assumption~\ref{ass:geometry} in addition. Then by Theorem~\ref{thm:duality},
\[
g_{\mathrm{proxy}}^\delta(\pi;\rho)=g_H^\delta(\pi;\rho),
\qquad
V_{\mathrm{proxy}}^\delta(\rho)=V^\delta(\rho),
\qquad
\forall \pi\in\Pi_\epsilon.
\]
Therefore,
\[
V(\rho)-g_H(\bar\pi_T;\rho)
\le
V^\delta(\rho)-g_H(\bar\pi_T;\rho)
\le
V^\delta(\rho)-g_H^\delta(\bar\pi_T;\rho)+\delta.
\]
Using exact recovery and the proxy optimality bound already established,
\[
V^\delta(\rho)-g_H^\delta(\bar\pi_T;\rho)
=
V_{\mathrm{proxy}}^\delta(\rho)-g_{\mathrm{proxy}}^\delta(\bar\pi_T;\rho)
\le
B_{T,\delta}.
\]
Hence
\[
V(\rho)-g_H(\bar\pi_T;\rho)\le B_{T,\delta}+\delta.
\]
 Since \(B_{T,\delta}=\mathcal{O}(L_\delta/T)\), this gives the sharper rate
\[
V(\rho)-g_H(\bar\pi_T;\rho)
=
O\!\left(\frac{L_\delta}{T}+\delta\right).
\]
\end{proof}

\subsection{Computational Cost and Dependence on the Parameters}\label{app:cost}
This subsection makes the cost paragraph of Section~\ref{sec:algorithm} explicit.

\paragraph{One iteration.} An iteration of Algorithm~\ref{alg:four-player-ogda} computes the four block gradients of Lemma~\ref{lem:gradients} and takes the closed-form steps \eqref{eq:update-pi}--\eqref{eq:update-p}. The gradients need two matrix--vector products with the $m\times m$ matrix $P_p$, the $d=\binom m2$ weights $w_{ij}=\pi_i\nu_j-\pi_j\nu_i$, and one sum. The steps for $\pi$ and $\nu$ are a multiplicative update followed by a KL projection onto the floored simplex (one scalar bisection), the step for $p$ is coordinatewise, and $\gamma$ is clipped. Time and memory are $O(m^2)$ per iteration.

\paragraph{Number of iterations.} By Theorem~\ref{thm:main-gap}, $T=\lceil 8L_\delta(D_X^{\rm diam}+D_Y^{\rm diam})/\varepsilon_{\rm acc}\rceil$ iterations give $\Gap(\bar z_T)\le\varepsilon_{\rm acc}$, so the total work is $O\big(m^2L_\delta(D_X^{\rm diam}+D_Y^{\rm diam})/\varepsilon_{\rm acc}\big)$, where, by Lemma~\ref{lem:grad-var},
\[
L_\delta=\max\Big\{1,\ \frac{\tau}{\epsilon\,\pi_{\rm ref,min}},\ \frac{\tau}{\beta\,\pi_{\rm ref,min}},\ \frac{\gamma_{\max}}{\delta(1-\delta)},\ \sqrt d,\ 2\sqrt d\log\frac{1-\delta}{\delta}\Big\}.
\]
For $\pi_1=\nu_1=\pi_{\rm ref}$ and $p_1=p^\star\in P_\delta$ the diameters are bounded by
\[
D_X^{\rm diam}\le\log\frac1{\pi_{\rm ref,min}}+\frac12\max\{\gamma_{\max}-\gamma_1,\ \gamma_1-\gamma_{\min}\}^2,
\qquad
D_Y^{\rm diam}\le\log\frac1{\pi_{\rm ref,min}}+d\log\frac1\delta,
\]
because $\mathrm{KL}(\pi\|\pi_{\rm ref})\le\log(1/\pi_{\rm ref,min})$ and $\dBern(p\|q)\le\log\big(1/\min\{q,1-q\}\big)\le\log(1/\delta)$ for $q\in[\delta,1-\delta]$.

\paragraph{Dependence on the parameters.}
\emph{Truncation $\delta$.} $L_\delta$ grows like $1/\delta$ through the term $\gamma_{\max}/(\delta(1-\delta))$, while the truncation error is $\delta$. Balancing the two gives $\delta=O(T^{-1/2})$, as in Theorem~\ref{thm:main-gap}.

\emph{Radius $\rho$.} A dual maximizer lies in $[0,1/\rho]$ (Lemma~\ref{lem:gamma-upper}), so $\gamma_{\max}\ge1/\rho$ is enough (Assumption~\ref{ass:geometry}-(3)). If $\gamma_1\le\gamma_{\max}/2$, then $L_\delta D_X^{\rm diam}\ge\gamma_{\max}^3/(8\delta(1-\delta))\ge1/(8\rho^3\delta(1-\delta))$, so the bound on $T$ grows at least like $\rho^{-3}$ as $\rho\to0$. This is a property of the bound, not a lower bound for the algorithm, and the cost of one iteration does not depend on $\rho$.

\emph{Curvature floor $\gamma_{\min}$.} It enters the guarantee through the term $\rho\gamma_{\min}$, and Assumption~\ref{ass:geometry}-(4) requires $\tau\gamma_{\min}\ge\frac14$, so $\rho\gamma_{\min}\ge\rho/(4\tau)$.

\emph{Temperature $\tau$.} Assumption~\ref{ass:geometry}-(2) requires $\beta\le e^{-1/\tau}$, which places the follower's Gibbs best response in $\mathcal N_\beta$ (Step~1 of the proof of Theorem~\ref{thm:duality}). Hence $L_\delta\ge\tau/(\beta\pi_{\rm ref,min})\ge\tau e^{1/\tau}/\pi_{\rm ref,min}$, which grows exponentially as $\tau\to0$.

\section{Comparisons with Standard NLHF}\label{sec:vsnlhf}
Compared with standard NLHF \citep{munos2023nash} and its following studies \citep{Zhang2025INPO,tiapkin2025accelerating,nayak2025achieving,zhou2025extragradient,ye2024online}, the convergence analysis in our setting is substantially more
involved. In nominal NLHF, the preference kernel is fixed, and the optimization problem is a
regularized two-player zero-sum game between the main policy and the adversarial policy.
Once the game is written in saddle-point form, the KL regularizers provide strong concavity in
the leader policy and strong convexity in the follower policy, so one can analyze mirror descent
or optimistic no-regret dynamics directly on a fixed convex--concave game.
By contrast, our robust formulation introduces uncertainty in the preference kernel itself. The
resulting objective is no longer a standard two-player nominal game, but a distributionally
robust max-min problem in which the adversary chooses both a competing policy and a
plausible preference kernel. This creates two new difficulties that do not appear in standard
NLHF.

First, the original hard-constrained problem does not retain the clean convex-concave structure
of nominal NLHF. If one eliminates the follower using the Gibbs best response, the reduced
objective becomes a concave minimization problem over the kernel ambiguity set, which is
globally hard in general. On the other hand, if one keeps both the adversarial policy and the
preference kernel as optimization variables, the inner minimization is jointly nonconvex because
the win-rate term is bilinear in the pair \((\nu,p)\). Therefore, unlike in standard NLHF, one
cannot analyze the original hard-constrained robust objective directly using a standard minimax
or saddle-point argument.

A further difficulty is that the kernel block acquires its curvature through the dual variable
\(\gamma\). In the nominal two-player game, the KL regularizers already provide fixed curvature in
the policy blocks. In the four-player robust game, by contrast, the \(p\)-block is strongly convex
only with modulus proportional to \(\gamma\). This is why the lower bound \(\gamma_{\min}>0\) and
the condition \(\tau \gamma_{\min}\ge 1/4\) enter the analysis: they ensure that the KL curvature in
the kernel block is strong enough to dominate the mixed bilinear interaction between the
follower policy and the adversarial kernel. Without such a condition, the grouped
\((\pi,\gamma)\)-vs.-\((\nu,p)\) game need not be convex-concave, and the standard
optimistic mirror-descent analysis does not go through.

Our gradient-variation analysis is also substantially more coupled than in NLHF. In nominal
NLHF, once the preference kernel is fixed, the gradient of each player depends only on the other
policy. In our setting, the gradients evolve through a chain of dependencies: $\gamma \rightarrow p \rightarrow (\pi,\nu),$
so the variation of one block can depend on the movement of several other blocks. In
particular, the \(p\)-gradient depends on the current policies and on the dual variable, while the
\(\gamma\)-gradient is exactly the KL-budget slack of the current kernel. Establishing convergence
therefore requires a sharper self-bounding argument that simultaneously controls all four regret
terms and shows that the total gradient variation can be absorbed by the negative Bregman
stability terms. This absorption step has no analogue in the standard NLHF proof.

\section{Auxiliary Lemmas}
\label{app:aux_lemmas}
We prove several auxiliary lemmas required for the theorems.

\begin{lemma}[Universal upper bound on the dual multiplier]
\label{lem:gamma-upper}
Fix \((\pi,\nu)\in\Pi_\epsilon\times\Delta_m\), and define
\[
q_{\pi,\nu}(\gamma)
:=
\min_{p\in P_\delta}
\left\{
J(\pi,\nu;p)+\gamma\bigl(\BKL(p\|p^\star)-\rho\bigr)
\right\},
\qquad \gamma\ge 0.
\]
Then every maximizer of \(q_{\pi,\nu}\) over \([0,\infty)\) lies in \([0,1/\rho]\).
In particular, one may choose the algorithmic upper bound \(\gamma_{\max}\) so that
\(\gamma_{\max}\ge 1/\rho\).
\end{lemma}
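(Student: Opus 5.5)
The plan is the standard Slater-point bound on dual multipliers, specialised to the fact that the payoff $J$ has range of width at most $1$ in $p$. The dual function $q_{\pi,\nu}$ is concave on $[0,\infty)$, being a pointwise minimum of functions affine in $\gamma$. I will compare its value at an arbitrary $\gamma>1/\rho$ with its value at $\gamma=0$ and show that $q_{\pi,\nu}(\gamma)<q_{\pi,\nu}(0)$. Then no such $\gamma$ can be a maximizer.

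The first ingredient is an upper bound at the Slater point. Since $p^\star\in P_\delta$ (Assumption~\ref{ass:geometry}-(1)) and $\BKL(p^\star\|p^\star)=0$, choosing $p=p^\star$ inside the minimum gives
\[
q_{\pi,\nu}(\gamma)\le J(\pi,\nu;p^\star)-\gamma\rho \qquad\text{for every } \gamma\ge 0 .
\]

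The second ingredient is a lower bound at $\gamma=0$. Here $q_{\pi,\nu}(0)=\min_{p\in P_\delta}J(\pi,\nu;p)$. The proof of Theorem~\ref{thm:truncate} shows that $J(\pi,\nu;\cdot)$ depends on $p$ only through the linear term $\sum_{(i,j)\in E}w_{ij}(\pi,\nu)p_{ij}$, with $\sum_{(i,j)\in E}|w_{ij}|\le 1-\sum_i\pi_i\nu_i$. Hence for any $p,p'\in P_\delta$ we get $|J(\pi,\nu;p)-J(\pi,\nu;p')|\le \|p-p'\|_\infty(1-\langle\pi,\nu\rangle)$. Since $\|p-p'\|_\infty\le 1-2\delta<1$, this is strictly less than $1$. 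In particular $q_{\pi,\nu}(0)> J(\pi,\nu;p^\star)-1$.

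Combining the two bounds, for any $\gamma>1/\rho$,
\[
q_{\pi,\nu}(\gamma)\le J(\pi,\nu;p^\star)-\gamma\rho< J(\pi,\nu;p^\star)-1< q_{\pi,\nu}(0).
\]
So $\gamma$ is not a maximizer, and every maximizer lies in $[0,1/\rho]$. A maximizer exists because $q_{\pi,\nu}$ is upper semicontinuous and concave and tends to $-\infty$ by the Slater bound; it is finite-valued since $P_\delta$ is compact and $F$ is continuous there. The "in particular" clause then follows immediately.

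The main obstacle is the strictness needed when $\gamma$ is exactly $1/\rho$. A boundary maximizer is harmless, because the claim allows the closed interval. I only need a strict inequality for $\gamma>1/\rho$, and it already follows from $\gamma\rho>1\ge$ the range bound, even without $\delta>0$.
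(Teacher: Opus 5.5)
Your proposal is correct and matches the paper's proof: the Slater-point bound $q_{\pi,\nu}(\gamma)\le J(\pi,\nu;p^\star)-\gamma\rho$, the fact that the $p$-dependent part of $J$ has range at most $1$ so that $J(\pi,\nu;p^\star)-q_{\pi,\nu}(0)\le 1$, and the resulting strict inequality $q_{\pi,\nu}(\gamma)<q_{\pi,\nu}(0)$ for every $\gamma>1/\rho$. Your sharper range bound via the $w_{ij}$ decomposition and your existence-of-maximizer remark are harmless additions that the paper's argument does not need.
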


\begin{proof}
Because \(p^\star\in P_\delta\), for every \(\gamma\ge 0\),
\[
q_{\pi,\nu}(\gamma)
\le
J(\pi,\nu;p^\star)-\rho\gamma.
\]
At \(\gamma=0\),
\[
q_{\pi,\nu}(0)=\min_{p\in P_\delta}J(\pi,\nu;p).
\]
For fixed \((\pi,\nu)\), the only part of \(J(\pi,\nu;p)\) that depends on \(p\) is the win-rate
term, which always lies in \([0,1]\). Hence
\[
J(\pi,\nu;p^\star)-q_{\pi,\nu}(0)\le 1.
\]
Therefore, if \(\gamma>1/\rho\), then
\[
q_{\pi,\nu}(\gamma)
\le
J(\pi,\nu;p^\star)-\rho\gamma
<
J(\pi,\nu;p^\star)-1
\le
q_{\pi,\nu}(0),
\]
so such a \(\gamma\) cannot be optimal.
\end{proof}

\begin{lemma}[Blockwise OMD inequalities]
\label{lem:block-omd}
For any comparators \(\pi\in\Pi_\epsilon\), \(\gamma\in\Gamma\), \(\nu\in\Nbeta\), and
\(p\in P_\delta\), the iterates produced by Algorithm \ref{alg:four-player-ogda} satisfy
\begin{align}
\sum_{t=1}^T \langle g_t^\pi,\pi-\pi_t\rangle
&\le
\frac{\KL(\pi\|\pi_1)}{\eta}
+
\eta\sum_{t=1}^T \|g_t^\pi-g_{t-1}^\pi\|_\infty^2
-
\frac{1}{2\eta}\sum_{t=1}^T \KL(\pi_t\|\pi_{t+1}),
\label{eq:block-pi}
\\
\sum_{t=1}^T g_t^\gamma(\gamma-\gamma_t)
&\le
\frac{(\gamma-\gamma_1)^2}{2\eta}
+
\eta\sum_{t=1}^T |g_t^\gamma-g_{t-1}^\gamma|^2
-
\frac{1}{2\eta}\sum_{t=1}^T |\gamma_t-\gamma_{t+1}|^2,
\label{eq:block-gamma}
\\
\sum_{t=1}^T \langle g_t^\nu,\nu_t-\nu\rangle
&\le
\frac{\KL(\nu\|\nu_1)}{\eta}
+
\eta\sum_{t=1}^T \|g_t^\nu-g_{t-1}^\nu\|_\infty^2
-
\frac{1}{2\eta}\sum_{t=1}^T \KL(\nu_t\|\nu_{t+1}),
\label{eq:block-nu}
\\
\sum_{t=1}^T \langle g_t^p,p_t-p\rangle
&\le
\frac{\BKL(p\|p_1)}{\eta}
+
\eta\sum_{t=1}^T \|g_t^p-g_{t-1}^p\|_2^2
-
\frac{1}{2\eta}\sum_{t=1}^T \BKL(p_t\|p_{t+1}).
\label{eq:block-p}
\end{align}
\end{lemma}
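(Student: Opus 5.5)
The statement to prove is Lemma~\ref{lem:block-omd}: the four blockwise optimistic mirror descent inequalities. I would prove each block separately with the standard one-step analysis of optimistic mirror descent in its ``single-sequence'' form, since the update $\widehat g_t=2g_t-g_{t-1}$ is exactly the gradient-extrapolation variant. I treat the $\pi$-block in detail; the $\nu$ and $p$ blocks are its mirror images with signs flipped for minimization, and the $\gamma$ block is the Euclidean specialization. For the maximization block, the first-order optimality condition of \eqref{eq:update-pi} over the convex set $\Pi_\epsilon$, combined with the three-point identity for the KL divergence, gives for every comparator $\pi\in\Pi_\epsilon$
\[
\eta\langle \widehat g_t^\pi,\pi-\pi_{t+1}\rangle \le \KL(\pi\|\pi_t)-\KL(\pi\|\pi_{t+1})-\KL(\pi_{t+1}\|\pi_t).
\]
The same holds for $\nu$ and $p$ with $D_\nu,D_p$ (minimization, so $\langle\widehat g_t,\pi_{t+1}-\pi\rangle$). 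For $\gamma$ it holds with $\tfrac12(\cdot)^2$ via the projection property.

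Next I would rewrite the instantaneous regret. Write $\langle g_t,\pi-\pi_t\rangle$ as
\[
\langle g_t,\pi-\pi_{t+1}\rangle+\langle g_t,\pi_{t+1}-\pi_t\rangle,
\]
and replace $g_t$ by $\widehat g_t-(g_t-g_{t-1})$ in the first term. This yields
\[
\langle \widehat g_t,\pi-\pi_{t+1}\rangle-\langle g_t-g_{t-1},\pi-\pi_{t+1}\rangle+\langle g_t,\pi_{t+1}-\pi_t\rangle .
\]
Summing over $t$, I would regroup the middle terms as a telescoping sum plus cross terms:
\[
\langle g_t-g_{t-1},\pi_{t+1}-\pi\rangle = \bigl[\langle g_t,\pi_{t+1}-\pi\rangle-\langle g_{t-1},\pi_t-\pi\rangle\bigr]+\langle g_{t-1},\pi_t-\pi_{t+1}\rangle .
\]
Combined with the last term, this gives $\langle g_t-g_{t-1},\pi_{t+1}-\pi_t\rangle$ plus a boundary term. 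The boundary term vanishes at $t=1$ because $g_0=g_1$. At $t=T$ it equals $\langle g_T,\pi_{T+1}-\pi\rangle$, which I would control together with the last Bregman term, or absorb by an equivalent reindexing. If this bookkeeping becomes awkward, I would instead use the classical two-sequence form of Rakhlin--Sridharan with the auxiliary iterate, for which the cross term is $\langle g_t-g_{t-1},\pi_t-\pi_{t+1}\rangle$ directly.

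The key estimate is the cross term. By H\"older and Young,
\[
\langle g_t-g_{t-1},\pi_{t+1}-\pi_t\rangle\le \eta\|g_t-g_{t-1}\|_\infty^2+\tfrac{1}{4\eta}\|\pi_{t+1}-\pi_t\|_1^2 .
\]
Pinsker's inequality, $\tfrac12\|\pi_{t+1}-\pi_t\|_1^2\le\KL(\pi_{t+1}\|\pi_t)$, lets half of the negative term $-\tfrac1\eta\KL(\pi_{t+1}\|\pi_t)$ absorb this. The leftover $-\tfrac1{2\eta}\KL(\pi_{t+1}\|\pi_t)$ must be converted to the stated $-\tfrac{1}{2\eta}\KL(\pi_t\|\pi_{t+1})$. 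I expect this to be the main obstacle, since KL is not symmetric. I would handle it either by choosing which three-point identity to apply so that the stability term appears in the needed order, or by using that both divergences dominate the same $\tfrac12\|\cdot\|_1^2$ and stating the stability term in norm form first. The norm form is all that Lemma~\ref{lem:gradients} ultimately needs.

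For the $p$ block the analogous steps hold with the $\ell_2$ norm. Bernoulli Pinsker, $\BKL(p'\|p)\ge 2\|p'-p\|_2^2$, is strictly stronger than needed, so the Young step with $\|\cdot\|_2$ goes through. For $\gamma$ everything is exact Euclidean algebra. Telescoping the Bregman terms leaves $D(\cdot,\cdot_1)/\eta$, minus a nonnegative term that I drop, which gives the four displayed inequalities.
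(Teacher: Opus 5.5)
Your argument fails at the boundary bookkeeping, and the failure cannot be repaired there. Summing your per-step identity gives exactly
\[
\sum_{t=1}^T\langle g_t,\pi-\pi_t\rangle=\sum_{t=1}^T\langle \widehat g_t,\pi-\pi_{t+1}\rangle+\sum_{t=1}^T\langle g_t-g_{t-1},\pi_{t+1}-\pi_t\rangle+\langle g_T,\pi_{T+1}-\pi\rangle+\langle g_1,\pi-\pi_1\rangle .
\]
The convention $g_0=g_1$ kills the cross term at $t=1$. It does not kill the boundary term $-\langle g_0,\pi_1-\pi\rangle=\langle g_1,\pi-\pi_1\rangle$, which is the entire first-round regret. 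The other boundary term, $\langle g_T,\pi_{T+1}-\pi\rangle$, can be traded against the leftover $-\KL(\pi\|\pi_{T+1})/\eta$ only at the price $\frac{\eta}{2}\|g_T\|_\infty^2$, which is not a gradient-variation term. No reindexing removes these terms, because the displayed inequalities are false as stated. Take $T=1$ and the comparator $\pi=\pi_1$. The left side is $0$, while, since $g_0=g_1$, the right side is $-\frac{1}{2\eta}\KL(\pi_1\|\pi_2)<0$ as soon as the first step moves (e.g.\ whenever $g_1^\pi$ is not a constant vector and the floor of $\Pi_\epsilon$ is inactive at $\pi_2$). The same happens in every block, and not only at $T=1$. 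With $\pi_1=\nu_1$, $\epsilon=\beta$ and $p_1=p^\star$, the iterates keep $\pi_t=\nu_t$ and $p_t\equiv p^\star$, so $g_t^\gamma\equiv-\rho$ and the $\gamma$-steps are plain gradient steps. Then for the comparator $\gamma=\gamma_1-T\eta\rho\ge\gamma_{\min}$, the left side of the $\gamma$-inequality exceeds its right side by $\eta\rho^2T$. Your two-sequence fallback does not rescue this. Rakhlin--Sridharan bound the regret of hint-corrected points computed from an auxiliary sequence, with stability terms between $z_t$ and the auxiliary points, not between consecutive iterates. In the unconstrained Euclidean case, the update of Algorithm~\ref{alg:four-player-ogda} is that method started from the shifted point $w_0=z_1+\eta g_1$ (minimization form). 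The resulting change in the initial Bregman term, $\langle g_1,z_1-u\rangle+\frac{\eta}{2}\|g_1\|^2$, is exactly the first-round regret missing from the claim.

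The paper's proof takes the same route but only cites Lemma~\ref{lem:omd-standard}. That lemma is stated without proof and fails on the same instance: $T=1$, $u=z_1$ and $g_0=g_1$ give $0\le-\frac{1}{2\eta}D_h(z_2,z_1)$. So the defect is in the statement, not in your execution. Your ingredients (three-point inequality, the identity above, H\"older--Young, Pinsker) do prove a weaker bound. It has the extra terms $\langle g_1,\pi-\pi_1\rangle+\langle g_T,\pi_{T+1}-\pi\rangle-\frac1\eta\KL(\pi\|\pi_{T+1})$ on the right, and its stability term is $\frac{1}{2\eta}\KL(\pi_{t+1}\|\pi_t)$. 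You are right that there is no general inequality between the two KL orientations. Stating stability in norm form therefore proves a different inequality, although one that suffices for Lemma~\ref{lem:absorb}. Likewise, for $\gamma$ only $\frac{1}{4\eta}|\gamma_{t+1}-\gamma_t|^2$ survives Young with coefficient $\eta$ on the variation term, not the stated $\frac{1}{2\eta}$. All block gradients are bounded on the compact sets $X$ and $Y$, so the extra boundary terms are $O(1)$. Downstream they only add a $T$-independent constant to the numerator of the gap bound in Theorem~\ref{thm:main-gap}. Neither the lemma as written nor, via this route, the exact constant $8L_\delta(D_X^{\mathrm{diam}}+D_Y^{\mathrm{diam}})/T$ can be obtained.
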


\begin{proof}
Apply Lemma \ref{lem:omd-standard} blockwise:
\begin{itemize}
    \item to the \(\pi\)-block with entropy geometry and maximization;
    \item to the \(\gamma\)-block with Euclidean geometry and maximization;
    \item to the \(\nu\)-block with entropy geometry and minimization;
    \item to the \(p\)-block with binary-entropy geometry and minimization.
\end{itemize}
The dual norms are \(\ell_\infty\) for \(\pi\) and \(\nu\), the Euclidean norm for \(\gamma\),
and \(\ell_2\) for \(p\).
\end{proof}

The constant-step OGDA proof requires a uniform bound on blockwise gradient variation.

\begin{lemma}[Gradient-variation bounds]
\label{lem:grad-var}
Let
\[
\pi_{\mathrm{ref},\min}:=\min_{1\le i\le m}\pi_{\mathrm{ref},i},
\qquad
L_\pi:=\frac{\tau}{\epsilon \pi_{\mathrm{ref},\min}},
\qquad
L_\nu:=\frac{\tau}{\beta \pi_{\mathrm{ref},\min}},
\qquad
L_p:=\frac{\gamma_{\max}}{\delta(1-\delta)},
\]
and
\[
M_\delta:=2\log\frac{1-\delta}{\delta}.
\]
For \(t\ge 2\), let
\[
z_t=(\pi_t,\gamma_t,\nu_t,p_t)\in X\times Y.
\]
Then the block gradients satisfy
\begin{align}
\|g_t^\pi-g_{t-1}^\pi\|_\infty
&\le
L_\pi \|\pi_t-\pi_{t-1}\|_1
+
\|\nu_t-\nu_{t-1}\|_1
+
\|p_t-p_{t-1}\|_2,
\label{eq:var-pi}
\\
|g_t^\gamma-g_{t-1}^\gamma|
&\le
\sqrt d\,M_\delta\,\|p_t-p_{t-1}\|_2,
\label{eq:var-gamma}
\\
\|g_t^\nu-g_{t-1}^\nu\|_\infty
&\le
\|\pi_t-\pi_{t-1}\|_1
+
L_\nu \|\nu_t-\nu_{t-1}\|_1
+
\|p_t-p_{t-1}\|_2,
\label{eq:var-nu}
\\
\|g_t^p-g_{t-1}^p\|_2
&\le
\sqrt d\,\|\pi_t-\pi_{t-1}\|_1
+
\sqrt d\,\|\nu_t-\nu_{t-1}\|_1
+
L_p \|p_t-p_{t-1}\|_2
+
\sqrt d\,M_\delta\,|\gamma_t-\gamma_{t-1}|.
\label{eq:var-p}
\end{align}
Consequently, if we define for \(t\ge 2\)
\[
S_t^\pi:=\|\pi_t-\pi_{t-1}\|_1^2,
\qquad
S_t^\gamma:=|\gamma_t-\gamma_{t-1}|^2,
\qquad
S_t^\nu:=\|\nu_t-\nu_{t-1}\|_1^2,
\qquad
S_t^p:=\|p_t-p_{t-1}\|_2^2,
\]
and
\[
L
:=
\max\left\{
1,\,
L_\pi,\,
L_\nu,\,
L_p,\,
\sqrt d,\,
\sqrt d\,M_\delta
\right\},
\]
then
\begin{align}
\|g_t^\pi-g_{t-1}^\pi\|_\infty^2
&\le
3L^2\bigl(S_t^\pi+S_t^\nu+S_t^p\bigr),
\label{eq:sq-var-pi}
\\
|g_t^\gamma-g_{t-1}^\gamma|^2
&\le
L^2 S_t^p,
\label{eq:sq-var-gamma}
\\
\|g_t^\nu-g_{t-1}^\nu\|_\infty^2
&\le
3L^2\bigl(S_t^\pi+S_t^\nu+S_t^p\bigr),
\label{eq:sq-var-nu}
\\
\|g_t^p-g_{t-1}^p\|_2^2
&\le
4L^2\bigl(S_t^\pi+S_t^\gamma+S_t^\nu+S_t^p\bigr).
\label{eq:sq-var-p}
\end{align}
\end{lemma}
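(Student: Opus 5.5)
To prove Lemma~\ref{lem:grad-var}, I will write out the four block gradients of $F$ explicitly and then bound each difference by a triangle inequality, splitting it into changes caused by each block. With $W(\pi,\nu,p)=\sum_{i,j}\pi_i\nu_jP_p(i\succ j)$ and the matrix $A_p:=(P_p(i\succ j))_{ij}$, the gradients are:
\[
g^\pi=A_p\nu-\tau(\log(\pi/\pi_{\rm ref})+\mathbf 1),
\qquad
g^\nu=A_p^\top\pi+\tau(\log(\nu/\pi_{\rm ref})+\mathbf 1),
\]
\[
g^\gamma=\BKL(p\|p^\star)-\rho,
\qquad
g^p_{ij}=w_{ij}(\pi,\nu)+\gamma\Bigl(\log\tfrac{p_{ij}}{1-p_{ij}}-\log\tfrac{p^\star_{ij}}{1-p^\star_{ij}}\Bigr).
\]
The formula for $g^p$ uses the linear decomposition of $W$ in $p$ from the proof of Theorem~\ref{thm:truncate}, so $\partial_{p_{ij}}W=w_{ij}=\pi_i\nu_j-\pi_j\nu_i$.

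\textbf{The $\pi$ and $\nu$ blocks.} I will bound $\|A_{p_t}\nu_t-A_{p_{t-1}}\nu_{t-1}\|_\infty$ by splitting it into $\|A_{p_t}(\nu_t-\nu_{t-1})\|_\infty+\|(A_{p_t}-A_{p_{t-1}})\nu_{t-1}\|_\infty$. The first piece is at most $\|\nu_t-\nu_{t-1}\|_1$ because the entries of $A_p$ lie in $[0,1]$. For the second piece, each row of $A_{p_t}-A_{p_{t-1}}$ has entries $\pm(p_t-p_{t-1})_e$, and weighting by a probability vector gives at most $\|p_t-p_{t-1}\|_\infty\le\|p_t-p_{t-1}\|_2$. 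For the entropy term, the mean value theorem on $\log$ over the floor region $\pi_i\ge\epsilon\pi_{{\rm ref},i}\ge\epsilon\pi_{\rm ref,min}$ gives
\[
\tau|\log\pi_{t,i}-\log\pi_{t-1,i}|\le\frac{\tau}{\epsilon\pi_{\rm ref,min}}|\pi_{t,i}-\pi_{t-1,i}|\le L_\pi\|\pi_t-\pi_{t-1}\|_1.
\]
This yields \eqref{eq:var-pi}. The $\nu$ block is symmetric, using the floor $\beta$ from $\mathcal N_\beta$, and yields \eqref{eq:var-nu}.

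\textbf{The $\gamma$ block.} By the mean value theorem, $|g^\gamma_t-g^\gamma_{t-1}|\le\sup\|\nabla_p\BKL\|_2\,\|p_t-p_{t-1}\|_2$. Each coordinate of this gradient is a difference of two logits of points in $[\delta,1-\delta]$, so its magnitude is at most $2\log\frac{1-\delta}{\delta}=M_\delta$. Hence $\|\nabla_p\BKL\|_2\le\sqrt d M_\delta$, which gives \eqref{eq:var-gamma}.

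\textbf{The $p$ block.} I will split $g^p$ into the $w$ part and the $\gamma\cdot$logit part. For the $w$ part, $|w_{ij}(\pi_t,\nu_t)-w_{ij}(\pi_{t-1},\nu_{t-1})|\le\|\pi_t-\pi_{t-1}\|_1+\|\nu_t-\nu_{t-1}\|_1$ coordinatewise, and taking $\ell_2$ over $d$ coordinates costs a factor $\sqrt d$. One could refine this, but the stated bound already allows $\sqrt d$. For the logit part, I write
\[
\gamma_t\ell(p_t)-\gamma_{t-1}\ell(p_{t-1})=\gamma_t(\ell(p_t)-\ell(p_{t-1}))+(\gamma_t-\gamma_{t-1})\ell(p_{t-1}),
\]
where $\ell$ is the centered logit. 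The logit has derivative $1/(u(1-u))\le1/(\delta(1-\delta))$ on $[\delta,1-\delta]$, so the first term is at most $L_p\|p_t-p_{t-1}\|_2$. The second term is at most $\sqrt dM_\delta|\gamma_t-\gamma_{t-1}|$, as in the $\gamma$ block. Together these give \eqref{eq:var-p}.

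\textbf{Squared bounds.} These follow from $(\sum_{k=1}^n a_k)^2\le n\sum_k a_k^2$, with every coefficient replaced by $L$: three terms for the $\pi$ and $\nu$ blocks, one term for the $\gamma$ block, and four terms for the $p$ block.

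The main obstacle is bookkeeping rather than depth: getting the $\ell_\infty/\ell_1/\ell_2$ pairings right in the $(A_{p_t}-A_{p_{t-1}})\nu$ term, and using the floors to make the log terms Lipschitz. The floors are essential here, since without them $\log$ is not Lipschitz near the boundary.
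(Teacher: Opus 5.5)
Your proposal is correct and follows the paper's proof essentially step for step. You use the same triangle-inequality split of the win-rate term into a $\nu$-change and a $p$-change, and the same mean-value bound on $\log$ via the floors $\epsilon\pi_{\mathrm{ref},\min}$ and $\beta\pi_{\mathrm{ref},\min}$. The $\gamma$ block rests on the same coordinatewise logit bound $M_\delta$ with a $\sqrt d$ factor, and the $p$ block on the same split $\gamma_t(\ell(p_t)-\ell(p_{t-1}))+(\gamma_t-\gamma_{t-1})\ell(p_{t-1})$ with $\gamma_t\le\gamma_{\max}$, followed by $(\sum_k a_k)^2\le n\sum_k a_k^2$; the only differences are cosmetic (matrix notation $A_p$ and a multivariate rather than coordinatewise mean-value step in the $\gamma$ block).
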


\begin{proof}
We prove the first-order bounds and then square them.

\textbf{The \(\pi\)-block.}
From \eqref{eq:gpi} of Lemma \ref{lem:gradients},
\[
g^\pi(i)=\sum_{j=1}^m \nu_j P_p(i\succ j)-\tau\left(\log\frac{\pi_i}{\pi_{\mathrm{ref},i}}+1\right).
\]
For the win-rate component, since \(\sum_j \nu_j=1\),
\begin{align*}
&
\left|
\sum_j \nu_{t,j}P_{p_t}(i\succ j)-\sum_j \nu_{t-1,j}P_{p_{t-1}}(i\succ j)
\right|
\\
&\qquad\le
\left|
\sum_j (\nu_{t,j}-\nu_{t-1,j})P_{p_t}(i\succ j)
\right|
+
\left|
\sum_j \nu_{t-1,j}\bigl(P_{p_t}(i\succ j)-P_{p_{t-1}}(i\succ j)\bigr)
\right|
\\
&\qquad\le
\|\nu_t-\nu_{t-1}\|_1
+
\|p_t-p_{t-1}\|_\infty
\le
\|\nu_t-\nu_{t-1}\|_1+\|p_t-p_{t-1}\|_2.
\end{align*}
For the KL term, since every \(\pi_t(i)\ge \epsilon \pi_{\mathrm{ref},i}\ge \epsilon
\pi_{\mathrm{ref},\min}\),
\[
\left|
\log\frac{\pi_t(i)}{\pi_{\mathrm{ref},i}}
-
\log\frac{\pi_{t-1}(i)}{\pi_{\mathrm{ref},i}}
\right|
=
|\log \pi_t(i)-\log \pi_{t-1}(i)|
\le
\frac{|\pi_t(i)-\pi_{t-1}(i)|}{\epsilon \pi_{\mathrm{ref},\min}}
\]
by the mean-value theorem. Taking the maximum over \(i\) proves \eqref{eq:var-pi}.

\textbf{The \(\gamma\)-block.}
By \eqref{eq:ggamma} of Lemma \ref{lem:gradients},
\[
g^\gamma=\BKL(p\|p^\star)-\rho.
\]
For each coordinate \(e=(i,j)\in E\), the derivative of the scalar Bernoulli-KL term with respect
to \(p_e\) is
\[
h_e'(u)
=
\log\frac{u(1-p_e^\star)}{p_e^\star(1-u)}.
\]
Since both \(u\) and \(p_e^\star\) lie in \([\delta,1-\delta]\), we have
\[
|h_e'(u)|\le |\logit(u)|+|\logit(p_e^\star)|\le 2\log\frac{1-\delta}{\delta}=M_\delta.
\]
Therefore, by the mean-value theorem and Cauchy--Schwarz,
\[
|g_t^\gamma-g_{t-1}^\gamma|
\le
\sum_{e=1}^d M_\delta\,|p_{t,e}-p_{t-1,e}|
\le
\sqrt d\,M_\delta\,\|p_t-p_{t-1}\|_2,
\]
which proves \eqref{eq:var-gamma}.

\textbf{The \(\nu\)-block.}
The proof is identical to the \(\pi\)-block, now using the floor
\(\nu_{t,j}\ge \beta \pi_{\mathrm{ref},j}\ge \beta \pi_{\mathrm{ref},\min}\). This gives
\eqref{eq:var-nu}.

\textbf{The \(p\)-block.}
From \eqref{eq:gp} of Lemma \ref{lem:gradients},
\[
g^p_{ij}
=
w_{ij}(\pi,\nu)
+
\gamma\,h_{ij}(p_{ij}),
\qquad
h_{ij}(u):=\log\frac{u(1-p_{ij}^\star)}{p_{ij}^\star(1-u)}.
\]
For the antisymmetric weights, fix \((i,j)\in E\). Then
\begin{align*}
|w_{ij}(\pi_t,\nu_t)-w_{ij}(\pi_{t-1},\nu_{t-1})|
&=
|\pi_{t,i}\nu_{t,j}-\pi_{t,j}\nu_{t,i}
-\pi_{t-1,i}\nu_{t-1,j}
+\pi_{t-1,j}\nu_{t-1,i}|
\\
&\le
|\pi_{t,i}-\pi_{t-1,i}|
+
|\pi_{t,j}-\pi_{t-1,j}|
+
|\nu_{t,i}-\nu_{t-1,i}|
+
|\nu_{t,j}-\nu_{t-1,j}|
\\
&\le
\|\pi_t-\pi_{t-1}\|_1+\|\nu_t-\nu_{t-1}\|_1.
\end{align*}
Hence
\[
\|w(\pi_t,\nu_t)-w(\pi_{t-1},\nu_{t-1})\|_2
\le
\sqrt d\,\|\pi_t-\pi_{t-1}\|_1
+
\sqrt d\,\|\nu_t-\nu_{t-1}\|_1.
\]

For the KL-log term,
\[
|h_{ij}'(u)|=\frac{1}{u(1-u)}\le \frac{1}{\delta(1-\delta)},
\]
so
\[
\|h(p_t)-h(p_{t-1})\|_2
\le
\frac{1}{\delta(1-\delta)}\|p_t-p_{t-1}\|_2.
\]
Also \(|h_{ij}(u)|\le M_\delta\) on \([\delta,1-\delta]\), so
\[
\|h(p_{t-1})\|_2\le \sqrt d\,M_\delta.
\]
Therefore
\begin{align*}
\|g_t^p-g_{t-1}^p\|_2
&\le
\|w(\pi_t,\nu_t)-w(\pi_{t-1},\nu_{t-1})\|_2
+
\gamma_{\max}\|h(p_t)-h(p_{t-1})\|_2
+
|\gamma_t-\gamma_{t-1}|\,\|h(p_{t-1})\|_2
\\
&\le
\sqrt d\,\|\pi_t-\pi_{t-1}\|_1
+
\sqrt d\,\|\nu_t-\nu_{t-1}\|_1
+
L_p\|p_t-p_{t-1}\|_2
+
\sqrt d\,M_\delta\,|\gamma_t-\gamma_{t-1}|,
\end{align*}
which proves \eqref{eq:var-p}.

Finally, \eqref{eq:sq-var-pi}--\eqref{eq:sq-var-p} follow from
\[
(a+b+c)^2\le 3(a^2+b^2+c^2),
\qquad
(a+b+c+d)^2\le 4(a^2+b^2+c^2+d^2).
\]
\end{proof}

\begin{lemma}[Gradient variation is absorbed by the stability terms]
\label{lem:absorb}
Let
\[
V_T
:=
\eta\sum_{t=1}^T
\left(
\|g_t^\pi-g_{t-1}^\pi\|_\infty^2
+
|g_t^\gamma-g_{t-1}^\gamma|^2
+
\|g_t^\nu-g_{t-1}^\nu\|_\infty^2
+
\|g_t^p-g_{t-1}^p\|_2^2
\right),
\]
and
\[
\Xi_T
:=
\frac{1}{2\eta}\sum_{t=1}^T
\left(
\KL(\pi_t\|\pi_{t+1})
+
|\gamma_t-\gamma_{t+1}|^2
+
\KL(\nu_t\|\nu_{t+1})
+
\BKL(p_t\|p_{t+1})
\right).
\]
If
\begin{equation}
\eta\le \frac{1}{8L},
\label{eq:step-cond}
\end{equation}
where \(L\) is the constant from Lemma \ref{lem:grad-var}, then
\begin{equation}
V_T\le \Xi_T.
\label{eq:absorb-main}
\end{equation}
\end{lemma}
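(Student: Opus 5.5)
The plan is a direct term-by-term comparison: bound every squared gradient variation in $V_T$ by squared iterate movements using Lemma~\ref{lem:grad-var}, and lower-bound every Bregman term in $\Xi_T$ by the same squared movements using Pinsker-type strong convexity. Then the step-size condition \eqref{eq:step-cond} closes the inequality.

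\emph{Index bookkeeping.} Algorithm~\ref{alg:four-player-ogda} sets $g_0^\bullet=g_1^\bullet$, so the $t=1$ summand of $V_T$ vanishes. For $t\ge 2$, Lemma~\ref{lem:grad-var} applies. Summing \eqref{eq:sq-var-pi}--\eqref{eq:sq-var-p} and collecting coefficients gives, for each $t\ge2$,
\[
\|g_t^\pi-g_{t-1}^\pi\|_\infty^2+|g_t^\gamma-g_{t-1}^\gamma|^2+\|g_t^\nu-g_{t-1}^\nu\|_\infty^2+\|g_t^p-g_{t-1}^p\|_2^2
\le 11L^2\bigl(S_t^\pi+S_t^\gamma+S_t^\nu+S_t^p\bigr).
\]
The coefficients are $10$ on $S^\pi$ and $S^\nu$, $11$ on $S^p$, and $4$ on $S^\gamma$, each bounded by $11$. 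Hence
\[
V_T\le 11\eta L^2\sum_{t=2}^T\bigl(S_t^\pi+S_t^\gamma+S_t^\nu+S_t^p\bigr)
=11\eta L^2\sum_{s=1}^{T-1}\Bigl(\|\pi_{s+1}-\pi_s\|_1^2+|\gamma_{s+1}-\gamma_s|^2+\|\nu_{s+1}-\nu_s\|_1^2+\|p_{s+1}-p_s\|_2^2\Bigr).
\]

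\emph{Lower bound on $\Xi_T$.} By Pinsker's inequality, $\KL(\pi_s\|\pi_{s+1})\ge\frac12\|\pi_s-\pi_{s+1}\|_1^2$, and likewise for $\nu$. The coordinatewise Bernoulli Pinsker gives $\BKL(p_s\|p_{s+1})\ge 2\|p_s-p_{s+1}\|_2^2$. These are the same inequalities used in Step~4 of the proof of Theorem~\ref{thm:duality}. Because every summand of $\Xi_T$ is nonnegative, we may drop the $s=T$ term. This yields
\[
\Xi_T\ge\frac{1}{4\eta}\sum_{s=1}^{T-1}\Bigl(\|\pi_{s+1}-\pi_s\|_1^2+|\gamma_{s+1}-\gamma_s|^2+\|\nu_{s+1}-\nu_s\|_1^2+\|p_{s+1}-p_s\|_2^2\Bigr),
\]
using $\min\{\tfrac12,1,\tfrac12,2\}=\tfrac12$.

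\emph{Closing.} It therefore suffices to have $11\eta L^2\le\frac{1}{4\eta}$, that is, $\eta^2\le 1/(44L^2)$. Under $\eta\le 1/(8L)$ we get $11\eta^2L^2\le 11/64<1/4$, which proves \eqref{eq:absorb-main}. The only step needing care is the index shift. The variation at time $t$ involves the movement $z_t-z_{t-1}$, whereas $\Xi_T$ involves $z_t\to z_{t+1}$. The two sums match after reindexing, using $g_0=g_1$ at the start and nonnegativity of the surplus $s=T$ term at the end. Checking the constant $11$ against the factor $8$ is the main point where the argument could fail, and the margin is comfortable.
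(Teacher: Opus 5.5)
Your proof is correct and follows the paper's argument essentially step for step. You discard the $t=1$ term via $g_0^\bullet=g_1^\bullet$, bound $V_T\le 11\eta L^2\sum_{t=2}^T(S_t^\pi+S_t^\gamma+S_t^\nu+S_t^p)$ using Lemma~\ref{lem:grad-var}, lower-bound $\Xi_T$ by $\frac{1}{4\eta}$ times the index-shifted movement sum via (Bernoulli) Pinsker, and close with $11\eta^2L^2\le\frac14$; your explicit coefficient count ($10,4,10,11$) and the reindexing $s=t-1$ simply spell out what the paper's proof states more tersely.
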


\begin{proof}
Because \(g_0^\bullet=g_1^\bullet\), the \(t=1\) contribution to \(V_T\) is zero. Thus
we sum from \(t=2\) onward. By Lemma \ref{lem:grad-var},
\[
V_T
\le
11L^2\eta\sum_{t=2}^T \bigl(S_t^\pi+S_t^\gamma+S_t^\nu+S_t^p\bigr).
\]
On the other hand, Pinsker's inequality and the Bernoulli Pinsker bound give
\[
\KL(\pi_t\|\pi_{t+1})\ge \frac12\|\pi_t-\pi_{t+1}\|_1^2=\frac12 S_{t+1}^\pi,
\]
\[
\KL(\nu_t\|\nu_{t+1})\ge \frac12\|\nu_t-\nu_{t+1}\|_1^2=\frac12 S_{t+1}^\nu,
\]
\[
\BKL(p_t\|p_{t+1})\ge 2\|p_t-p_{t+1}\|_2^2 = 2S_{t+1}^p.
\]
Hence
\begin{align*}
\Xi_T
&\ge
\frac{1}{4\eta}\sum_{t=1}^T S_{t+1}^\pi
+
\frac{1}{2\eta}\sum_{t=1}^T S_{t+1}^\gamma
+
\frac{1}{4\eta}\sum_{t=1}^T S_{t+1}^\nu
+
\frac{1}{\eta}\sum_{t=1}^T S_{t+1}^p
\\
&\ge
\frac{1}{4\eta}\sum_{t=2}^{T+1}\bigl(S_t^\pi+S_t^\gamma+S_t^\nu+S_t^p\bigr)
\\
&\ge
\frac{1}{4\eta}\sum_{t=2}^T \bigl(S_t^\pi+S_t^\gamma+S_t^\nu+S_t^p\bigr).
\end{align*}
Therefore \(V_T\le \Xi_T\) whenever
\[
11L^2\eta \le \frac{1}{4\eta},
\qquad\text{i.e.}\qquad
\eta \le \frac{1}{2\sqrt{11}\,L}.
\]
The stronger condition \eqref{eq:step-cond} is sufficient.
\end{proof}

\begin{lemma}[Standard optimistic mirror-descent bound]
\label{lem:omd-standard}
Let \(h\) be a mirror map with Bregman divergence \(D_h\), let \(\eta>0\), and let
\(\widehat g_t=2g_t-g_{t-1}\).
For minimization, the update
\[
z_{t+1}
=
\argmin_{z\in Z}
\left\{
\langle \widehat g_t,z\rangle+\frac{1}{\eta}D_h(z,z_t)
\right\}
\]
satisfies, for every comparator \(u\in Z\),
\[
\sum_{t=1}^T \langle g_t,z_t-u\rangle
\le
\frac{D_h(u,z_1)}{\eta}
+
\eta \sum_{t=1}^T \|g_t-g_{t-1}\|_*^2
-
\frac{1}{2\eta}\sum_{t=1}^T D_h(z_{t+1},z_t).
\]
For maximization, the analogous update with \(\argmax\) satisfies
\[
\sum_{t=1}^T \langle g_t,u-z_t\rangle
\le
\frac{D_h(u,z_1)}{\eta}
+
\eta \sum_{t=1}^T \|g_t-g_{t-1}\|_*^2
-
\frac{1}{2\eta}\sum_{t=1}^T D_h(z_{t+1},z_t).
\]
\end{lemma}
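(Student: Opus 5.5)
This is the standard one-step optimistic mirror descent (OMD) analysis, applied to the minimization form. Everything else follows by sign flip: the maximization version comes from applying the minimization bound to $-g_t$, since $\|{-g_t}+g_{t-1}\|_*=\|g_t-g_{t-1}\|_*$. I assume the mirror map $h$ is $1$-strongly convex with respect to $\|\cdot\|$ on $Z$, so that $D_h(z,z')\ge\frac12\|z-z'\|^2$. For the entropy and Bernoulli-entropy blocks this is exactly the Pinsker inequalities already used in the paper.

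\textbf{Step 1: first-order optimality.} The key fact is the three-point inequality. For $z_{t+1}=\argmin_{z\in Z}\{\langle \widehat g_t,z\rangle+\frac1\eta D_h(z,z_t)\}$ and any $u\in Z$,
\[
\langle \widehat g_t, z_{t+1}-u\rangle\le \tfrac1\eta\bigl(D_h(u,z_t)-D_h(u,z_{t+1})-D_h(z_{t+1},z_t)\bigr).
\]
I would obtain this from the variational inequality for the minimizer together with the Bregman three-point identity.

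\textbf{Step 2: decomposition.} Write $\widehat g_t=g_t+(g_t-g_{t-1})$ and split
\[
\langle g_t,z_t-u\rangle=\langle g_t,z_{t+1}-u\rangle+\langle g_t,z_t-z_{t+1}\rangle.
\]
Substituting Step 1 into the first term gives
\[
\langle g_t,z_t-u\rangle\le \tfrac1\eta\bigl(D_h(u,z_t)-D_h(u,z_{t+1})-D_h(z_{t+1},z_t)\bigr)+\langle g_t-g_{t-1},\,z_t-z_{t+1}\rangle+\langle g_{t-1},z_t-z_{t+1}\rangle\cdot 0 .
\]
Here one must be careful. The clean route is to write $\langle g_t,z_t-u\rangle=\langle \widehat g_t,z_{t+1}-u\rangle+\langle g_t,z_t-z_{t+1}\rangle-\langle g_t-g_{t-1},z_{t+1}-u\rangle$. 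The last term is then telescoped in the form $\langle g_t-g_{t-1},u-z_{t+1}\rangle=\langle g_t-g_{t-1},u-z_t\rangle+\langle g_t-g_{t-1},z_t-z_{t+1}\rangle$.

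\textbf{Step 3: the main obstacle.} Bookkeeping that leaves only $\langle g_t-g_{t-1},\cdot\rangle$ paired with a single displacement is the main obstacle. A single-sequence "$2g_t-g_{t-1}$" method lacks the auxiliary half-step of the two-sequence optimistic mirror descent (OMD) of Rakhlin--Sridharan. I would therefore first try the standard trick of comparing $z_{t+1}$ with the "ghost" point $\tilde z_{t+1}=\argmin\{\langle g_t,z\rangle+\frac1\eta D_h(z,z_t)\}$. I would then bound $\langle g_t-g_{t-1},z_t-z_{t+1}\rangle$ via Hölder and Young:
\[
\|g_t-g_{t-1}\|_*\|z_t-z_{t+1}\|\le \eta\|g_t-g_{t-1}\|_*^2+\tfrac1{4\eta}\|z_t-z_{t+1}\|^2\le \eta\|g_t-g_{t-1}\|_*^2+\tfrac1{2\eta}D_h(z_{t+1},z_t).
\]
This consumes half of the $-\frac1\eta D_h(z_{t+1},z_t)$ from Step 1 and leaves $-\frac1{2\eta}D_h(z_{t+1},z_t)$, matching the claimed coefficient. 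The leftover cross terms of the form $\langle g_t-g_{t-1},u-z_t\rangle$ then need to telescope across $t$, as the successive differences $\langle g_{t}, u-z_{t+1}\rangle-\langle g_{t-1},u-z_t\rangle$. Summing, the boundary term at $t=1$ vanishes because $g_0=g_1$. The boundary term at $t=T$ is absorbed using $D_h(u,z_{T+1})\ge0$ and one more Young step.

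\textbf{Step 4: summing.} Summing over $t=1,\dots,T$, the Bregman terms $D_h(u,z_t)-D_h(u,z_{t+1})$ telescope to at most $D_h(u,z_1)/\eta$. This yields the stated bound with the variation term $\eta\sum_t\|g_t-g_{t-1}\|_*^2$ and the stability term $-\frac1{2\eta}\sum_t D_h(z_{t+1},z_t)$. If the single-sequence telescoping leaves a residual boundary term, I would absorb it by loosening the constant on the variation sum. That constant is harmless downstream, because Lemma~\ref{lem:absorb} has ample slack ($\eta\le 1/(8L)$ versus the needed $1/(2\sqrt{11}L)$).
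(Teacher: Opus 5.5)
There is a genuine gap, and better bookkeeping cannot close it, because the inequality as stated is false. The paper invokes it as standard and gives no proof.

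\textbf{Where your argument breaks.} The problem is the term $\langle g_t, z_t-z_{t+1}\rangle$ in your ``clean route'' identity. You split it as $\langle g_t-g_{t-1}, z_t-z_{t+1}\rangle+\langle g_{t-1}, z_t-z_{t+1}\rangle$, keep the first piece for the Young step, and multiply the second by $0$. But $\langle g_{t-1}, z_t-z_{t+1}\rangle$ is first-order, roughly $\eta\langle g_{t-1},\widehat g_t\rangle$, and is not controlled by $\|g_t-g_{t-1}\|_*$. The same term reappears in Step 3: your cross terms are not successive differences, since
\[
\langle g_t, u-z_{t+1}\rangle-\langle g_{t-1}, u-z_t\rangle=\langle g_t-g_{t-1}, u-z_t\rangle+\langle g_t, z_t-z_{t+1}\rangle .
\]
The ghost point $\tilde z_{t+1}$ is never used. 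The fallback in Step 4, enlarging the constant on $\eta\sum_t\|g_t-g_{t-1}\|_*^2$ to absorb a residual, cannot work either: the residual is linear in $g$, not quadratic in $g_t-g_{t-1}$.

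\textbf{Counterexample.} Take $h(z)=\frac12 z^2$ on $Z=[-100,100]$, $\eta=1$, $z_1=0$, and $g_t=1$ for all $t\ge 0$. Then $g_0=g_1$, as in Algorithm~\ref{alg:four-player-ogda}, and every variation term is zero. The iterates are $z_t=-(t-1)$. For $T=10$ and $u=-10$, the left side is $\sum_{t=1}^{10}(11-t)=55$, while the right side is $50-2.5=47.5$. Even $D_h(u,z_1)/\eta=50<55$, so no constant on the variation sum helps. With $T=1$ and $u=z_1$, the claim reads $0\le-\frac{1}{2\eta}D_h(z_2,z_1)$, which also fails. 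The underlying reason is that $z_1$ is chosen with no prediction of $g_1$; setting $g_0:=g_1$ only fixes $\widehat g_1$.

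\textbf{What your steps do prove.} Your Steps 1 and 3 establish the shifted (forward--reflected) bound. Set $a_t:=\langle g_t-g_{t-1}, z_t-u\rangle$. Then exactly
\[
\langle g_{t+1}, z_{t+1}-u\rangle=\langle\widehat g_t, z_{t+1}-u\rangle+a_{t+1}-a_t-\langle g_t-g_{t-1}, z_{t+1}-z_t\rangle .
\]
Here $a_1=0$, and $a_{T+1}$ is absorbed by $-D_h(u,z_{T+1})/\eta$ via Young and strong convexity. Your Young step handles the last term. This gives
\[
\sum_{t=1}^T\langle g_{t+1}, z_{t+1}-u\rangle\le\frac{D_h(u,z_1)}{\eta}+\eta\sum_{t=1}^{T+1}\|g_t-g_{t-1}\|_*^2-\frac{1}{2\eta}\sum_{t=1}^T D_h(z_{t+1},z_t).
\]
For the unshifted regret, the stated bound holds only after adding the first-order term $\langle g_1, z_1-u\rangle$ and stopping the stability sum at $T-1$. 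Downstream, the gap bound of Theorem~\ref{thm:main-gap} should then be stated for the average of $z_2,\dots,z_{T+1}$, or carry an extra $O(1/T)$ boundary term.
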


\begin{lemma}[Partial gradients]
\label{lem:gradients}
For \(z=(\pi,\gamma,\nu,p)\in X\times Y\), the block gradients of \(F\) are:
\begin{align}
g^\pi(i)
&=
\sum_{j=1}^m \nu_j\,P_p(i\succ j)
-
\tau\left(\log\frac{\pi_i}{\pi_{\mathrm{ref},i}}+1\right),
\qquad i=1,\dots,m,
\label{eq:gpi}
\\
g^\gamma
&=
\BKL(p\|p^\star)-\rho,
\label{eq:ggamma}
\\
g^\nu(j)
&=
\sum_{i=1}^m \pi_i\,P_p(i\succ j)
+
\tau\left(\log\frac{\nu_j}{\pi_{\mathrm{ref},j}}+1\right),
\qquad j=1,\dots,m,
\label{eq:gnu}
\\
g^p_{ij}
&=
w_{ij}(\pi,\nu)
+
\gamma\log\frac{p_{ij}(1-p^\star_{ij})}{p^\star_{ij}(1-p_{ij})},
\qquad (i,j)\in E.
\label{eq:gp}
\end{align}
\end{lemma}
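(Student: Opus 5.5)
This is the partial-gradients lemma (Lemma~\ref{lem:gradients}), and the plan is to compute each block derivative directly from the definition of $F$ in \eqref{eq:F-def}, with $J$ given by \eqref{eq:J-def}. The only point needing care is the $p$-block: the kernel enters $J$ through the antisymmetric parametrization $P_p$, so the chain rule must account for both ordered pairs $(i,j)$ and $(j,i)$.

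\emph{The $\pi$- and $\nu$-blocks.} For $\pi$, the win-rate term $\sum_{i,j}\pi_i\nu_jP_p(i\succ j)$ is linear in $\pi$, with partial derivative $\sum_j\nu_jP_p(i\succ j)$. The regularizer is $-\tau\KL(\pi\|\pi_{\rm ref})=-\tau\sum_i\pi_i\log(\pi_i/\pi_{{\rm ref},i})$, whose partial derivative in $\pi_i$ is $-\tau(\log(\pi_i/\pi_{{\rm ref},i})+1)$. Adding the two gives \eqref{eq:gpi}. The same computation in $\nu$ gives $\sum_i\pi_iP_p(i\succ j)$ from the win-rate term. The regularizer now enters with a plus sign, $+\tau\KL(\nu\|\pi_{\rm ref})$, so its derivative contributes $+\tau(\log(\nu_j/\pi_{{\rm ref},j})+1)$, yielding \eqref{eq:gnu}. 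These are gradients in the ambient coordinates, without projection onto the simplex, which is the form the mirror updates \eqref{eq:update-pi} and \eqref{eq:update-nu} use.

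\emph{The $\gamma$-block.} $F$ is affine in $\gamma$, so \eqref{eq:ggamma} is immediate: $g^\gamma=\BKL(p\|p^\star)-\rho$.

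\emph{The $p$-block.} I would reuse the decomposition from the proof of Theorem~\ref{thm:truncate}. Write
\[
\sum_{i,j}\pi_i\nu_jP_p(i\succ j)=\tfrac12\langle\pi,\nu\rangle+\sum_{i<j}\pi_j\nu_i+\sum_{(i,j)\in E}w_{ij}(\pi,\nu)\,p_{ij}.
\]
This holds because each unordered pair $\{i,j\}$ with $i<j$ contributes $\pi_i\nu_jp_{ij}+\pi_j\nu_i(1-p_{ij})$, and the diagonal terms contribute $\tfrac12\pi_i\nu_i$. The first two terms on the right do not depend on $p$, so $\partial/\partial p_{ij}$ of the win-rate term is $w_{ij}(\pi,\nu)=\pi_i\nu_j-\pi_j\nu_i$. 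The KL regularizers do not depend on $p$. The dual term is $\gamma\,\BKL(p\|p^\star)$, which is separable across coordinates. Its coordinate derivative is
\[
\gamma\Big(\log\tfrac{p_{ij}}{p^\star_{ij}}+1-\log\tfrac{1-p_{ij}}{1-p^\star_{ij}}-1\Big)=\gamma\log\frac{p_{ij}(1-p^\star_{ij})}{p^\star_{ij}(1-p_{ij})}.
\]
Summing the two contributions gives \eqref{eq:gp}.

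All expressions are well defined on $X\times Y$: the floors $\pi_i\ge\epsilon\pi_{{\rm ref},i}$ and $\nu_j\ge\beta\pi_{{\rm ref},j}$ keep the logarithms in the $\pi$- and $\nu$-blocks finite, and $p\in P_\delta$ keeps $p_{ij}$ and $1-p_{ij}$ bounded away from zero.
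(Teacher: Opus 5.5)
Your proposal is correct and takes the same route as the paper: you differentiate $F$ blockwise, and for the $p$-block you use the antisymmetric decomposition of the win-rate term, whose $p_{ij}$-coefficient is $w_{ij}(\pi,\nu)$, together with the coordinatewise derivative of $\BKL(p\|p^\star)$. Your remark that the floors on $\pi,\nu$ and the box $P_\delta$ keep every logarithm finite is a useful detail that the paper leaves implicit.
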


\begin{proof}
Differentiate \eqref{eq:F-def}. The \(\pi\)- and \(\nu\)-gradients are obtained from the linear
win-rate term and the KL regularizers. The \(\gamma\)-gradient is immediate because
\(F\) is affine in \(\gamma\). For \eqref{eq:gp}, the coefficient of \(p_{ij}\) in the win-rate
decomposition is \(w_{ij}(\pi,\nu)\), and
\[
\frac{\partial}{\partial p_{ij}}
\BKL(p\|p^\star)
=
\log\frac{p_{ij}(1-p^\star_{ij})}{p^\star_{ij}(1-p_{ij})}.
\qedhere
\]
\end{proof}

\input{Experiments_imdb_appendix}
\input{Experiments_llm_appendix}

\newpage

\end{document}

%% file: exp_new.tex
\newcommand{\piref}{\pi_{\mathrm{ref}}}
\newcommand{\piL}{\pi_{\mathrm{L}}}
\newcommand{\piF}{\pi_{\mathrm{F}}}
\newcommand{\Pstar}{P^{*}}
\newcommand{\E}{\mathbb{E}}

\newcommand{\DKL}{D_{\mathrm{KL}}}
\newcommand{\Dbern}{D_{\mathrm{KL}}^{Bern}}
\newcommand{\sigmoid}{\sigma}
\newcommand{\clamp}{\mathrm{clip}}

We then evaluate our approach across three settings of increasing scale:
(i)~a synthetic tabular game that isolates the fragile-consensus phenomenon and has a known optimal robust value, run with robust training that solves the inner problem exactly,
(ii)~a best-of-$m$ reranking task on the IMDb dataset with heterogeneous annotator groups and explicitly perturbed kernels, run with a nested variant of OGDA (Appendix~\ref{app:imdb_optim}), and
(iii)~a summarization task with LLM-generated responses and external LLM judgments, run with the sampled analogue of Section~\ref{sec:algorithm}.
None of them verifies the guarantees of Section~\ref{sec:algorithm}.

\subsection{Tabular Game}
\label{sec:exp-tabular}

We first consider a collection of independently generated tabular prompts, each with a finite response set of size $m = 3$.
For each prompt~$x$, we construct a \emph{fragile-consensus} preference kernel by drawing a slim margin $s_x \sim \mathrm{Unif}[0.015, 0.040]$ and a solid margin $S_x \sim \mathrm{Unif}[0.25, 0.40]$, and setting
\begin{equation}\label{eq:kernel-tabular}
P^\star_x \;=\;
\begin{pmatrix}
  1/2 & 1/2 + s_x & 1/2 - S_x \\[3pt]
  1/2 - s_x & 1/2 & 1/2 + S_x \\[3pt]
  1/2 + S_x & 1/2 - S_x & 1/2
\end{pmatrix}.
\end{equation}
We then apply a random permutation of the response labels so that the identity of the fragile action varies across prompts.
This construction produces a cyclic preference game with one easily reversible edge (KL cost $\approx 2s_x^2 \le 0.003$) and two expensive-to-reverse edges (KL cost $\approx 2S_x^2/(S_x(1-S_x)) \ge 0.5$), which is precisely the regime in which robustness matters most: the adversary can flip the cheap edge within a small budget $\rho$, reversing the dominance ordering, while the expensive edges are untouchable.
We use a uniform reference policy and set $\tau = 0.05$.

We compare:
(i)~\textbf{NashMD}, the nominal two-player baseline operating on the fixed kernel $P^\star_x$ (optimistic mirror updates of the two policies, with the kernel frozen);
(ii)~\textbf{Four-player robust training} with training radii $\rho \in \{0.10, 0.20, 0.40\}$; and
(iii)~the \textbf{optimal robust value} $V(\rho)$, computed by golden-section search over the leader policy ($g_H$ is concave in $\pi$).
Both start from $\pi_1=\pi_{\rm ref}$ (NashMD also from $\nu_1=\pi_{\rm ref}$) and run for 2000 iterations; the step size is $0.30$ for NashMD and $0.05$ for robust training. Robust training is not covered by the guarantees of Section~\ref{sec:algorithm}.
We evaluate $g_H(\bar\pi_k;\rho)$, the worst-case value over the ball of radius $\rho$, at the averaged policy $\bar\pi_k=\frac1k\sum_{t\le k}\pi_t$.

Figure~\ref{fig:tabular}(a) shows $g_H(\bar\pi_k;0.2)$ over the iterations.
NashMD levels off about 0.032 below the optimal robust value $V(0.2)$, confirming that the nominal Nash policy is suboptimal for the robust objective.
The robust policy at $\rho = 0.2$ gets close to $V(0.2)$ (mean gap 0.0014 at iteration 2000).
At $\rho = 0.1$ (under-budgeted), the policy ends at a lower value, since it optimizes for a smaller perturbation set.
At $\rho = 0.4$ (over-budgeted), the policy is more conservative and also ends below the $\rho = 0.2$ value.
Figure~\ref{fig:tabular}(b) shows the paired per-prompt difference to NashMD of the final policies, with paired-bootstrap $95\%$ intervals over the $60$ prompts. The advantage does not simply grow with $\rho_{\rm eval}$: it peaks somewhat above the training radius (at $0.3$, $0.3$ and $0.45$ for $\rho_{\rm train}=0.1,0.2,0.4$) and then decreases slightly, and the policy trained at $\rho_{\rm train}=0.4$ is worse than NashMD on average at $\rho_{\rm eval}=0.05$ and $0.10$ (better on 0\% and 25\% of the prompts), as is the one trained at $\rho_{\rm train}=0.2$ at $\rho_{\rm eval}=0.05$. At the matched radius $\rho_{\rm eval}=\rho_{\rm train}$ the robust policy improves $g_H$ on all $60$ prompts, by 0.0160, 0.0309 and 0.0490 for $\rho_{\rm train}=0.1,0.2,0.4$ (half-widths of the intervals at most 0.0016), closing 91\%, 96\% and 97\% of the gap $V-g_H(\mathrm{NashMD})$, and it improves $g_H$ on all $60$ prompts at every larger evaluation radius tested.
This further confirms the enhanced robustness of our approach against preference uncertainty.

\begin{figure}[ht]
    \centering
    \par
        \includegraphics[width=\linewidth]{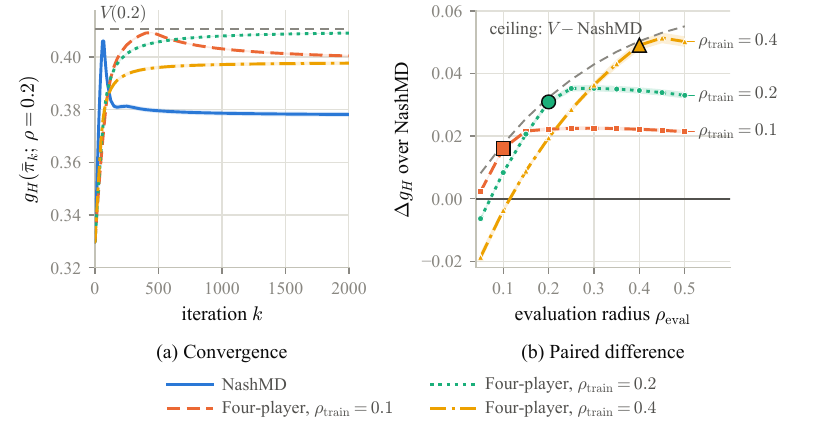}
        \caption{Tabular games. (a) $g_H(\bar\pi_k;0.2)$ of NashMD and of robust training with $\rho_{\rm train}\in\{0.1,0.2,0.4\}$ (running averages; mean and $95\%$ bootstrap interval over prompts) against $V(0.2)$. (b) Paired difference to NashMD of the final averaged policies versus the evaluation radius ($95\%$ paired-bootstrap intervals); large markers: matched radius; dashed: ceiling $V(\rho_{\rm eval})-g_H(\mathrm{NashMD})$.}
    \label{fig:tabular}
\end{figure}

\subsection{Best-of-$m$ IMDb Continuation Task}\label{sec:exp-imdb}
We next evaluate the method on a best-of-$m$ reranking task, where candidate responses are generated by a pretrained LLM and preference kernels are constructed from heterogeneous annotator groups.

For each of $N = 8000$ prompts from the IMDb review dataset~\citep{maas-EtAl:2011:ACL-HLT2011}, we pre-generate $m = 8$ candidate continuations using \texttt{google/flan-t5-large}~\citep{flan}. We generate the preferences based on three annotator groups:  \textbf{Comprehensive/analytical $(p_1)$} prefers detailed, well-structured responses;  \textbf{Efficient/concise $(p_2)$} prefers short, direct responses; \textbf{Balanced/nuanced $(p_3)$} prefers hedged, multi-perspective responses (scoring rules in Appendix~\ref{app:imdb_kernel}). We set the nominal kernel as the uniform mixture of the three preferences: $P_{p^\star}(a\succ b)=(\sum^3_{i=1} P_i(a\succ b))/3$, and construct the uncertainty set centered at it as $\mathcal{P}_{\rho_{\mathrm{eval}}}\triangleq \{ (w_1,w_2,w_3)\in\Delta_3: D_{\text{KL}}((w_1,w_2,w_3)||(1/3,1/3,1/3))\leq \rho_{\mathrm{eval}}\}$ to model preference uncertainty.

We then train and compare the robust policy $\pi_R$ (trained at radius $\rho_{\mathrm{train}}$) against the NashMD baseline $\pi_N$ by their pairwise \emph{win rate} under perturbed kernels.
For each evaluation radius $\rho_{\mathrm{eval}}$, we sample $M = 600$ kernels $\{P_j\}_{j=1}^M$ uniformly from $\mathcal{P}_{\rho_{\mathrm{eval}}}$ and compute the win rate:
\begin{equation}\label{eq:winrate-metric}
  \mathrm{WinRate}(\pi_R \text{ vs } \pi_N;\, \rho_{\mathrm{eval}})
  \;:=\; \bigl|\{j \in [M] : P_{p_j}(\pi_R \succ \pi_N) \ge 0.5\}\bigr|\big/M,
\end{equation}
where $P_{p_j}(\pi_R \succ \pi_N) = \sum_{i,k} \pi_R(i)\, \pi_N(k)\, p_j(i \succ k)$ is the probability that a response drawn from $\pi_R$ is preferred over one drawn from $\pi_N$ under kernel $p_j$.
A win rate above $50\%$ means the robust policy is preferred to NashMD under the majority of plausible kernels. Table~\ref{tab:imdb-winrate} (Appendix~\ref{app:imdb_eval}) reports the win rate of the robust policy against NashMD, averaged over all $8000$ prompts, for training radii $\rho_{\mathrm{train}} \in \{0.01, 0.02, 0.05, 0.10, 0.15\}$ and evaluation radii $\rho_{\mathrm{eval}} \in \{0.05, 0.10, 0.20, 0.30, 0.50\}$.

All $25$ entries of Table~\ref{tab:imdb-winrate} are at least $0.766$, also far above the training radius; within a column the win rate decreases as $\rho_{\rm train}$ grows ($0.986$ to $0.934$ at $\rho_{\rm eval}=0.05$). In plain terms, a small training radius already protects against kernels far from the nominal one (Table~\ref{tab:imdb-winrate}, $\tau=0.01$: $0.986$ to $0.844$ as $\rho_{\rm eval}$ grows from $0.05$ to $0.50$), and hedging harder buys nothing on these kernels. The gain depends on the KL temperature (Appendix~\ref{app:imdb_eval}) and is largest at $\tau=0.05$ (above $97\%$ for $\rho_{\rm eval}\le0.30$): at very small $\tau$ NashMD is already nearly deterministic, at large $\tau$ both policies are nearly uniform.

\subsection{Summarization Task with LLMs}
\label{sec:exp-tldr}
We further evaluate our performance on the TL;DR summarization task~\citep{stiennon2020learning}. The nominal preference kernel is a non-Bradley-Terry (non-BT) pairwise judge, $P^{\mathrm{pair}}_\phi(y_a\succ y_b\mid x)=\sigma\big(L^{\mathrm{pair}}(x,y_a,y_b)\big)$, instantiated with PairRM~\citep{jiang2023llmblender}: since each candidate's score is conditioned on its opponent, the kernel is not constrained to admit any scalar reward decomposition. On real model outputs, the closest Bradley-Terry fit to this kernel leaves a mean residual of $0.384$ logits per edge ($3.10$ percentage points), with $1.4\%$ of triples outright cyclic, confirming the nominal kernel used here is measurably non-BT (construction and diagnostic in Appendix~\ref{app:llm_nonbt}).
To evaluate robustness under preference uncertainty, we use three external LLMs from different model families: \texttt{DeepSeek-R1-Distill-Qwen-32B}~\citep{deepseek2025}, \texttt{Gemma-4-31B}~\citep{gemmateam2024}, and \texttt{Nemotron-3-Nano-Omni}~\citep{nvidia2026nemotron3nanoomni}, as judgment at evaluation time.  Since each external judge will induce its own pairwise preference (from their own reasoning) that differs from the nominal one, these judgments simulate a deployment scenario where the user population's preferences diverge from the training-time annotation distribution.

The results are presented in Table~\ref{tab:tldr-external}. Each entry is the share of evaluation prompts on which the judge prefers the robust policy's summary to NashMD's; the judges never entered training, so a win means the policy holds up under a preference it was not trained on. Most external judges (five of six configurations) yield a win rate above $0.50$ over NashMD, confirming that the robust policy remains competitive under this non-BT nominal kernel even though, unlike prior work, our framework does not require a Bradley-Terry reward model. The five wins range from $0.577$ to $0.732$, the sixth entry ($0.492$) is a statistical tie, and no training radius is best for every judge: the smaller wins under DeepSeek-R1, the larger under Gemma-4 and Nemotron-3. Appendix~\ref{app:llm_bt_extra} reports additional results with a Bradley-Terry nominal kernel across both fine-tuning backbones.

\vspace{-0.15in}
\begin{table}[!htbp]
\centering
\caption{Win rate against NashMD under external LLM judges, evaluated on $1000$ TL;DR prompts, with a non-BT (PairRM) nominal kernel.}\label{tab:tldr-external}
\renewcommand{\arraystretch}{1.2}
\begin{tabular}{ccccc}
\toprule
\multirow{2}{*}[-0.5ex]{\textbf{Fine-Tuning Model}} & \multirow{2}{*}[-0.5ex]{$\rho_{\text{train}}$} & \multicolumn{3}{c}{\textbf{Win Rate under Evaluation Model}} \\
\cmidrule(lr){3-5}
& & DeepSeek-R1 & Gemma-4 & Nemotron-3 \\
\midrule
\multirow{2}{*}{Qwen-1.5B} & 0.01 & \textbf{0.708} & \textbf{0.577} & 0.492 \\
                           & 0.05 & \textbf{0.625} & \textbf{0.615} & \textbf{0.732} \\
\bottomrule
\end{tabular}
\end{table}

%% file: Experiments_imdb_appendix.tex
\section{IMDb Continuation Task: Experiment Details}
\label{app:imdb_details}

\subsection{Nominal Preference and Annotator Groups}
\label{app:imdb_kernel}

We use the IMDb review corpus~\citep{maas-EtAl:2011:ACL-HLT2011} as a source of
continuation prompts.
Each prompt is the first sentence of a review; the task is to continue it.
We sample $N=8{,}000$ prompts and generate $m=8$ candidate continuations per prompt
using \texttt{google/flan-t5-large}~\citep{flan} with diverse instruction templates
and sampling temperatures.
The full set of prompt-level preference kernels is pre-computed and cached so that
NashMD and all robust variants are trained and evaluated on identical finite action sets.

To model heterogeneous human preferences we construct three annotator groups:
\textbf{Group~A} (comprehensive/analytical) prefers longer, more detailed continuations with
stronger evidence of analysis;
\textbf{Group~B} (efficient/concise) prefers short, direct, and decisive continuations; and
\textbf{Group~C} (balanced/nuanced) prefers responses that acknowledge both strengths and
weaknesses.
For prompt $x$ and group $g\in\{A,B,C\}$, each candidate $i$ receives a scalar score
$s_i^{(g)}$.
These scores are converted into a group-specific Bradley--Terry pairwise preference kernel:
\begin{equation}
  P_{ij}^{(g)} = \sigma\!\bigl(\beta\,(s_i^{(g)} - s_j^{(g)})\bigr),\quad i < j,
  \label{eq:imdb_bt}
\end{equation}
with $P_{ji}^{(g)}=1-P_{ij}^{(g)}$ and $P_{ii}^{(g)}=\tfrac{1}{2}$.
Writing $p^{(g)}\in[0,1]^d$, $d=\binom{m}{2}=28$, for the upper-triangular vector, the
nominal preference kernel is
$p^\star = \sum_{g\in\{A,B,C\}} \alpha_g\, p^{(g)}$
with fixed mixture weights $\alpha$.
Perturbing the mixture weights away from $\alpha$ via a Dirichlet draw produces
\emph{group-shift} kernels that lie in the convex hull of the group kernels and serve
as a model of structured population-level preference shift (Section~\ref{app:imdb_eval}).

\subsection{Optimization Algorithms}
\label{app:imdb_optim}

\paragraph{NashMD baseline.}
The NashMD baseline solves the nominal KL-regularized game
\begin{equation}
  \max_{\pi\in\Delta^m}\min_{\nu\in\Delta^m}
  J(\pi,\nu;P^\star) = \pi^\top P^\star \nu - \tau\,\KL(\pi\|\pi_{\mathrm{ref}})
  + \tau\,\KL(\nu\|\pi_{\mathrm{ref}})
  \label{eq:imdb_nominal}
\end{equation}
with nominal kernel $P^\star=P(p^\star)$ fixed throughout training.
We use simultaneous Optimistic Gradient Descent-Ascent (OGDA) on the policy pair
$(\pi,\nu)$ with step size $\eta_{\mathrm{Nash}}=0.30$.

\paragraph{Four-player robust algorithm.}
The robust variant hedges against kernel misspecification within the Bernoulli-KL ball
$\mathcal{U}_\rho=\{p:\BKL(p\|p^\star)\le\rho\}$
by solving
\begin{equation}
  \max_{\pi\in\Delta^m}
  \min_{\substack{p:\,\BKL(p\|p^\star)\le\rho\\p\in[\delta,1-\delta]^d}}
  \min_{\nu\in\Delta^m}
  J(\pi,\nu;P(p)).
  \label{eq:imdb_robust}
\end{equation}
The inner minimization is solved approximately by alternating between the
closed-form follower best-response and the kernel update
$p_{ij}\leftarrow\sigma\!\bigl(\logit(p^\star_{ij})
  - w_{ij}(\pi,\nu)/\!\max(\gamma,\gamma_{\min})\bigr)$
for $K=8$ inner steps, then the outer $(\pi,\gamma)$ update is performed with
OMD at step sizes $(\eta_\pi,\eta_\gamma)=(0.25,0.15)$.

\subsection{Hyperparameters}
\label{app:imdb_hparams}
\begin{table}[h]
\centering
\small
\caption{Hyperparameters for the IMDb continuation experiments.}
\label{tab:imdb_hparams}
\begin{tabular}{ll}
\toprule
\textbf{Component} & \textbf{Value} \\
\midrule
Candidates per prompt $m$ & $8$ \\
Kernel dimension $d$ & $\binom{8}{2}=28$ \\
Number of prompts $N$ & $8{,}000$ \\
Generator model & \texttt{flan-t5-large} \\
KL temperature $\tau$ & $\{0.001,0.005,0.01,0.05,0.1\}$ \\
NashMD step size $\eta_{\mathrm{Nash}}$ & $0.30$ \\
Robust step sizes $(\eta_\pi,\eta_\gamma)$ & $(0.25,\,0.15)$ \\
Inner steps $K$ & $8$ \\
Training radii $\rho_{\mathrm{train}}$ & $\{0.005,\,0.01,\,0.02,\,0.05,\,0.10,\,0.15\}$ \\
Evaluation radii $\rho_{\mathrm{eval}}$ & $\{0.05,\,0.10,\,0.15,\,0.20,\,0.30,\,0.50,\,0.75,\,1.0\}$ \\
Kernel clipping $\delta$ & $10^{-3}$ \\
Dual range $[\gamma_{\min},\gamma_{\max}]$ & $[0.02,\,500]$ \\

\bottomrule
\end{tabular}
\end{table}

\subsection{Results}
\label{app:imdb_eval}

Given the robust policy $\pi_R$ and the NashMD baseline $\pi_N$, we measure the
win rate using~\ref{eq:winrate-metric}.
The sampled kernels for evaluation are (i) the KL-ball boundary $\mathcal{U}_{\rho_{\mathrm{eval}}}$ sampled
uniformly at random, used for Table~\ref{tab:imdb-winrate}, and (ii) a Dirichlet-perturbed annotator mixture
$\tilde{p}=\sum_g\tilde{\alpha}_g p^{(g)}$,
$\tilde{\alpha}\sim\mathrm{Dirichlet}(\kappa\alpha)$, projected to the KL-ball
boundary at radius $\rho_{\mathrm{eval}}$, used in Table~\ref{tab:imdb_group_probwin}. Annotator group-shift kernels remain in the low-dimensional mixture subspace, making
them a more structured perturbation family.

\begin{table}[!ht]
\label{tab:imdb_probwin}
\centering
\caption{Win rate against NashMD under random kernel perturbations, averaged over $N = 8000$ IMDb prompts with $m = 8$ candidates.
Each entry reports $\mathrm{WinRate}(\pi_R \text{ vs } \pi_N;\, \rho_{\mathrm{eval}})$ from~\eqref{eq:winrate-metric}.
Values above $0.50$ indicate the robust policy is preferred under the majority of sampled kernels.}\label{tab:imdb-winrate}
\begin{tabular}{@{}r ccccc@{}}
\toprule
& \multicolumn{5}{c}{Evaluation radius $\rho_{\mathrm{eval}}$} \\
\cmidrule(l){2-6}
$\rho_{\mathrm{train}}$ & $0.05$ & $0.10$ & $0.20$ & $0.30$ & $0.50$ \\
\midrule
0.01  & 0.986 & 0.967 & 0.927 & 0.894 & 0.844 \\
0.02  & 0.985 & 0.965 & 0.925 & 0.891 & 0.841 \\
0.05  & 0.984 & 0.961 & 0.918 & 0.883 & 0.832 \\
0.10  & 0.978 & 0.949 & 0.901 & 0.866 & 0.814 \\
0.15  & 0.934 & 0.894 & 0.844 & 0.810 & 0.766 \\
\bottomrule
\end{tabular}
\end{table}

\begin{table}[h]
\centering
\small
\caption{%
  $\mathrm{WinRate}(\pi_R \text{ vs } \pi_N;\, \rho_{\mathrm{eval}})$ under group-shift kernels at $\tau=0.01$,
  averaged over $8{,}000$ prompts.
  Bold marks the highest value in each column.%
}
\label{tab:imdb_group_probwin}
\begin{tabular}{@{}r|rrrrr@{}}
\toprule
$\rho_{\mathrm{train}}$ & $\rho_{\mathrm{eval}}=0.05$ & $0.10$ & $0.20$ & $0.30$ & $0.50$ \\
\midrule
0.005 & \textbf{0.974} & \textbf{0.945} & \textbf{0.897} & \textbf{0.863} & \textbf{0.818} \\
0.01  & 0.974 & 0.944 & 0.897 & 0.862 & 0.818 \\
0.02  & 0.973 & 0.943 & 0.896 & 0.862 & 0.818 \\
0.05  & 0.970 & 0.939 & 0.893 & 0.859 & 0.816 \\
0.10  & 0.962 & 0.928 & 0.884 & 0.851 & 0.811 \\
0.15  & 0.917 & 0.882 & 0.840 & 0.811 & 0.775 \\
\bottomrule
\end{tabular}
\end{table}

Table~\ref{tab:imdb_tau_probwin} reports $\mathrm{WinRate}(\pi_R \text{ vs } \pi_N;\, \rho_{\mathrm{eval}})$ at the best
training radius for each KL-regularization temperature
$\tau\in\{0.001,0.005,0.01,0.05,0.1\}$.
The strongest overall regime is $\tau=0.05$: the robust policy wins with probability
above $97\%$ for $\rho_{\mathrm{eval}}\le 0.30$.
At very small $\tau$ the NashMD baseline is already highly concentrated, reducing the
marginal gain from robust training; at large $\tau$ both policies approach the uniform
distribution and the adversary has limited room to exploit kernel perturbations.

\begin{table}[h]
\centering
\small
\caption{%
  $\mathrm{WinRate}(\pi_R \text{ vs } \pi_N;\, \rho_{\mathrm{eval}})$ at the best $\rho_{\mathrm{train}}$ for each $\tau$
  (KL-random kernels).%
}
\label{tab:imdb_tau_probwin}
\begin{tabular}{@{}rr|rrrrrr@{}}
\toprule
$\tau$ & best $\rho_{\mathrm{tr}}$ & $\rho_{\mathrm{eval}}=0.05$ & $0.10$ & $0.20$ & $0.30$ & $0.50$ & $1.0$ \\
\midrule
0.001 & 0.005 & 0.972 & 0.944 & 0.896 & 0.859 & 0.810 & 0.735 \\
0.005 & 0.005 & 0.980 & 0.956 & 0.911 & 0.877 & 0.826 & 0.750 \\
0.01  & 0.005 & 0.986 & 0.967 & 0.929 & 0.896 & 0.847 & 0.768 \\
0.05  & 0.10  & \textbf{0.996} & \textbf{0.995} & \textbf{0.988} & \textbf{0.977} & \textbf{0.948} & \textbf{0.879} \\
0.1   & 0.10  & 0.945 & 0.931 & 0.903 & 0.880 & 0.841 & 0.775 \\
\bottomrule
\end{tabular}
\end{table}

Figure~\ref{fig:imdb_probwin_tau05} shows the full win-rate curves at $\tau=0.05$.
Training radii $\rho_{\mathrm{train}}\le 0.10$ are statistically indistinguishable from one another at
every evaluation radius (their pairwise differences never exceed the 95\% confidence half-width), while
$\rho_{\mathrm{train}}=0.15$ is significantly worse, with the gap widening as $\rho_{\mathrm{eval}}$ grows.

\begin{figure}[ht]
\centering
\par
\includegraphics[width=\linewidth]{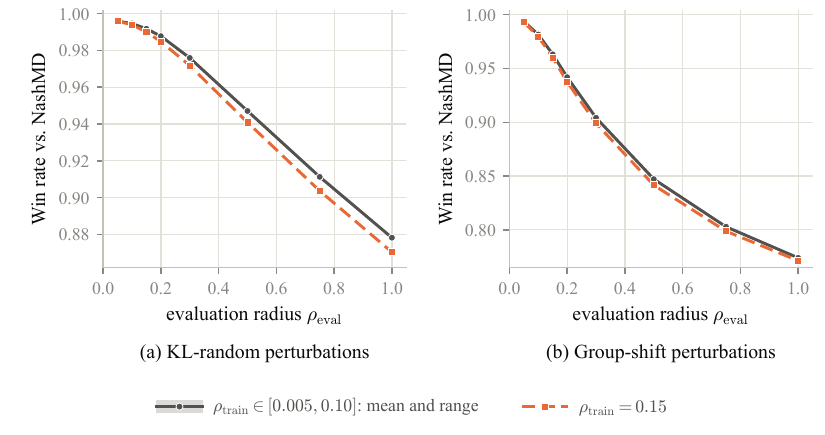}
\caption{%
  $\mathrm{WinRate}(\pi_R \text{ vs } \pi_N;\, \rho_{\mathrm{eval}})$ at $\tau=0.05$, for each training radius
  $\rho_{\mathrm{train}}\in\{0.005,0.01,0.02,0.05,0.10,0.15\}$. Radii $\rho_{\mathrm{train}}\in[0.005,0.10]$
  (gray band and mean) are statistically tied at every evaluation radius; $\rho_{\mathrm{train}}=0.15$ (orange)
  is significantly worse, and increasingly so as $\rho_{\mathrm{eval}}$ grows.%
}
\label{fig:imdb_probwin_tau05}
\end{figure}

Table~\ref{tab:imdb_tau_rho_detail} gives the full $(\tau,\rho_{\mathrm{train}})$
interaction at $\rho_{\mathrm{eval}}=0.30$.
For $\tau\le 0.01$ the smallest training radius is uniformly best.
For $\tau\ge 0.05$ the optimum shifts to $\rho_{\mathrm{train}}=0.10$, though the
improvement over smaller radii is modest.

\begin{table}[h]
\centering
\small
\caption{%
  $\mathrm{WinRate}(\pi_R \text{ vs } \pi_N;\, \rho_{\mathrm{eval}})$ at $\rho_{\mathrm{eval}}=0.30$,
  by $(\tau,\rho_{\mathrm{train}})$ (KL-random kernels).
  Bold marks the row-wise maximum.%
}
\label{tab:imdb_tau_rho_detail}
\begin{tabular}{@{}r|rrrrrr@{}}
\toprule
$\tau \;\backslash\; \rho_{\mathrm{tr}}$
  & 0.005 & 0.01 & 0.02 & 0.05 & 0.10 & 0.15 \\
\midrule
0.001 & \textbf{0.860} & 0.857 & 0.850 & 0.832 & 0.813 & 0.780 \\
0.005 & \textbf{0.877} & 0.874 & 0.870 & 0.857 & 0.831 & 0.784 \\
0.01  & \textbf{0.896} & 0.895 & 0.893 & 0.883 & 0.866 & 0.810 \\
0.05  & 0.976 & 0.976 & 0.976 & 0.976 & \textbf{0.977} & 0.972 \\
0.1   & 0.859 & 0.860 & 0.862 & 0.868 & \textbf{0.880} & 0.717 \\
\bottomrule
\end{tabular}
\end{table}

%% file: Experiments_llm_appendix.tex
\section{TL;DR Summarization Task: Experiment Details}
\label{app:llm_details}
\subsection{From Algorithm~\ref{alg:four-player-ogda} to the sampled implementation}
\label{app:llm_bridge}
For a language model the objects of Algorithm~\ref{alg:four-player-ogda}, sums over $m$ responses, simplex mirror steps and the exact kernel tilt, are not available, because a response is a token sequence. We follow the route of Munos et al.~\cite{munos2023nash} (their Appendix~F): sample a pair instead of summing, replace each proximal step by one stochastic policy-gradient step, and keep the reference KL inside the reward as a log-ratio, which is the score-function form of $\nabla_\theta\mathrm{KL}(\pi_\theta\|\piref)$ because $\mathbb E\,\nabla_\theta\log\pi_\theta=0$. Their opponent, a token-level geometric mixture of the policy and the reference, is replaced here by the trained follower $\nu_{\theta'}$: at the nominal equilibrium the follower's best response is the leader itself, so a mixture of leader and reference is a good stand-in, whereas under a perturbed kernel the best response is a different policy. We write $\pi_\theta,\nu_{\theta'}$ for the leader and follower that Appendices~\ref{app:llm_adversary}--\ref{app:llm_losses} call $\pi_{\rm L},\pi_{\rm F}$, and $\dBern(a\|b)=a\log\frac ab+(1-a)\log\frac{1-a}{1-b}$ for the Bernoulli KL of a single comparison.

\begin{algorithm}[t]
\caption{Sampled four-player step}
\label{alg:sampled}
\begin{algorithmic}[1]
\Require leader $\pi_\theta$, follower $\nu_{\theta'}$ (separate LoRA adapters, both initialized at $\piref$); head $g_\psi$ (zero-initialized output layer); multiplier $\gamma\ge0$; baselines $b_L,b_F$; $\tau$, $\rho$, $\gamma_{\max}$, step sizes $\eta_\theta,\eta_{\theta'},\eta_\gamma$, batch size $B$, head steps $K$, clip level $c$, EMA weight $\lambda$
\For{each iteration}
  \State draw prompts $x_1,\dots,x_B$; sample $y_L^i\sim\pi_\theta(\cdot\mid x_i)$ and $y_F^i\sim\nu_{\theta'}(\cdot\mid x_i)$
  \State $q_i\gets P^\star(y_L^i\succ y_F^i\mid x_i)$ (BT judge or PairRM); \; $\kappa_L^i\gets\clip\!\big(\log\tfrac{\pi_\theta(y_L^i\mid x_i)}{\piref(y_L^i\mid x_i)},\pm c\big)$, \; $\kappa_F^i$ likewise
  \State $p_i\gets p_i(\psi)=\sigma\big(\logit q_i+\delta_\psi(y_L^i,y_F^i)\big)$ \Comment{head as at the start of the step}
  \State $R^L_i\gets p_i-\tau\kappa_L^i$, \; $R^F_i\gets-p_i-\tau\kappa_F^i$ \Comment{$p_i$ held constant}
  \State $b_L\gets\lambda b_L+(1-\lambda)\tfrac1B\sum_iR_i^L$, \; $b_F\gets\lambda b_F+(1-\lambda)\tfrac1B\sum_iR_i^F$
  \State $\hat g_\theta\gets\tfrac1B\sum_i(R^L_i-b_L)\,\nabla_\theta\log\pi_\theta(y_L^i\mid x_i)$, \; $\hat g_{\theta'}\gets\tfrac1B\sum_i(R^F_i-b_F)\,\nabla_{\theta'}\log\nu_{\theta'}(y_F^i\mid x_i)$
  \For{$k=1,\dots,K$} \Comment{same batch, no new samples}
    \State gradient step on $\psi$ (descent) for $\mathcal L_\psi=\tfrac1B\sum_ip_i(\psi)+\gamma\big(\tfrac1B\sum_i\dBern(p_i(\psi)\|q_i)-\rho\big)$
  \EndFor
  \State $\gamma\gets\clip\big(\gamma+\eta_\gamma(\tfrac1B\sum_i\dBern(p_i(\psi)\|q_i)-\rho),\,0,\,\gamma_{\max}\big)$ with $\psi$ after the head steps
  \State $\theta\gets\theta+\eta_\theta\hat g_\theta$, \; $\theta'\gets\theta'+\eta_{\theta'}\hat g_{\theta'}$ \Comment{Adam, gradient-norm clipping}
\EndFor
\end{algorithmic}
\end{algorithm}

\textbf{What the sampled method approximates.}
\emph{(i) Noise:} each update uses one comparison per prompt and a small batch, so it is a noisy estimate of the corresponding block gradient of the tabular game. The mirror steps of Algorithm~\ref{alg:four-player-ogda} move the logits additively, whereas a gradient step on the adapter parameters weights each response by its probability.
\emph{(ii) Baselines:} the leader and the follower subtract exponential-moving-average baselines, which need no value function.
\emph{(iii) KL proxy:} $\kappa$ is a clipped sequence-level log-ratio, which biases the KL term whenever the clip is active.
\emph{(iv) Optimization:} a few gradient steps of the head stand in for the exact kernel tilt; there is no optimism and no averaging; $\gamma$ is clipped to $[0,\gamma_{\max}]$, so Assumption~\ref{ass:geometry}-(3) and (4) need not hold.
\emph{(v) Start:} both adapters start at $\piref$, so the two policies coincide at the start and, in expectation, the kernel player has no incentive to move until they separate; early training behaves like the nominal algorithm.

\subsection{Fine-Tuning Models.}
\label{app:llm_finetune}
We initialize the models from \texttt{Qwen2.5-1.5B-Instruct} \cite{qwen2} and \texttt{Gemma-2-2B} \cite{gemmateam2024} as both the initial and the reference model across the respective experiments and fine-tune with LoRA~\cite{lora2021} for 5000 gradient steps with KL regularization $\tau = 0.005$. The four-player game formulation was compared with adversarial budget $\rho \in \{0.01, 0.05\}$. The leader and follower policies carry separate LoRA adapters attached to the same frozen backbone.

\subsection{Nominal Preference Proxy}
\label{app:llm_nominal}
We train a Bradley-Terry(BT) preference judge model to serve as the nominal preference proxy. The judge is a scalar score model $s_{\phi}:(x,y) \to \mathbb{R}$ built on \texttt{DistilBERT-base-uncase} \cite{Sanh2019DistilBERTAD} backbone with a linear scalar head, and fine-tuned with the standard BT cross-entropy loss on the TL;DR dataset:

\begin{equation}
 \mathcal{L}_{\mathrm{BT}}(\phi) = -\mathbb{E}_{(x, y_w, y_l)} \left[ \log \sigma \big( s_{\phi}(x, y_w) - s_{\phi}(x, y_l) \big) \right],
\end{equation}

where $(y_w, y_l)$ denotes the preferred and dispreferred responses respectively. Training uses 83,573 comparison pairs. After 3 epochs the judge achieves 72\% validation accuracy. During training with both NashMD and Four-Player the pairwise preference is calculated as:

\begin{equation}
  P^{*}(y_a \succ y_b \mid x) = \sigma\big(s_{\phi}(x, y_a) - s_{\phi}(x, y_b)\big)
\end{equation}

\subsection{Adversary Proxy}
\label{app:llm_adversary}
The kernel-adversary in the four-player game is parameterized by an \emph{antisymmetric} perturbation head
$\delta_\psi(y_a, y_b)$ of the nominal preference logit.
The head computes
\begin{equation}
  \delta_\psi(y_a, y_b)
  \;=\; g_\psi(u_{ab}) - g_\psi(u_{ba}),
  \qquad
  g_\psi : \mathbb{R}^7 \to \mathbb{R},
  \label{eq:antisym}
\end{equation}
where $g_\psi$ is a two-layer MLP:
$\mathbb{R}^7 \xrightarrow{\;W_1\;} \mathbb{R}^{64}
 \xrightarrow{\;\tanh\;} \mathbb{R}^{64}
 \xrightarrow{\;W_2\;} \mathbb{R}$,
with the output layer $W_2$ initialized to zero so that
$\delta_\psi(y_a, y_b) = 0$ (no perturbation) at initialization. The algorithm therefore initializes at the nominal preference and gradually grows the adversarial perturbation.
Antisymmetry in \eqref{eq:antisym} ensures $\delta_\psi(y_a, y_b) = -\delta_\psi(y_b, y_a)$
and hence $P_\psi(y_a \succ y_b) + P_\psi(y_b \succ y_a) = 1$ at all iterates.

The perturbed preference is
\begin{equation}
  P_\psi(y_a \succ y_b \mid x)
  \;=\; \sigmoid\!\bigl(\logit\Pstar(y_a \succ y_b \mid x)
        + \delta_\psi(y_a, y_b)\bigr).
  \label{eq:perturbed_pref}
\end{equation}

The Bernoulli-KL ambiguity constraint $\mathbb{E}\bigl[D_{\mathrm{KL}}^{\mathrm{Bern}}(p_\psi\,\|\,p^\star)\bigr]\le \rho$ is enforced softly through the dual variable $\gamma$ in the Lagrangian payoff. We additionally clip the predicted logit shift before applying the sigmoid in~\ref{eq:perturbed_pref} to prevent $P_\psi$ from saturating to $\{0,1\}$ to stabilize the dual ascent.

\subsection{Losses.}
\label{app:llm_losses}
For a mini-batch with samples $y_L \sim \piL$ and $y_F \sim \piF$, we define
$p_\psi = P_\psi(y_L \succ y_F \mid x)$ and KL proxies
$\kappa_L = \clamp(\log\piL(y_L)/\piref(y_L), -c, c)$,
$\kappa_F = \clamp(\log\piF(y_F)/\piref(y_F), -c, c)$. The three player losses are
\begin{align}
  \mathcal{L}_\psi &=
    \E\bigl[p_\psi\bigr]
    + \gamma\,\bigl(\E[\Dbern(p_\psi \,\|\, \Pstar)] - \rho\bigr),
    \label{eq:adv_loss}
  \\[4pt]
  \mathcal{L}_\pi &=
    -\E\bigl[(p_\psi - \tau\kappa_L - b_L)\,\log\piL(y_L)\bigr],
    \label{eq:leader_loss}
  \\[4pt]
  \mathcal{L}_\nu &=
    -\E\bigl[(-p_\psi - \tau\kappa_F - b_F)\,\log\piF(y_F)\bigr],
    \label{eq:follower_loss}
\end{align}
where $b_L$, $b_F$ are EMA baselines for variance reduction, and the adversary $\psi$ \emph{minimizes}
\eqref{eq:adv_loss} (gradient descent) while the dual variable is updated by clipped projected gradient ascent:
\begin{equation}
  \gamma \;\leftarrow\; \clamp\!\Bigl(
    \gamma + \eta_\gamma\,\bigl(\E[\Dbern] - \rho\bigr),\;
    0,\; \gamma_{\max}
  \Bigr).
  \label{eq:dual_update}
\end{equation}

Each outer step interleaves $K$ adversary updates per $(\pi_L,\pi_F,\gamma)$ update. We default to $K=3$ as we obseved $K=1$ is too weak (the constraint is far from active) while $K \geq3$ tracks the constraint budget closely.

\subsection{Evaluation with LLM Judges}
\label{app:llm_judges_eval}
We employ three comparatively larger open-source models---DeepSeek-R1-Distill-Qwen-32B~\citep{deepseek2025}, Gemma-4-31B~\citep{gemmateam2024}, and Nemotron-3-Nano-Omni~\citep{nvidia2026nemotron3nanoomni}---to obtain preference signals querying with the following prompt template:

\begin{quote}
\small\texttt{%
You are an expert judge evaluating the quality of two generated summaries.\\
Post to summarize: ⟨text⟩\\
Summary A: ⟨summary1⟩\\
Summary B: ⟨summary2⟩\\
Evaluate which summary is better based on: (1) Accuracy, (2) Coherence, (3) Conciseness, (4) Helpfulness.\\
}
\end{quote}

where $\langle\text{text}\rangle$ corresponds to the Reddit post $x$, $\langle\text{summary1}\rangle$ to the policy response $y \sim \pi(\cdot \mid x)$, and $\langle\text{summary2}\rangle$ to the baseline response $y' \sim \pi_{\text{base}}(\cdot \mid x)$. We estimate the preference probability $P^{\text{ext}}_j(\pi \succ \pi_{\text{base}})$ as the fraction of comparisons in which the policy output is preferred over $1000$ prompts from the TL;DR dataset. Following Munos et al.~\cite{munos2023nash}, each comparison is treated as a Bernoulli trial and uncertainty is quantified using Clopper–Pearson 95\% confidence intervals; for $n=1000$, the worst-case half-width is approximately $\pm 0.032$. Four player policies preserve strong nominal performance against the NashMD baseline. Training against the preference uncertainty has not collapsed into overly conservative solutions, rather, the robust policies still improve the nominal pairwise objective with the pairwise judges across evaluation criteria.

\subsection{Non-BT Nominal Kernel: Construction and Diagnostic}
\label{app:llm_nonbt}
The TL;DR experiment in Section~\ref{sec:exp-tldr} uses a non-Bradley--Terry (non-BT)
nominal kernel rather than the standard, reproducible Bradley--Terry (BT) construction
(BT judge, Appendix~\ref{app:llm_nominal})~\citep{stiennon2020learning}, which is antisymmetric
by design: $P^\star(y_a\succ y_b\mid x)=\sigma\big(s_\phi(x,y_a)-s_\phi(x,y_b)\big)$.
The ambiguity set $\mathcal{P}$ in Eq.~\eqref{eq:amb set} is a Bernoulli-KL ball over the
$d=\binom{m}{2}$ pairwise probabilities themselves (imposed in expectation over
sampled pairs in TL;DR, Appendix~\ref{app:llm_losses}), with no constraint of the form
$\mathrm{logit}(p_{ij})=s_i-s_j$: even from a BT center, the worst-case
kernel found by the adversary may already be cyclic or otherwise not representable by
any scalar reward. We go further and construct a nominal kernel that is itself non-BT,
described next, together with a diagnostic confirming it is measurably so.

\textbf{Non-BT nominal kernel.} We replace the BT judge with a directly
parameterized pairwise kernel,
\[
P^{\mathrm{pair}}_\phi(y_a\succ y_b\mid x)=\sigma\big(L^{\mathrm{pair}}(x,y_a,y_b)\big),
\qquad
L^{\mathrm{pair}}(x,y_a,y_b)=\tfrac{c}{2}\big[q_\phi(x,y_a,y_b)-q_\phi(x,y_b,y_a)\big],
\]
where $q_\phi$ is a single joint forward pass over the post and both candidate
summaries, instantiated with PairRM~\citep{jiang2023llmblender} (a 0.4B
DeBERTa-v3-large cross-encoder). Each
candidate's representation is conditioned on its opponent, so its score is not a
function of $(x,y)$ alone and the kernel is not constrained to be BT. Because
$q_\phi$ is order-dependent, the difference of the two orderings is what gives
$L^{\mathrm{pair}}(a,b)=-L^{\mathrm{pair}}(b,a)$, hence
$P^{\mathrm{pair}}(a\succ b)+P^{\mathrm{pair}}(b\succ a)=1$ exactly. Here $c>0$ is a scalar temperature fit by maximum likelihood on TL;DR human comparisons.
The policy is
Qwen2.5-1.5B-Instruct. The adversary head, Bernoulli-KL constraint, dual update and
NashMD baseline are as in Appendix~\ref{app:llm_adversary}--\ref{app:llm_losses}; the
only change is that three of the adversary's seven input features, which used
per-response rewards, now carry the pairwise logit.

\textbf{The kernel is not BT.} Let
$L_{ij}=\mathrm{logit}\,P^{\mathrm{pair}}_\phi(y_i\succ y_j\mid x)$. A BT kernel has
$L_{ij}=s_i-s_j$, so for every triple $C:=L_{ij}+L_{jk}+L_{ki}=0$: the scores
telescope away, for any reward model. The least-squares BT fit to a triple removes
$C/3$ from each edge, so $|C|/3$ is the per-edge distance to the nearest BT kernel.
On 1{,}000 triples of real model outputs (per prompt, one summary from each of
NashMD-PG and the two robust policies trained under this kernel), the mean best-BT
projection error is $0.384$ logits per edge ($3.10$ percentage points in preference
probability), the median $|C|$ is $0.96$, and $1.4\%$ of triples are outright
cyclic. Edges are
confident ($|L|\approx3$ logits on average), so a $0.38$-logit residual rarely flips
a sign: cycles are rare, yet the departure from BT is present in nearly every
triple. The nominal kernel used in training is therefore measurably non-BT.

\subsection{Additional Results with a Bradley-Terry Nominal Kernel}
\label{app:llm_bt_extra}
For completeness, we also report results under the standard Bradley--Terry (BT)
nominal kernel (Appendix~\ref{app:llm_nominal}) that Section~\ref{sec:exp-tldr}'s
non-BT experiment departs from. Table~\ref{tab:tldr_bt_extra} reports the win rate
of the robust policy against NashMD on the same protocol as
Table~\ref{tab:tldr-external} (three external judges, $1000$ TL;DR prompts,
Clopper--Pearson half-width $\approx\pm0.032$ at $n=1000$), and additionally covers
a second fine-tuning backbone, Gemma-2B, alongside Qwen-1.5B.

\begin{table}[!htbp]
\centering
\caption{Win rate against NashMD under external LLM judges, evaluated on $1000$ TL;DR prompts, with a Bradley-Terry nominal kernel (cf.\ Table~\ref{tab:tldr-external}, which uses a non-BT kernel).}\label{tab:tldr_bt_extra}
\renewcommand{\arraystretch}{1.2}
\begin{tabular}{ccccc}
\toprule
\multirow{2}{*}[-0.5ex]{\textbf{Fine-Tuning Model}} & \multirow{2}{*}[-0.5ex]{$\rho_{\text{train}}$} & \multicolumn{3}{c}{\textbf{Win Rate under Evaluation Model}} \\
\cmidrule(lr){3-5}
& & DeepSeek-R1 & Gemma-4 & Nemotron-3 \\
\midrule
\multirow{2}{*}{Qwen-1.5B} & 0.01 & 0.496 & \textbf{0.506} & \textbf{0.764} \\
                           & 0.05 & \textbf{0.574} & \textbf{0.593} & \textbf{0.568} \\
\midrule
\multirow{2}{*}{Gemma-2B}  & 0.01 & \textbf{0.542} & \textbf{0.518} & 0.462 \\
                           & 0.05 & \textbf{0.640} & \textbf{0.720} & \textbf{0.583} \\
\bottomrule
\end{tabular}
\end{table}

\newpage
\section{LLM Judge Reasoning Examples}
\label{app:judge_examples}

\definecolor{chartblue}{HTML}{2a78d6}    %
\definecolor{chartorange}{HTML}{eb6834}  %
\definecolor{chartaqua}{HTML}{1baf7a}    %
\definecolor{chartyellow}{HTML}{c98500}  %
\definecolor{chartviolet}{HTML}{4a3aa7}  %

\subsection*{Example 1 \quad (\texttt{prompt\_id}~57)}
\label{app:judge_ex_57}

\noindent\textit{Disagreement:} Nemotron-3-Nano $\to$ \textbf{A};\enspace
Gemma-4-31B $\to$ \textbf{B};\enspace
DeepSeek-R1-32B $\to$ \textbf{B}

\begin{tcolorbox}[colback=black!3, colframe=black!35,
  title={\small Post (r/relationships)}, fonttitle=\bfseries\small]
\small
\textit{Title:} Me [20m] just got out of an 8 month relationship with my gf [19F].
I am very emotional, she wasn't.

\smallskip
I am far from the stereotypical macho guy. I love romance~[\ldots]
She never speaks about her feelings, never told me she loved me without me having
to say it first~[\ldots]
My question aimed at both genders: Are girls generally emotional, do you ladies
like to speak about emotions?
Do you generally like when the guy is cute with you and calls you babe, and says
he loves you?~[\ldots]
\end{tcolorbox}

\begin{tcolorbox}[colback=chartblue!6, colframe=chartblue!55, coltitle=chartblue!80!black,
  title={\small Summary A (NashMD-PG)}, fonttitle=\bfseries\small]
\small
OP is seeking advice on whether girls generally like to be romantic and express
emotions, and if they prefer a partner who is affectionate and expresses love.
\end{tcolorbox}

\begin{tcolorbox}[colback=chartorange!6, colframe=chartorange!55, coltitle=chartorange!70!black,
  title={\small Summary B (Four-Player $\rho{=}0.05$)}, fonttitle=\bfseries\small]
\small
The OP is seeking to understand if girls generally express emotions and appreciate
romantic displays of affection, as he is a romantic guy who enjoys being cute and
expressing his love.
He's questioning if this is something most girls find appealing.
\end{tcolorbox}

\begin{tcolorbox}[colback=chartaqua!6, colframe=chartaqua!55, coltitle=chartaqua!70!black,
  title={\small Nemotron-3-Nano \textnormal{(verdict: \textbf{A}, 14.4\,s)}},
  fonttitle=\bfseries\small]
\small
\textit{[Conciseness]} A is more concise (fewer words).
B includes extra info about his own traits---possible unnecessary padding.
So A wins on conciseness. [\ldots]

\textit{[Helpfulness]} A directly states the request for advice.
B adds context that may help reader understand OP's perspective, but the key
point---seeking advice about girls' preferences---is stated more succinctly in
A. [\ldots]

Overall, Summary A seems better due to higher conciseness and directness while
still accurate.
\end{tcolorbox}

\begin{tcolorbox}[colback=chartyellow!6, colframe=chartyellow!55, coltitle=chartyellow!85!black,
  title={\small Gemma-4-31B \textnormal{(verdict: \textbf{B}, 67.3\,s)}},
  fonttitle=\bfseries\small]
\small
\textit{[Helpfulness]} Summary B captures the ``why'' of the post whereas A only
captures the ``what''.
The OP is not asking a generic question about gender; he is questioning his own
experience in light of his personality and recent breakup.
This makes B a more complete reflection of the source text. [\ldots]

Summary A is more concise; Summary B is more contextually complete without
becoming wordy.
The motivation behind the question is a key part of the post, so B is slightly
better overall.
\end{tcolorbox}

\begin{tcolorbox}[colback=chartviolet!6, colframe=chartviolet!55, coltitle=chartviolet!80!black,
  title={\small DeepSeek-R1-Distill-Qwen-32B \textnormal{(verdict: \textbf{B}, 58.6\,s)}},
  fonttitle=\bfseries\small]
\small
\textit{[Helpfulness]} Summary A captures the main question but lacks some
context.
Summary B provides more background, making it clearer why OP is asking this and
what he has experienced. [\ldots]

Summary B includes more details about OP's romantic nature and his girlfriend's
behaviour, which are crucial for understanding his perspective.
\end{tcolorbox}

\subsection*{Example 2 \quad (\texttt{prompt\_id}~125)}
\label{app:judge_ex_125}

\noindent\textit{Disagreement:} Nemotron-3-Nano $\to$ \textbf{A};\enspace
Gemma-4-31B $\to$ \textbf{B};\enspace
DeepSeek-R1-32B $\to$ \textbf{B}

\begin{tcolorbox}[colback=black!3, colframe=black!35,
  title={\small Post (r/dating\_advice)}, fonttitle=\bfseries\small]
\small
\textit{Title:} I (21,M) need help getting a girl (22) who used to like me
interested again.

\smallskip
I recently started messaging a girl I knew from high school and it seemed to be
going somewhere, but a couple of bad decisions and me not being honest enough in
my replies have led me into a rut.
She seems to have lost interest and only replies when I message first.
How do I get from that to a position where I can ask her out?~[\ldots]
\end{tcolorbox}

\begin{tcolorbox}[colback=chartblue!6, colframe=chartblue!55, coltitle=chartblue!80!black,
  title={\small Summary A (NashMD-PG)}, fonttitle=\bfseries\small]
\small
OP (21M) is trying to rekindle a past connection with a 22F he knows from high
school.
He's lost some mojo and is unsure how to move from a passive messaging pattern to
asking her out.
\end{tcolorbox}

\begin{tcolorbox}[colback=chartorange!6, colframe=chartorange!55, coltitle=chartorange!70!black,
  title={\small Summary B (Four-Player $\rho{=}0.05$)}, fonttitle=\bfseries\small]
\small
A 21-year-old man is trying to rekindle a connection with a 22-year-old woman he
knew from high school, but after some miscommunications, she's lost interest.
He's seeking advice on how to get back in her good graces and ask her out.
\end{tcolorbox}

\begin{tcolorbox}[colback=chartaqua!6, colframe=chartaqua!55, coltitle=chartaqua!70!black,
  title={\small Nemotron-3-Nano \textnormal{(verdict: \textbf{A}, 16.9\,s)}},
  fonttitle=\bfseries\small]
\small
\textit{[Helpfulness]} Summary A mentions the ``passive messaging pattern'',
capturing the specific current dynamic (she only replies when prompted).
Summary B says ``lost interest'' but does not identify this nuance. [\ldots]

Summary A captures the specific dynamic slightly better; final verdict A.
\end{tcolorbox}

\begin{tcolorbox}[colback=chartyellow!6, colframe=chartyellow!55, coltitle=chartyellow!85!black,
  title={\small Gemma-4-31B \textnormal{(verdict: \textbf{B}, 74.8\,s)}},
  fonttitle=\bfseries\small]
\small
\textit{[Accuracy]} Summary B captures the cause-effect relationship: bad
decisions $\to$ lost interest $\to$ needs advice.
Summary A's ``lost some mojo'' reflects a side detail from the Fun Facts section
rather than the primary driver of the conflict in the main text. [\ldots]

\textit{[Helpfulness]} Summary B provides a more complete narrative arc (the
attempt, the mistake, the result, the goal), giving better context for advice.
\end{tcolorbox}

\begin{tcolorbox}[colback=chartviolet!6, colframe=chartviolet!55, coltitle=chartviolet!80!black,
  title={\small DeepSeek-R1-Distill-Qwen-32B \textnormal{(verdict: \textbf{B}, 161.7\,s)}},
  fonttitle=\bfseries\small]
\small
\textit{[Accuracy]} Summary B includes specific reasons for her losing interest
(miscommunications / bad decisions), which aligns with the post.
Summary A is vague, saying only he ``lost some mojo'', which does not capture the
cause of her disengagement. [\ldots]

\textit{[Helpfulness]} B explains \textit{why} she lost interest, providing
context that would help readers offer more targeted advice.
\end{tcolorbox}

\subsection*{Example 3 \quad (\texttt{prompt\_id}~114)}
\label{app:judge_ex_114}

\noindent\textit{Disagreement:} Nemotron-3-Nano $\to$ \textbf{B};\enspace
Gemma-4-31B $\to$ \textbf{B};\enspace
DeepSeek-R1-32B $\to$ \textbf{A}

\begin{tcolorbox}[colback=black!3, colframe=black!35,
  title={\small Post (r/relationships)}, fonttitle=\bfseries\small]
\small
\textit{Title:} [m24] Arranged a date and it was postponed. Am I wasting my time?

\smallskip
A few weeks ago I met a girl, got her number and arranged to meet up.
The day before, she asked to postpone because of an exam.
I said sure; she then said she had exams for another couple of weeks.
After they finished, I texted her on Saturday night~[\ldots] and got the response
``no but~[\ldots]''
Am I wasting my time?
\end{tcolorbox}

\begin{tcolorbox}[colback=chartblue!6, colframe=chartblue!55, coltitle=chartblue!80!black,
  title={\small Summary A (NashMD-PG)}, fonttitle=\bfseries\small]
\small
OP is confused and questioning if he's wasting time after postponing a date with
a girl who initially agreed to meet up but then said she'd be busy for a few more
weeks.
\end{tcolorbox}

\begin{tcolorbox}[colback=chartorange!6, colframe=chartorange!55, coltitle=chartorange!70!black,
  title={\small Summary B (Four-Player $\rho{=}0.05$)}, fonttitle=\bfseries\small]
\small
The OP arranged a date with a girl but she kept postponing it due to exams,
leading him to question if he wasted his time.
He's confused by her lack of clear communication and is wondering if she's not
interested.
\end{tcolorbox}

\begin{tcolorbox}[colback=chartaqua!6, colframe=chartaqua!55, coltitle=chartaqua!70!black,
  title={\small Nemotron-3-Nano \textnormal{(verdict: \textbf{B}, 12.0\,s)}},
  fonttitle=\bfseries\small]
\small
\textit{[Accuracy]} Summary A says she ``initially agreed'' and was ``busy for a
few more weeks''---neither is quite right.
She did not explicitly agree before the postponement, and the exams are now
finished.
Summary B accurately states she postponed \textit{due to exams} and captures his
uncertainty about her interest. [\ldots]

\textit{[Helpfulness]} B conveys both the postponement reason and his underlying
concern (is she interested?), which is the real question driving the post.
\end{tcolorbox}

\begin{tcolorbox}[colback=chartyellow!6, colframe=chartyellow!55, coltitle=chartyellow!85!black,
  title={\small Gemma-4-31B \textnormal{(verdict: \textbf{B}, 80.1\,s)}},
  fonttitle=\bfseries\small]
\small
\textit{[Accuracy]} Summary A says ``after postponing a date''---grammatically
ambiguous, implying OP did the postponing, whereas the girl was the one who
postponed.
Summary B correctly identifies that \textit{she} kept postponing, and also
captures his underlying concern about her interest level. [\ldots]

\textit{[Helpfulness]} B explicitly states the core of OP's dilemma: whether she
is actually interested.
Summary A misses this psychological dimension entirely.
\end{tcolorbox}

\begin{tcolorbox}[colback=chartviolet!6, colframe=chartviolet!55, coltitle=chartviolet!80!black,
  title={\small DeepSeek-R1-Distill-Qwen-32B \textnormal{(verdict: \textbf{A}, 70.8\,s)}},
  fonttitle=\bfseries\small]
\small
\textit{[Accuracy]} Summary A captures the postponement and his confusion closely.
Summary B adds that he is confused by ``a lack of clear communication'', which is
an interpretation not explicitly stated in the post---the OP is confused about
whether she is interested, not about communication clarity per se. [\ldots]

\textit{[Conciseness]} Summary A sticks strictly to the facts from the post.
Summary B introduces an extra interpretive layer which might be considered
redundant.
\end{tcolorbox}

\subsection*{Example 4 \quad (\texttt{prompt\_id}~71)}
\label{app:judge_ex_71}

\noindent\textit{Disagreement:} Nemotron-3-Nano $\to$ \textbf{B};\enspace
Gemma-4-31B $\to$ \textbf{B};\enspace
DeepSeek-R1-32B $\to$ \textbf{A}

\begin{tcolorbox}[colback=black!3, colframe=black!35,
  title={\small Post (r/relationships)}, fonttitle=\bfseries\small]
\small
\textit{Title:} I [F21] feel frustrated with my boyfriend's [M24] laziness.
Am I justified?

\smallskip
I have a well-paying part-time job and am a full-time student; he works full time
but doesn't make ends meet and I often cover his expenses.
He plays video games while I'm trying to work on important assignments.
When I asked him for help with housework he got mad, left without saying goodbye,
and said \textit{we should just break up}~[\ldots]
Am I justified in feeling frustrated?
\end{tcolorbox}

\begin{tcolorbox}[colback=chartblue!6, colframe=chartblue!55, coltitle=chartblue!80!black,
  title={\small Summary A (NashMD-PG)}, fonttitle=\bfseries\small]
\small
OP (F21) is frustrated with her boyfriend's (M24) lack of help and appreciation,
especially during her stressful school deadlines.
She feels unappreciated and \textbf{wants to break up} because he doesn't
contribute to household chores or offer support.
\end{tcolorbox}

\begin{tcolorbox}[colback=chartorange!6, colframe=chartorange!55, coltitle=chartorange!70!black,
  title={\small Summary B (Four-Player $\rho{=}0.05$)}, fonttitle=\bfseries\small]
\small
The poster is frustrated with her boyfriend's lack of effort in contributing to
household responsibilities and feeling unappreciated despite her hard work and
stress.
She feels he's only concerned with his own leisure activities and doesn't offer
support during her school stress.
\end{tcolorbox}

\begin{tcolorbox}[colback=chartaqua!6, colframe=chartaqua!55, coltitle=chartaqua!70!black,
  title={\small Nemotron-3-Nano \textnormal{(verdict: \textbf{B}, 45.7\,s)}},
  fonttitle=\bfseries\small]
\small
\textit{[Accuracy]} Summary A says she ``wants to break up''; however, it was
\textit{he} who suggested breaking up, not her.
She expressed frustration and felt underappreciated, but never stated she wants
to end the relationship.
Summary A therefore introduces a claim not directly supported by the post. [\ldots]

Summary B stays strictly within what is stated: frustration, lack of effort in
household duties, feeling unappreciated, his focus on leisure.
Accuracy is the primary criterion; B is more faithful to the post.
\end{tcolorbox}

\begin{tcolorbox}[colback=chartyellow!6, colframe=chartyellow!55, coltitle=chartyellow!85!black,
  title={\small Gemma-4-31B \textnormal{(verdict: \textbf{B}, 72.2\,s)}},
  fonttitle=\bfseries\small]
\small
\textit{[Accuracy]} Summary A contains a significant factual error.
It states that OP ``wants to break up,'' whereas the text specifies that it was
the \textit{boyfriend} who said ``we should just break up'' during the argument.
Summary B is accurate and faithfully reflects the content of the post. [\ldots]

Summary A's inaccuracy regarding who suggested the break-up makes it unreliable;
B is the clear winner on accuracy.
\end{tcolorbox}

\begin{tcolorbox}[colback=chartviolet!6, colframe=chartviolet!55, coltitle=chartviolet!80!black,
  title={\small DeepSeek-R1-Distill-Qwen-32B \textnormal{(verdict: \textbf{A}, 58.6\,s)}},
  fonttitle=\bfseries\small]
\small
\textit{[Accuracy]} Summary A mentions that she wants to break up---a specific,
emotionally significant detail from the post.
Although it was he who voiced the suggestion, the context implies she may be
considering it given her frustration; this is arguably a reasonable inference
rather than a fabrication. [\ldots]

\textit{[Helpfulness]} Including the break-up detail gives a clearer picture of
the severity of her feelings, making Summary A more useful for understanding how
serious the situation is.
\end{tcolorbox}